\keywords{categorial grammars, linear logic, Lambek calculus, diagrammatic reasoning}
\newcommand{\be}{\begin{equation}}
\newcommand{\ee}{\end{equation}}
\begin{document}
\title{Making first order linear logic\texorpdfstring{\\}{ }a generating grammar}
\author[S.~Slavnov]{Sergey Slavnov\lmcsorcid{0000-0001-8825-7318}}
\email{slavnovserge@gmail.com}

 \begin{abstract}
It is known that different categorial grammars have surface representation in a fragment of first order multiplicative linear logic ({\bf MLL1}). We show that the fragment of interest is equivalent to the recently introduced {\it extended tensor type calculus} ({\bf ETTC}).
 {\bf ETTC} is a calculus of specific typed terms, which represent tuples of strings,
 more precisely bipartite graphs decorated with strings. Types are derived from linear logic formulas, and rules
 correspond to concrete operations on these string-labeled graphs, so that they can be conveniently visualized. 
 This provides the above mentioned fragment of {\bf MLL1} that is relevant for language modeling not only with some alternative syntax and intuitive geometric representation, but also with an intrinsic deductive system, which has been absent.
 
 In this work we consider a non-trivial notationally enriched variation of the previously introduced {\bf ETTC}, which allows more concise and transparent computations. We present both a cut-free sequent calculus and a natural deduction formalism.

\end{abstract}

\maketitle

\section{Introduction}
The best known examples of  categorial  grammars are  Lambek grammars, which are based on Lambek calculus ({\bf LC}) \cite{Lambek}, i.e. noncommutative intuitionistic  linear logic (for background on linear logic see \cite{Girard_TCS},\cite{Girard}, for categorial grammars \cite{MootRetore}). These, however, have somewhat limited expressive power, and a lot of extensions/variations have been proposed, using discontinuous tuples of strings and $\lambda$-terms, commutative and non-commutative logical operations, modalities etc. Let us mention displacement grammars \cite{Morrill_Displacement}, abstract categorial grammars  ({\bf ACG}, also called $\lambda$-grammars and linear grammars) \cite{deGroote}, \cite{Muskens}, \cite{PollardMichalicek} and hybrid type logical categorial grammars \cite{KubotaLevine}.

It has been known for a while that different grammatical formalisms, such as those just mentioned, have surface representation in the familiar
{first order multiplicative intuitionistic linear logic} ({\bf MILL1}), which provides them in this way with some common ground.
 In the  seminal work \cite{MootPiazza}  it was shown that   {\bf LC} translates into {\bf MILL1} as a conservative fragment; later  similar representations were obtained for other systems \cite{Moot_extended}, \cite{Moot_inadequacy} (however, in the case of displacement calculus, the representation  is {\it not conservative}, see \cite{Moot_comparing}). This applies equally well to the {\it classical}  setting of first order multiplicative linear logic ({\bf MLL1}), which is a conservative extension of {\bf MILL1} anyway.

  The translations are based on interpreting  first order variables as denoting {\it string positions}. Basically, this induces (at least on the intuitive level) an interpretation of certain well-formed formulas and sequents as expressing relations between strings,  in particular, those encoded in formal grammars.
  Unfortunately, this intuitive  interpretation (to the author's knowledge) has never been formulated explicitly and rigorously, i.e. as a specific {\it semantics} for (a fragment of) {\bf MILL1}. Therefore the underlying structures for constructions of the above cited works,  arguably, remain somewhat unclear.

   It should be emphasised that  only a small  fragment of {\bf MILL1}, let us call it {\it linguistically well-formed},
    is actually  used in the above translations. It is this fragment that obtains in this way the above discussed ``linguistic'' interpretation in terms of relations between  strings. As for the whole {\bf MILL1}, the ``linguistic'' meaning of general formulas and sequents seems, at least, obscure.
    On the other hand, standard deductive systems  of linear logic are not {\it intrinsic} to this fragment.
  Typically, when deriving, say, the {\bf MILL1} translation of an {\bf LC} sequent, in general, we have to use at intermediate steps sequents  that have no
clear linguistic meaning, if any. It can be said that representation of formal grammars makes first order linear logic an auxiliary tool, but hardly something like a generating grammar. Derivations do not translate  step-by-step  to operations generating language elements.

(We should note, however, that the first order logic formalism has, in general, wider usage in linguistic applications. Apart from encoding string positions, first order constructions allow representing grammatical features, locality
domains, tree depth levels etc. We do not discuss the latter applications in our paper.)

The system of {\it extended tensor type calculus} ({\bf ETTC}) based on (classical) propositional {\bf MLL}, recently proposed in \cite{Slavnov_tensor} (elaborating on \cite{Slavnov_cowordisms}), was directly designed for linguistic interpretation, namely for concrete geometric representation of {\bf ACG} extended with constructions emulating noncommutative operations of {\bf LC}.
 {\bf ETTC} is a calculus of specific typed terms, which represent tuples of strings,
 more precisely bipartite graphs decorated with strings. Types are derived from {\bf MLL} formulas, and rules
 correspond to concrete operations on these string-labeled graphs, so that they can be conveniently visualized. Typed terms become language elements, from which words are gradually assembled using these geometric operations.

It was shown in \cite{Slavnov_tensor} that {\it tensor grammars} based on {\bf ETTC} include both Lambek grammars and {\bf ACG} as conservative fragments. Moreover, the representation of {\bf LC} types in {\bf ETTC} is very similar to that in {\bf MILL1}. This led to the question (raised explicitly in that paper) about relations between the two systems.
  In this work we show that {\bf ETTC} is equivalent precisely to the above discussed linguistically well-formed fragment of  {\bf MLL1}. (Actually, we consider a non-trivial notationally enriched variation of {\bf ETTC} rather than the one proposed in \cite{Slavnov_tensor}. The enriched system allows more concise and transparent computations.)
Thus, it turns out  that {\bf ETTC} provides the linguistic fragment  with an alternative syntax, together with an intrinsic deductive system and intuitive pictorial representation. It can be said that, in some sense, {\bf ETTC} does ``make the linguistic fragment a generating grammar'', as announced in the title. Speaking less ambitiously, it allows a different perspective on the {\bf MILL1} representations, which might be clarifying and useful.

  In this paper we will work on the basis of classical linear logic, mostly because it suggests
  some
  nice symmetric notation, but all constructions automatically restrict to the intuitionistic setting.

\section{Background: systems of linear logic}
\begin{figure}[htb]
\centering
\subfloat[{\bf{MLL1} language}
\label{MLL1 formulas}
]
{
$
\begin{array}{c}
\mathit{Pred}_-=\{\overline{p}|~p\in \mathit{Pred}_+\},\quad p\mbox{ is }n\mbox{-ary}\Rightarrow{\overline{p}}\mbox{ is }n\mbox{-ary},\quad \mathit{Pred}=\mathit{Pred}_+\cup \mathit{Pred}_-,\\[.05cm]
\mathit{At}=\{p(e_1,\ldots,e_n)|~e_1,\ldots,e_n\in \mathit{Var},~p\in \mathit{Pred},~p\mbox{ is }n\mbox{-ary}\},\\[.05cm]
\mathit{Fm}::=\mathit{At}|(\mathit{Fm}\otimes \mathit{Fm})|(\mathit{Fm}\parr \mathit{Fm})|\forall x\mathit{Fm}|\exists x\mathit{Fm},~
x\in \mathit{Var}\\[.15cm]
\overline{p(e_1,\ldots,e_n)}=\overline{p}(e_n,\ldots,e_1),\quad
\overline{\overline{p(e_1,\ldots,e_n)}}=p(e_n,\ldots,e_1)\mbox{ for }P\in Pred_+,\\[.1cm]
\overline{A\otimes B}=\overline{B}\parr \overline{A},\quad \overline{A\parr B}=\overline{B}\otimes \overline{A},\quad
\overline{\forall x A}=\exists x(\overline{A}),\quad\overline{\exists x A}=\forall x(\overline{A}).
\end{array}$
}\\[.4cm]
\subfloat[{{\bf MLL1} sequent calculus}
\label{MLL1}
]
    {
    $
        \begin{array}{c}
            \cfrac{A\in At}{\vdash \overline{A},A}~(\mbox{Id})
            \quad
            \cfrac{\vdash \Gamma,A\quad\vdash
            \overline{A},\Theta}{\vdash\Gamma,\Theta} ~(\mbox{Cut})
                        \\[.25cm]
            \cfrac{\vdash \Gamma,A,B}
            {\vdash\Gamma,A\parr B}~
            (\parr)
            ~
            \cfrac{\vdash\Gamma, A
            ~
            \vdash
            B,\Theta}{\vdash\Gamma,A\otimes B,\Theta}~{ }
            (\otimes)
            ~
            \cfrac{\vdash\Gamma, A[\nicefrac{v}{x}],~v\not\in FV(\Gamma)}{\vdash \Gamma,\forall x A}~(\forall)
            ~
            \cfrac{\vdash\Gamma, A[\nicefrac{v}{x}]}{\vdash \Gamma,\exists x A}~(\exists)
        \end{array}
    $
    }
\\[.4cm]
\subfloat[{{\bf LC} language}
\label{LC_rules}
]
{
$
\mathit{Fm}::=\mathit{Prop}|(\mathit{Fm}\backslash \mathit{Fm})|(\mathit{Fm}/\mathit{Fm})| (\mathit{Fm}\bullet \mathit{Fm})
$
}\\[.4cm]
\subfloat[{Translating {\bf LC} to {\bf MILL1}}
\label{LC2MILL}
]
{
$
\begin{array}{c}
\begin{array}{lcr}
||p||^{(l,r)}=p(l,r)\mbox{ for }p\in \mathit{Prop},
&&
||B/ A||^{(l,r)}=\forall x||A||^{(r,x)}\multimap ||B||^{(l,x)},
\\[.05cm]
 ||A\bullet B||^{(l,r)}=\exists x||A||^{(l,x)}\otimes ||B||^{(x,r)},
 &&
 ||A\backslash B||^{(l,r)}=\forall x||A||^{(x,l)}\multimap ||B||^{(x,r)}.
 \end{array}
\\[.1cm]
\begin{array}{rcl}
A_1,\ldots, A_n\vdash_{\bf LC} B&
\Longleftrightarrow&||A_1||^{(c_0,c_1)},\ldots,||A_n||^{(c_{n-1},c_n)}\vdash_{\bf MILL1} ||B||^{(c_0,c_n)}.
\end{array}
\end{array}
$
}
\caption{Systems of linear logic}
\end{figure}
Language
and sequent calculus
of {\it first order multiplicative linear logic} ({\bf MLL1})
are shown in Figures~\ref{MLL1 formulas},~ \ref{MLL1}.
We  are given a set   $\mathit{Pred}_+$ of {\it positive predicate symbols} with assigned  arities and a  set $\mathit{Var}$  of  variables. The sets $\mathit{Pred}_-$ and $\mathit{Pred}$ of {\it negative} and of all predicate symbols respectively are defined.
The sets of first order {\it atomic}, respectively, linear multiplicative formulas are denoted as $\mathit{At}$, respectively,  $\mathit{Fm}$.
  Connectives $\otimes$ and
 $\parr$  are called respectively {\it tensor} (also {\it times}) and  {\it cotensor} (also {\it par}).
{\it Linear negation} $\overline{(.)}$ is not a connective, but is {\it definable}
by induction on formula construction, as in Figure~\ref{MLL1 formulas}.
Note  that, somewhat non-traditionally, we follow the convention that negation flips tensor/cotensor factors,  typical for {\it noncommutative} systems. This does not change the logic (the formulas $A\otimes B$ and $B\otimes A$ are provably equivalent), but is more consistent with our intended geometric interpretation.

 A {\it context} ${\Gamma}$ is a finite multiset of formulas, and a {\it sequent} is an expression of the form $\vdash\Gamma$, where $\Gamma$ is a context. The set of free variables of $\Gamma$ is denoted as $FV(\Gamma)$. In this work we will consider several systems of sequent calculus, and, in order to avoid confusion, we will sometimes use abbreviations like $\vdash_{\bf MLL1}\Gamma$ to say that the sequent $\vdash\Gamma$ is derivable in ${\bf MLL1}$, similarly for other systems. We also emphasize that a commutative system like {\bf MLL1}  can be given an alternative {\it ordered} formulation, where contexts are defined as finite sequences (rather than multisets) of formulas and rules are supplemented with the {\it explicit Exchange} rule          $\cfrac{\vdash \Gamma,A,B,\Theta}
            {\vdash \Gamma,B,A,\Theta}~
            ({\rm{Ex}})$.

{\it First order multiplicative intuitionistic logic} ({\bf MILL1}) is the fragment of {\bf MLL1} restricted to sequents of the form $\vdash B,\overline{A_1},\ldots,\overline{A_n}$, written in {\bf MILL1} in a two-sided form as
${A_1},\ldots,{A_n}\vdash B$, where ${A_1},\ldots,{A_n}, B$ are {\it multiplicative intuitionistic formulas}, which means constructed using only positive predicate symbols, quantifiers, tensor and {\it linear implication} defined by
$A\multimap B=\overline{A}\parr B$. (Note that we can define linear implication differently as $A\multimap B=B\parr\overline{A}$ and get an equivalent system.)

The language  of {\it Lambek calculus} ({\bf LC}), i.e. noncommutative multiplicative intuitionistic linear logic,  is summarised in Figure~\ref{LC_rules}.  Noncommutative tensor is denoted as $\bullet$, and the two {\it directed} implications caused by noncommutativity are denoted as slashes. Formulas of {\bf LC} are often called {\it types}. A context in {\bf LC} is,  by definition, a {\it sequence}  of types, and a sequent is an expression of the form $\Gamma\vdash F$, where $\Gamma$ is a context and $F$ is a type.  In  {\bf LC} there is no analogue of the Exchange rule or  ``unordered'' formulation, i.e. the system is genuinely noncommutative.
We will not reproduce the rules which can be found, for example, in \cite{Lambek}.

Given a finite alphabet $T$ of {\it terminal symbols}, a {\it Lambek grammar} $G$ (over $T$) is a pair $G=(\mathit{Lex},S)$, where $\mathit{Lex}$, the   {\it lexicon}
is a finite set
of {\it lexical entries}, which are expressions $t:A$, where $t\in T$ and $A$ is a type, and $S$ is a selected atomic type. The {\it language} $L(G)$ of $G$ is the set of words
\be\label{Lambek_language}
 L(G)=\bigg\{t_1\ldots t_n\bigg|
 \begin{array}{l}
 ~t_1:A_1,\ldots, t_n:A_n\in Lex,
 \\
 A_1,\ldots,A_n\vdash_{\bf LC} S
 \end{array}\bigg.\bigg\},
 \ee
where $t_1\ldots t_n$ stands for the concatenation of strings $t_1,\ldots,t_n$. More details on categorial grammars can be found in \cite{MootRetore}.

 Given two variables  $l, r$, the {\it first order translation} $||F||^{(l,r)}$
 of an {\bf LC} formula  $F$ parameterized by $l, r$
  is shown in Figure~\ref{LC2MILL} ({\bf LC} propositional symbols are treated as binary predicate symbols). This provides a conservative embedding of {\bf LC} into {\bf MILL1}  \cite{MootPiazza}.

\section{MILL1 grammars and linguistic fragment}\label{first order grammar section}
\subsection{MILL1 grammars}
Translation from {\bf LC} suggests defining {\bf MILL1} grammars similar to Lambek grammars.

Let  a finite alphabet $T$ of terminal symbols be  given.
Assume for convenience that  our first order language contains  two special variables (or constants) $l,r$.
\begin{defi}\label{first order grammar}
Given a {\it terminal alphabet} $T$,
a {\it  {\bf MILL1} lexical entry} (over $T$) is a pair $(w,A)$, where $w\in T^*$ is nonempty, and $A$ is a {\bf MILL1} formula with one occurrence of $l$ and one occurrence of $r$ and no other free variables.
For the formula $A$ occurring in a lexical entry we will write
$A[x;y]=A[\nicefrac{x}{l}][\nicefrac{y}{r}]$
 (so that $A=A[l;r]$).

 A  {\bf MILL1} {\it grammar} $G$ (over $T$) is a pair $(\mathit{Lex},s)$, where $\mathit{Lex}$ is a finite set of {\bf MILL1} lexical entries, and $s$ is a binary predicate symbol.
 The {\it language $L(G)$ generated by }$G$ or, simply, the {\it language of} $G$ is defined by
 \be\label{MILL1_language}
 L(G)=\bigg\{w_1\cdots w_n\bigg|
 \begin{array}{l}
 ~(w_1,A_1),\ldots, (w_m,A_n)\in Lex,
 \\
 ~A_1[c_0;c_1],\ldots,A_n[c_{n-1};c_n]\vdash_{\bf MILL1} s(c_0,c_n)
 \end{array}\bigg.\bigg\}.
 \ee
 \end{defi}
It seems clear that, under such a definition, Lambek grammars translate precisely to {\bf MILL1} grammars generating the same language (compare (\ref{MILL1_language}) with (\ref{Lambek_language})).

It has been shown \cite{Moot_extended}, \cite{Moot_inadequacy} that {\bf MILL1} allows representing more complex systems such as displacement calculus, abstract categorial grammars, hybrid type-logical grammars. This suggests that the above definition should be generalized to allow more complex lexical entries, corresponding to word tuples rather than just words. (To the author's knowledge, no explicit definition of a {\bf MILL1} grammar has been given in the literature, but this concept is implicit in the above cited works).

Now  let us isolate the fragment of first order logic that is actually used in  translations of linguistic systems.

\subsection{Linguistic fragment}
\begin{defi}
A   language of {\bf MLL1} is {\it linguistically marked} if each  $n$-ary predicate symbol $p$ is equipped with  {\it valency} $v(p)\in{\mathbb{N}}^2$, $v(p)=(v_l(p),v_r(p))$,  where $v_l(p)+v_r(p)=n$, such that  $v(\overline{p})=(v_r(p),v_l(p))$. When $v(p)=(k,m)$ we say that first $k$ occurrences $x_1,\ldots,x_k$ in the atomic formula $p(x_1,\ldots,x_n)$  are {\it left} occurrences  or have {\it left polarity}, and $x_{k+1},\ldots,x_n$ are  {\it right} occurrences, with  {\it right polarity}.
\end{defi}
For a variable occurrence $x$ in a compound formula in a linguistically marked language $F$ we define the polarity of $x$ in $F$ by induction.
If $F=A\Box B$, where $\Box\in\{\otimes,\parr\}$,  and $x$ occurs in $A$, respectively $B$, then the polarity of $x$ in $F$ is the same as the polarity of $x$ in $A$, respectively $B$. If $F=QxA$, where $Q\in\{\forall,\exists\}$, then the polarity of a variable occurrence $x$ in $F$ is the same as the polarity of $x$ in $A$,
If $\Gamma=A_1,\ldots,A_n$ is a context, and $x$ is a variable occurrence in $A_i$ for some $i\in\{1,\ldots,n\}$, then the polarity of $x$ in the context $\Gamma$ and in the sequent $\vdash\Gamma$ is the same as the polarity of $x$ in $A_i$. For a {\bf MILL1} sequent $\Gamma\vdash F$, the polarity of its variable occurrences is determined by the translation to {\bf MLL1}.

\begin{defi}
Given a linguistically marked {\bf MLL1} language,
a  {\it formula, context} or {\it sequent}  is {\it linguistically marked} if every quantifier binds exactly one left and one right variable occurrence.
An {\bf MLL1} {\it derivation} $\pi$ is {\it linguistically marked} if all sequents occurring in $\pi$ are linguistically marked.

 A linguistically marked {\it formula} or {\it context} is {\it linguistically well-formed} if, furthermore, it has at most one left and at most one right occurrence of any free variable. A linguistically marked  {\it sequent} is {\it linguistically well-formed} if each of its free variables has {\it exactly} one left and one right occurrence.
A {\bf MILL1} {\it grammar} is {\it well-formed} if the formula in every lexical entry is linguistically well-formed with  $l$ and $r$ occurring with left and right polarity respectively.
\end{defi}
\begin{prop}\label{lingustically marked derivation}
Any cut-free derivation of a linguistically marked {\bf MLL1} sequent is linguistically marked. $\Box$
\end{prop}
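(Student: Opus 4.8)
The plan is to prove the statement by structural induction on the cut-free derivation $\pi$, read from its conclusion upwards. Write $S$ for the endsequent of $\pi$; by hypothesis $S$ is linguistically marked. The whole argument reduces to one local observation: \emph{for every inference rule of {\bf MLL1} except} (Cut), \emph{if the conclusion of the rule is linguistically marked then so is each of its premises}. Granting this, the induction is immediate: the last rule of $\pi$ is not (Cut) since $\pi$ is cut-free, so by the observation its premises are linguistically marked; applying the induction hypothesis to the subderivations ending in those premises makes every sequent above them linguistically marked as well, and together with $S$ this covers all of $\pi$. The base case is a single axiom (Id), which has no sequent above its conclusion.

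To prove the local observation one distinguishes cases on the rule. The axiom (Id) has no premises, so there is nothing to check. For $(\parr)$ and $(\otimes)$ the collection of quantifier occurrences appearing in the premise(s) coincides with that appearing in the conclusion — the connective introduced in the conclusion carries no quantifier, and $(\otimes)$ merely splits the context — and passing between $A,B$ and $A\parr B$, or between $A$, $B$ and $A\otimes B$, changes neither which occurrences a given quantifier binds nor their left/right polarity (this is precisely how polarity of an occurrence in a compound formula and in a context was defined). Hence every quantifier of a premise still binds exactly one left and one right occurrence, so the premise is linguistically marked. In the ordered formulation the rule (Ex) is treated the same way, since it only permutes the formulas of a context.

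The cases carrying the actual content are $(\forall)$ and $(\exists)$. Take $(\forall)$, with conclusion $\vdash\Gamma,\forall x A$ and premise $\vdash\Gamma,A[\nicefrac{v}{x}]$, $v\not\in FV(\Gamma)$. The premise is obtained from the conclusion by deleting the outermost quantifier $\forall x$ and renaming to $v$ the occurrences it bound. Since substitution of a variable for a variable is capture-avoiding — at worst renaming some bound variables inside $A$, which never alters how many occurrences an inner quantifier binds or with which polarity — every quantifier of $A[\nicefrac{v}{x}]$ other than the deleted $\forall x$ binds in the premise exactly as many left and right occurrences as it did in $\forall x A$. As the conclusion is assumed linguistically marked, all those quantifiers bind exactly one left and one right occurrence, so the premise is linguistically marked. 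The case $(\exists)$ is identical; the lack of a freshness side condition is harmless, because any additional free occurrences of $v$ created by the substitution neither introduce, destroy, nor re-target any quantifier, and do not affect the polarity count of any quantifier already present.

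The only genuine subtlety — and the exact reason cut-freeness cannot be dropped — is the observation that (Cut) is precisely the rule failing the local claim: passing from $\vdash\Gamma,A$ and $\vdash\overline{A},\Theta$ to $\vdash\Gamma,\Theta$ discards the cut formula $A$ together with its dual, and $A$ may contain quantifiers binding arbitrarily many left and right occurrences, so a linguistically marked conclusion places no constraint whatsoever on the two premises. I therefore expect the induction itself and the propositional/exchange cases to be entirely routine, with all the care concentrated in bookkeeping how quantifier binding and variable polarity behave under the two quantifier-introduction rules.
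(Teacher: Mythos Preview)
Your argument is correct and is exactly the routine verification the paper has in mind; the paper itself gives no proof beyond the $\Box$, so your bottom-up induction on the cut-free derivation, checking that each non-Cut rule reflects linguistic markedness from conclusion to premises, is the intended (and only natural) route.
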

Evidently, if we understand {\bf MILL1} as a fragment of {\bf MLL1}, then the translation of  {\bf LC}  in Figure~\ref{LC2MILL}  has precisely the linguistically well-formed fragment as its target (with $v(p)=(1,1)$ for all $p\in \mathit{Pred}$), and Lambek grammars translate to linguistically well-formed ones. Similar observations apply to translations in \cite{Moot_extended}, \cite{Moot_inadequacy}.

However, the standard sequent calculus formulation of {\bf MLL1} is  not {\it intrinsic} to the linguistic fragment. For an illustration, a basic {\bf LC} derivable sequent $A,B\vdash A\bullet B$ translates to {\bf MILL1} as $A(c_0,c_1),B(c_1,c_2)\vdash\exists x A(c_0,x)\otimes B(x,c_2)$. The latter, obviously, is derivable in ${\bf MLL1}$ by the $(\exists)$ rule applied to the sequent $A(c_0,c_1),B(c_1,c_2)\vdash A(c_0,c_1)\otimes B(c_1,c_2)$, which itself is not in the linguistic fragment. Thus, in order to derive a linguistically well-formed sequent in {\bf MLL1} we have to use ``linguistically ill-formed'' ones at intermediate steps.

Our goal is to provide the linguistic fragment with an {\it intrinsic} deductive system.

\subsubsection{Occurrence nets}
\begin{figure}[ht]
\centering
\begin{tikzpicture}
\begin{scope}[shift={(-3,0)}]
 \node at (0,0) {
$
\vdash a(e,t),\overline{a}(t,e)\otimes a(e,s),\overline{a}(s,e)$
};
\hspace*{11pt} 
\scalebox{1.15}{
 \draw[thick,-](-1.,-0.2) to  [out=-30,in=180] (-.75,-.3) to  [out=0,in=-150](-0.5,-0.2);
 \draw[thick,-](1,-0.2) to  [out=-30,in=180] (1.25,-.3) to  [out=0,in=-150](1.5,-0.2);
 \draw[thick,-](-1.3,-0.2) to  [out=-30,in=180] (-.75,-.4) to  [out=0,in=-150](-.2,-0.2);
 \draw[thick,-](.7,-0.2) to  [out=-30,in=180] (1.25,-.4) to  [out=0,in=-150](1.8,-0.2);
}
 \end{scope}
 \node [above] at (0,0){$(\exists)$};
 \node at (0,-.1) {$\Longrightarrow$};
 \begin{scope}[shift={(3.5,0)}]

 \node at (0,0) {
$
\vdash a(e,t),\exists x(\overline{a}(t,x)\otimes a(x,s)),\overline{a}(s,e)$
};
\hspace*{-15pt}
\scalebox{1.15}{
 \draw[thick,-](-1.4,-0.2) to  [out=-30,in=180] (-.75,-.3) to  [out=0,in=-150](-0.3,-0.2);
 \draw[thick,-](1.0,-0.2) to  [out=-30,in=180] (1.45,-.3) to  [out=0,in=-150](1.7,-0.2);
 \draw[thick,-](-1.6,-0.2) to  [out=-30,in=180] (.25,-.5) to  [out=0,in=-150](2.0,-0.2);
}

 \end{scope}
\end{tikzpicture}
\caption{Occurrence net example}
\label{occ_net}
\end{figure}
Let us introduce some convenient format and terminology for derivations of linguistically well-formed sequents.
\begin{defi}
An {\it occurrence net} of a linguistically marked {\bf MLL1} sequent $\vdash\Gamma$ is a
perfect matching $\sigma$ between left and right free occurrences in $\Gamma$, such that each pair ({\it link}) in $\sigma$ consists of occurrences of the same variable.
\end{defi}
Basically, occurrence nets are rudiments of proof-nets. To each cut-free linguistically marked derivation $\pi$ with conclusion $\vdash\Gamma$   we will assign an occurrence net $\sigma(\pi)$, which is an occurrence net of $\vdash\Gamma$. (Note that for a linguistically well-formed sequent there is only one occurrence net possible.)
\begin{defi} The {\it occurrence net $\sigma(\pi)$  of a cut-free linguistically marked derivation $\pi$} is constructed by induction on $\pi$.
\begin{itemize}
\item If $\pi$ is the axiom $\vdash \overline{X}, X$, where
$X= {p}(e_1,\ldots,e_n)$,
  the net is defined by matching each occurrence  $e_i$ in $X$  with the occurrence $e_i$ in $\overline{X}=\overline{p}(e_n,\ldots,e_1)$.
  \item If $\pi$ is obtained from a derivation $\pi'$ by the $(\parr)$ rule, then $\sigma(\pi)=\sigma(\pi')$.
    \item If $\pi$ is obtained from derivations $\pi_1$, $\pi_2$ by the $(\otimes)$ rule, $\sigma(\pi)=\sigma(\pi_1)\cup\sigma(\pi_2)$.
\item If $\pi$ is obtained from some $\pi'$ by the $(\forall)$ rule applied to a  formula $A'=A[
\nicefrac{v}{x}]$ and introducing the formula $\forall x A$, then
$\sigma(\pi)=\sigma(\pi')\setminus\{(v_l,v_r)\}$, where $v_l,v_r$ are the two occurrences of $v$ in $A'$.
 (The variable $v$ has no free occurrences in the premise other than those in $A'$, and, since all sequents are linguistically marked, there must be precisely one left and one right occurrence of $v$ in $A'$. It follows that $(v_l,v_r)\in\sigma(\pi')$.)
\item If $\pi$ is obtained from $\pi'$ by the $(\exists)$ rule applied to a  formula $A'=A[\nicefrac{v}{x}]$ and introducing the formula $\exists x A$, there are two cases depending on $\sigma(\pi')$. Let $v_l$ and $v_r$ be, respectively, the  left and the right occurrence of $v$ in $A'$ corresponding to the two occurrences of $x$ in $A$ bound by the existential quantifier.
     \begin{itemize}
     \item
If $(v_l, v_r)\in\sigma(\pi')$, then $\sigma(\pi)=\sigma(\pi')\setminus\{(v_l,v_r)\}$.
\item Otherwise let $v_l'$, $v_r'$ be such that $(v_l', v_r),(v_l, v_r')\in\sigma(\pi')$.
Then
\[\sigma(\pi)=(\sigma(\pi')\setminus\{(v_l',v_r),(v_l,v_r')\})\cup\{(v_l',v_r')\}.
\]
\end{itemize}
\end{itemize}
\end{defi}
 (If we see occurrence nets geometrically as bipartite graphs, then the $(\parr)$ rule does not change the graphs, the $(\otimes)$ rule takes the disjoint union of two graphs. The $(\forall)$ rule erases the link corresponding to the two occurrences that become bound by the universal quantifier.
 The $(\exists)$ rule has two cases: it  erases a link in the first case and
glues two links  into one in the second case.
An  example for the second case of the $(\exists)$ rule is shown in Figure~\ref{occ_net}.)

In what follows we will use a number of times the operation of replacing a free variable occurrence in an expression with another variable. So we introduce some notation generalizing familiar notation for substitution.
Let $\Phi$ be a context or a formula, let $x$ be a free variable occurrence in $\Phi$ and $v\in\mathit{Var}$.
 Then $\Phi[\nicefrac{v}{x}]$ is the expression obtained from $\Phi$ by replacing $x$ with $v$.
 We will also use the notation
  $\Phi[\nicefrac{v_1}{x_1},\ldots,\nicefrac{v_n}{x_n}]=\Phi[\nicefrac{v_1}{x_1}]\ldots[\nicefrac{v_n}{x_n}]$ for iterated substitutions, where it is assumed implicitly that $x_1,\ldots,x_n$ are pairwise distinct occurrences, so that the substitutions commute with each other.

Finally, we  will use for an induction parameter the {\it size of a cut-free derivation} defined in the following natural way. If the derivation $\pi$ is an axiom then the size $\mathit{size}(\pi)$ of $\pi$ is $1$. If $\pi$ is obtained from a derivation $\pi'$ by a single-premise rule then $\mathit{size}(\pi)=\mathit{size}(\pi')+1$. If $\pi$ is obtained from derivations $\pi_1$, $\pi_2$ by a two-premise rule then
$\mathit{size}(\pi)=\mathit{size}(\pi_1)+\mathit{size}(\pi_2)+1$.
\begin{prop}\label{MLL1 alpha-equiv derivations}
If $\Gamma$, $\Gamma'$ are linguistically marked contexts differing from each other only by renaming bound variables and the sequent $\vdash\Gamma$ is {\bf MLL1} derivable with a cut-free derivation $\pi$, then $\vdash\Gamma'$ is derivable with a cut-free derivation $\pi'$ of the same size and with the same occurrence net, $\mathit{size}(\pi')=\mathit{size}(\pi)$, $\sigma(\pi')=\sigma(\pi)$.
\end{prop}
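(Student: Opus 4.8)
The plan is to argue by induction on $\mathit{size}(\pi)$, with a case analysis on the last rule of $\pi$, constructing $\pi'$ so that it mirrors $\pi$ rule for rule, renaming bound variables (and, at quantifier steps, eigenvariables and witnesses) as needed. To make the induction go through it is natural to strengthen the statement. Since $\Gamma$ and $\Gamma'$ differ only in the names of bound variables there is a canonical bijection between the free occurrences of $\vdash\Gamma$ and those of $\vdash\Gamma'$, and $\sigma(\pi)$ transports along it; but in the course of the induction the sequents one compares will differ by more than a bound-variable renaming of the conclusion --- also by the eigenvariable names introduced so far --- so I would carry along a bijection $\rho$ between the free occurrences of the current sequents of $\pi$ and $\pi'$ that preserves predicate symbols and left/right polarity and that respects variable names on all occurrences except the ones destined to become bound. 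The statement to prove is then: for every cut-free derivation $\pi$ of a linguistically marked $\vdash\Gamma$ and every such $\rho$ relating $\vdash\Gamma$ to a linguistically marked $\vdash\Gamma'$, there is a cut-free derivation $\pi'$ of $\vdash\Gamma'$ with $\mathit{size}(\pi')=\mathit{size}(\pi)$ and $\sigma(\pi')=\rho(\sigma(\pi))$; the proposition is the case where $\rho$ is the canonical identification of free occurrences (under which $\rho(\sigma(\pi))$ is just $\sigma(\pi)$).

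First I would dispose of the easy cases. For an axiom there are no bound variables, $\vdash\Gamma'$ is again an axiom, and the net is read off from the atoms. For $(\parr)$ the bijection needed for the premise is the same as for the conclusion, so the induction hypothesis applies and reapplying $(\parr)$ changes neither the size difference nor the net. For $(\otimes)$ one splits $\rho$ over the two premises, applies the induction hypothesis to each, reapplies $(\otimes)$, and takes the disjoint union of the nets. In all three cases $\mathit{size}$ and $\sigma$ are tracked exactly as in their recursive definitions.

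The real work is in the quantifier cases. For a $(\forall)$ step, with premise $\vdash\Delta,A[\nicefrac{v}{x}]$, $v\notin FV(\Delta)$, and conclusion $\vdash\Delta,\forall x A$ (which is $\vdash\Gamma$), corresponding via $\rho$ to $\vdash\Delta',\forall y B$, I would pick an eigenvariable $v'$ for the mirrored step of $\pi'$, verify that $v'\notin FV(\Delta')$ can be arranged (using the correspondence of free occurrences) and that $v'$ avoids capture in $B$, extend $\rho$ to a bijection $\rho'$ on the premises that sends the two $v$-occurrences to the two $v'$-occurrences, apply the induction hypothesis, and then reapply $(\forall)$ --- which deletes exactly the link of the two $v'$-occurrences, matching the definition of $\sigma$ on a $(\forall)$ step. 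The $(\exists)$ case is parallel, with the wrinkles that the witness $v$ may have further free occurrences in $A$, so $\rho'$ must be set up to treat those too, and that the recursive definition of $\sigma$ there splits according to whether the two to-be-bound occurrences are already linked in $\sigma(\pi_0)$ --- a condition on $\sigma(\pi_0)$ alone, hence preserved by $\rho'$, so both subcases go through. The hard part, and the place where the argument is genuinely delicate rather than routine, is precisely this propagation of the renaming one step up across a quantifier rule: one must make sure that a legitimate eigenvariable or witness $v'$ for $\pi'$ together with a matching bijection $\rho'$ always exists, and in particular that insisting on capture-freeness never conflicts with the side condition of the rule. Once that is secured, the remainder is bookkeeping that follows the recursive definitions of $\mathit{size}(\cdot)$ and $\sigma(\cdot)$.
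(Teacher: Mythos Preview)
Your proposal is correct and follows the same approach as the paper, which simply writes ``Induction on $\pi$'' with no further detail. You have in fact spelled out more than the paper does, including the necessary strengthening of the induction hypothesis at quantifier steps (where the eigenvariable or witness in the premise may have to be renamed, so the two premises no longer differ by bound-variable renaming alone); this is exactly the kind of bookkeeping the paper leaves implicit.
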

\begin{proof} Induction on $\pi$. \end{proof}
\begin{prop}\label{exists'}
Let $\pi$ be a linguistically marked cut-free {\bf MLL1} derivation with conclusion $\vdash\Gamma$, and assume that $(e_l,e_r)\in\sigma(\pi)$.
Let $v\in\mathit{Var}$ be such that
$e_l,e_r$ are not in the scope of a quantifier $Qv$, $Q\in\{\forall,\exists\}$,
and let $\widehat\Gamma=\Gamma[\nicefrac{v}{e_l},\nicefrac{v}{e_r}]$.
 Then $\vdash\widehat\Gamma$ is derivable in {\bf MLL1} with a linguistically marked cut-free derivation $\widehat\pi$ of the same size as $\pi$. Moreover, if $v_l,v_r$  are occurrences of $v$ in $\widehat\Gamma$ replacing, respectively,  the occurrences $e_l,e_r$ in $\Gamma$, then $\sigma(\widehat\pi)=(\sigma(\pi)\setminus\{(e_l,e_r)\})\cup \{(v_l,v_r)\}$.
\end{prop}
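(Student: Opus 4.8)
The plan is to argue by induction on $\mathit{size}(\pi)$ with a case analysis on the last rule of $\pi$, tracking the matched pair $(e_l,e_r)$ through the derivation and renaming both of its endpoints to $v$ at every step. Two preliminary reductions make the induction smooth. First, if $v$ is the same variable as $e_l,e_r$ then $\widehat\Gamma=\Gamma$ and $\widehat\pi=\pi$ works, so assume otherwise. Second, using Proposition~\ref{MLL1 alpha-equiv derivations} (to rename the bound variables of $\Gamma$) together with the standard freedom to rename eigenvariables, I will assume without loss of generality that $v$ is bound nowhere in $\pi$ and is never used as an eigenvariable; the derivation constructed for this normalized form is transported back to the original $\Gamma$ at the end by Proposition~\ref{MLL1 alpha-equiv derivations} again, which preserves $\mathit{size}$ and the occurrence net. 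Under this normalization every sequent occurring in $\pi$ (and, after renaming free occurrences, in the derivation being built) contains no quantifier $Qv$, so the scope hypothesis required to invoke the induction hypothesis at intermediate steps is automatically satisfied, and no variable capture can occur; linguistic markedness is preserved throughout, since renaming free occurrences does not change which occurrences a quantifier binds.

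The base case and the $(\parr)$, $(\otimes)$ cases are routine. An axiom link is a pair of slots of a single atom, and renaming both to $v$ yields another instance of $(\mathrm{Id})$ of size $1$. For $(\parr)$ the free occurrences and quantifier scopes are unchanged, so one applies the induction hypothesis to the premise and reapplies $(\parr)$; for $(\otimes)$ the link $(e_l,e_r)$ lies entirely in one of the two disjoint premise nets $\sigma(\pi_1),\sigma(\pi_2)$, so one renames inside the corresponding premise and reapplies $(\otimes)$. The $(\forall)$ case is similar: since $\sigma(\pi)=\sigma(\pi')\setminus\{(w_l,w_r)\}$ for the eigenvariable occurrences $w_l,w_r$, we have $(e_l,e_r)\neq(w_l,w_r)$, hence (distinct links of a matching are disjoint) the pair lies in the premise untouched by $w_l,w_r$; one applies the induction hypothesis there and reapplies $(\forall)$, whose eigenvariable condition still holds because the normalization keeps the eigenvariable distinct from $v$. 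In all these cases the bookkeeping for $\mathit{size}$ and $\sigma$ is immediate from the inductive definitions.

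The genuinely delicate case is $(\exists)$, where $\pi$ comes from $\pi'\vdash\Gamma'',A[\nicefrac{w}{x}]$. If the rule performs an erasure, or a gluing that does not produce the particular link $(e_l,e_r)$, then $(e_l,e_r)\in\sigma(\pi')$ and is disjoint from all the eigenvariable/auxiliary occurrences named in the net clause, so one renames inside $\pi'$ and reapplies $(\exists)$, checking the two sub-cases of the net computation. The hard sub-case is when $(e_l,e_r)$ is exactly the link $(w_l',w_r')$ produced by gluing, so that $\sigma(\pi')$ contains the two distinct links $(e_l,w_r)$ and $(w_l,e_r)$ with $w_l,w_r$ the two $x$-occurrences of $A$, all four occurrences being of the same variable. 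A single use of the induction hypothesis will not do, because renaming only $e_l$ would turn the link $(e_l,w_r)$ into a pair of occurrences of different variables. Instead I apply the induction hypothesis to $\pi'$ twice: first to the link $(e_l,w_r)$, renaming both endpoints to $v$, and then, on the resulting derivation, to the link $(w_l,e_r)$ (still present, since the two links are disjoint), again renaming to $v$. After these two renamings both $x$-occurrences of $A$ carry $v$, so the premise formula has become $\widehat A[\nicefrac{v}{x}]$; reapplying $(\exists)$ then produces $\vdash\widehat\Gamma$, and a direct check shows this application is in its gluing sub-case, gluing the links $(v_l,w_r{\to}v)$ and $(w_l{\to}v,v_r)$ back into $(v_l,v_r)$, so that $\sigma(\widehat\pi)=(\sigma(\pi)\setminus\{(e_l,e_r)\})\cup\{(v_l,v_r)\}$ and $\mathit{size}(\widehat\pi)=\mathit{size}(\pi)$.

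I expect this gluing sub-case of $(\exists)$ to be the main obstacle. It is the reason the induction must be run on $\mathit{size}(\pi)$ rather than structurally — the two applications above are to $\pi'$ and to a transform of $\pi'$, both of size $\mathit{size}(\pi)-1$, not to an immediate subderivation — and it is also what makes the up-front normalization worthwhile, since without it the first of the two renamings could be blocked by an inner $Qv$ surrounding an $x$-occurrence of $A$.
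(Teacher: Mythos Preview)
Your proposal is correct and follows essentially the same route as the paper: induction on $\mathit{size}(\pi)$, with the crux being the gluing sub-case of $(\exists)$, handled by two successive applications of the induction hypothesis to rename both auxiliary links to $v$. The only difference is cosmetic: you normalize globally upfront so that $v$ is nowhere bound and never an eigenvariable in $\pi$, whereas the paper performs the bound-variable renaming locally at the hard $(\exists)$ step via Proposition~\ref{MLL1 alpha-equiv derivations}; both devices serve the same purpose of discharging the scope hypothesis before the two uses of the induction hypothesis.
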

\begin{proof} Induction on $\mathit{size}(\pi)$.
The most involved step is when  $\pi$ is obtained from a derivation $\pi'$ of some sequent $\vdash\Gamma'$ by the $(\exists)$ rule and  $(e_l,e_r)\not\in\sigma(\pi')$.

This means that   $\Gamma'$
contains some formula $A$, while
$\Gamma$
contains the formula
$\exists xA'$,
where
$A=A'[\nicefrac{e}/{x}]$, and the occurrences $e_l'$, $e_r'$ of $e$ in $A$ that correspond to the two  occurrences of $x$ in $A'$ are linked in $\sigma(\pi')$ to, respectively,  $e_r$, $e_l$, i.e. $(e_l,e_r'),(e_l',e_r)\in\sigma(\pi')$.  Note that
\be\label{unquantified formula}
A'=A[\nicefrac{x}{e_l'},\nicefrac{x}{e_r'}].
\ee

Observe from (\ref{unquantified formula}) that the context $\widehat\Gamma=\Gamma[\nicefrac{v}{e_l},\nicefrac{v}{e_r}]$ can be obtained as follows.

Let $\Theta=\Gamma'[\nicefrac{v}{e_l'},\nicefrac{v}{e_r},\nicefrac{v}{e_l},\nicefrac{v}{e_r'}]$,
and let $v_l',v_r,v_l,v_r'$ be the occurrences of $v$ in $\Theta$ replacing, respectively, the occurrences $e_l',e_r,e_l,e_r'$ in $\Gamma'$.
Let $B$ be the formula in $\Theta$ that corresponds to $A$ in $\Gamma'$. Put
\be\label{quantified formula}
\widehat A=\exists xB[\nicefrac{x}{v_l'},\nicefrac{x}{v_r'}].
\ee
Then $\widehat\Gamma$ is obtained from $\Theta$ by replacing $B$ with $\widehat A$.

Moreover, the sequent $\vdash\widehat\Gamma$ is derivable from $\vdash\Theta$ by the $(\exists)$ rule. It follows that, if the occurrences $e_l',e_r,e_l,e_r'$ in $\Gamma'$ are not in the scope of a quantifier $Qv$, $Q\in\{\forall,\exists\}$, then,  by the induction hypothesis (applied twice), we have that $\vdash_{\bf MLL1}\Theta$, hence  $\vdash_{\bf MLL1}\widehat\Gamma$. However, the above condition  may fail, and then we cannot apply the induction hypothesis  directly. Therefore, in a general case, we need some more work.

By renaming bound variables if necessary, we can obtain from $\Gamma'$ a linguistically marked context $\Gamma''$ such that the occurrences $e_l',e_r,e_l,e_r'$ are not in the scope of a quantifier $Qv$, $Q\in\{\forall,\exists\}$. By the preceding proposition the sequent $\vdash\Gamma''$ is derivable with a cut-free derivation $\pi''$, where $\mathit{size}(\pi'')=\mathit{size}(\pi')$, $\sigma(\pi'')=\sigma(\pi')$.
Then we put $\Theta''=\Gamma''[\nicefrac{v}{e_l'},\nicefrac{v}{e_r},\nicefrac{v}{e_l},\nicefrac{v}{e_r'}]$.  Now the induction hypothesis can be applied for sure and the sequent $\vdash\Theta''$ is derivable with a linguistically marked cut-free derivation $\rho$, and $\mathit{size}(\rho)=\mathit{size}(\pi'')=\mathit{size}(\pi')$. Moreover,
\be\label{intermediate occurrence net}
\sigma(\rho)=(\sigma(\pi')\setminus\{(e_l',e_r),(e_l,e_r')\})\cup\{(v_l',v_r),(v_l,v_r')\},
\ee
where $v_l',v_r,v_l,v_r'$ are the occurrences of $v$ in $\Theta''$ replacing, respectively, the occurrences $e_l'$, $e_r$, $e_l$, $e_r'$ in $\Gamma''$.

Let $A''$ be the formula in $\Gamma''$ corresponding to $A$ in $\Gamma'$,
and
 $B''$ be the formula in $\Theta''$ obtained from $A''$ in $\Gamma''$. Let $\widehat B=\exists xB''[x/v_l',x/v_r']$.
     The formula $\widehat A$ in  (\ref{quantified formula})   may differ from $\widehat B$ only by renaming bound variables.
Let $\widehat\Theta$ be the context obtained from $\Theta''$ by replacing $B''$ with $\widehat B$.
Then $\widehat\Theta$ and $\widehat\Gamma$, as well, may differ only by renaming bound variables. But $\vdash\widehat\Theta$ is derivable from $\vdash\Theta''$ by the $(\exists)$ rule, i.e. the derivation $\widehat\rho$ of $\vdash\widehat\Theta$ is obtained from $\rho$ by attaching the $(\exists)$ rule. Since $\mathit{size}(\rho)=\mathit{size}(\pi')$, it follows that
$\mathit{size}(\widehat\rho)=\mathit{size}(\pi)$, and
the occurrence net $\sigma(\widehat\rho)$ is computed from (\ref{intermediate occurrence net}). Applying Proposition~\ref{MLL1 alpha-equiv derivations} once more, we obtain the desired result. \end{proof}

\subsubsection{Linguistic  derivations}
\begin{defi}
Let $\vdash\Gamma$
be a  linguistically well-formed {\bf MLL1} sequent containing a formula $A$ and let $s,t\in Var$, $s\not=t$, have, respectively, a left and a right free occurrences in $A$, denoted as $s_l$ and $t_r$. Let $s_r$  be the {\it unique right} occurrence of $s$ in $\Gamma$, and $t_l$ be the {\it unique left} occurrence of $t$.
Let $x, v\in \mathit{Var}$ be such that $x$ does not occur in $A$ freely, and $s_r,t_l$ do not occur in $\Gamma$ in the scope of a quantifier $Qv$, $v\in\{\forall,\exists\}$.

Let  $A'=\exists xA[\nicefrac{x}{t_r},\nicefrac{x}{s_l}]$, and the context $\Gamma'$ be obtained from $\Gamma$ by replacing $A$ with $A'$. Finally, let
 $\widehat\Gamma=\Gamma'[\nicefrac{v}{t_l},\nicefrac{v}{s_r}]$.
   Then {\it the sequent $\vdash\widehat\Gamma$ is obtained from $\vdash\Gamma$ {\it by the $(\exists')$ rule}}.
   \end{defi}
   An example of the $(\exists')$ rule is the following
\be\label{exists' example}
\cfrac
{
\vdash \overline{a}(t,z), a(y,s)\parr(\overline{a}(s,y)\otimes a(z,t))
}
{
\vdash \overline{a}(v,z),\exists x(a(y,v)\parr(\overline{a}(x,y)\otimes a(z,x)))
}
~(\exists'),
\ee
where the valency $v(a)=(1,1)$.
In the notation of the above definition, the sequents in (\ref{exists' example}) have the following structure:
\[
\begin{array}{rl}
 A=a(y,s)\parr(\overline{a}(s,y)\otimes a(z,t)),
& \Gamma=\overline{a}(t,z), a(y,s)\parr(\overline{a}(s,y)\otimes a(z,t)),\\
A'=\exists x(a(y,s)\parr(\overline{a}(x,y)\otimes a(z,x))),
& \Gamma'=\overline{a}(t,z), \exists x(a(y,s)\parr(\overline{a}(x,y)\otimes a(z,x))),\\
\widehat A=\exists x(a(y,v)\parr(\overline{a}(x,y)\otimes a(z,x))),
& \widehat\Gamma=\overline{a}(v,z), \exists x(a(y,v)\parr(\overline{a}(x,y)\otimes a(z,x))),
\end{array}
\]
where we  also indicated the formula $\widehat A$ in the context $\widehat\Gamma$ that corresponds to the formula $A$ in the context $\Gamma$.
   \begin{prop}
   If $\vdash\Gamma$ is an {\bf MLL1} derivable linguistically well-formed sequent and the sequent $\vdash\widehat\Gamma$ is obtained from $\vdash\Gamma$ by the $(\exists')$ rule, then $\vdash\widehat\Gamma$ is linguistically well-formed and {\bf MLL1} derivable.
   \end{prop}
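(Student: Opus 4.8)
The plan is to treat the two claims separately. Linguistic well-formedness of $\widehat\Gamma$ is a direct combinatorial check. Since $\vdash\Gamma$ is linguistically well-formed, each of its free variables has exactly one left and one right occurrence; in particular $s$ occurs in $\Gamma$ only as $s_l$ (left, in $A$) and $s_r$ (right), and $t$ only as $t_l$ (left) and $t_r$ (right, in $A$). In $\widehat\Gamma$ the occurrences $s_l,t_r$ become bound by the new quantifier $\exists x$, so it binds exactly one left and one right occurrence, while every other quantifier and every polarity is unaffected by the substitutions; hence $\widehat\Gamma$ is linguistically marked. The free variables of $\widehat\Gamma$ are those of $\Gamma$ with $s,t$ deleted and $v$ adjoined, and $v$ receives exactly one left occurrence (in place of $t_l$) and one right occurrence (in place of $s_r$), so $\widehat\Gamma$ is linguistically well-formed. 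Here one uses that $v$ does not already occur in $\Gamma$: this is forced, for free occurrences, by the conclusion itself, and for bound ones it can be arranged by renaming bound variables via Proposition~\ref{MLL1 alpha-equiv derivations}; once this is done, no quantifier $Qv$ occurs in $\Gamma$, which will also render the side conditions below vacuous.

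For derivability I would argue as follows. By cut elimination for {\bf MLL1}, fix a cut-free derivation $\pi$ of $\vdash\Gamma$; by Proposition~\ref{lingustically marked derivation} it is linguistically marked, hence carries an occurrence net $\sigma(\pi)$, which — $\vdash\Gamma$ being linguistically well-formed — is the unique perfect matching of left with right occurrences of the same variable; thus $(s_l,s_r),(t_l,t_r)\in\sigma(\pi)$. Apply Proposition~\ref{exists'} to $\pi$ at the link $(s_l,s_r)$ with the variable $v$, and then again, to the resulting derivation, at the link $(t_l,t_r)$ (untouched by the first step, since $s\neq t$), once more with $v$. This yields a cut-free linguistically marked derivation $\pi_2$ of $\vdash\Gamma_2$, where $\Gamma_2=\Gamma[\nicefrac{v}{s_l},\nicefrac{v}{s_r},\nicefrac{v}{t_l},\nicefrac{v}{t_r}]$, whose occurrence net is $\sigma(\pi)$ with the two links $(s_l,s_r),(t_l,t_r)$ renamed to links of $v$. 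In $\Gamma_2$ the descendant of $A$ now carries a left and a right occurrence of the \emph{single} variable $v$ (descending from $s_l$ and $t_r$), and these lie in two distinct links of the net, hence are \emph{not} linked to one another. Therefore the $(\exists)$ rule applied to $\pi_2$ so as to bind exactly these two occurrences by a fresh bound variable is an instance of its second, ``gluing'', case, and produces a cut-free linguistically marked derivation $\pi_3$.

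It then remains to observe that the conclusion of $\pi_3$ is, up to renaming of the bound variable, exactly $\vdash\widehat\Gamma$, and that its occurrence net is the unique net of $\widehat\Gamma$: replacing $s_l,t_r$ by $v$ and then binding them has the same effect as binding them directly, while the rest of the context is substituted for in the same way in $\Gamma_2$ and in $\widehat\Gamma$. A final appeal to Proposition~\ref{MLL1 alpha-equiv derivations} adjusts the bound variable to the $x$ of the $(\exists')$ rule, giving a (cut-free) {\bf MLL1} derivation of $\vdash\widehat\Gamma$. The main obstacle is this last bookkeeping: verifying that the composite of the two renamings supplied by Proposition~\ref{exists'} together with the single $(\exists)$ step reconstructs $\widehat\Gamma$ exactly, and that the side conditions ``not in the scope of $Qv$'' never obstruct those two applications. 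The latter is ensured once $v$ has been arranged not to occur in $\Gamma$, and the former is the same style of occurrence-tracking already carried out in the proof of Proposition~\ref{exists'}.
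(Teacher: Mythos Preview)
Your argument is correct and follows essentially the same route as the paper: take a cut-free (hence linguistically marked) derivation, observe that the unique occurrence net forces $(s_l,s_r),(t_l,t_r)\in\sigma(\pi)$, use Proposition~\ref{MLL1 alpha-equiv derivations} to eliminate any $Qv$ that could obstruct the side conditions, invoke Proposition~\ref{exists'} twice to rename all four occurrences to $v$, apply the $(\exists)$ rule in its gluing case, and finish with another appeal to Proposition~\ref{MLL1 alpha-equiv derivations}. The only caveat is your phrasing that $v\notin FV(\Gamma)$ is ``forced by the conclusion itself'': this is circular as written, since you are proving that conclusion; what you need is simply that the $(\exists')$ rule is intended to be applied with $v$ fresh for the free variables of $\Gamma$ other than $s,t$ (the paper's proof tacitly assumes the same), and your derivability argument goes through under that reading.
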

   \begin{proof}
   Let
    the notation be as in the definition above.  By Proposition~\ref{lingustically marked derivation} any  cut-free derivation $\pi$ of $\vdash\Gamma$ is linguistically marked so it has an occurrence net $\sigma(\pi)$. Moreover,
   since $\vdash\Gamma$ is linguistically well-formed, there  are only two occurrences of  $s$ and $t$ in $\Gamma$, which implies
    $(s_l,s_r),(t_l,t_r)\in\sigma(\pi)$.

    By renaming, if necessary, bound variables of $A$  we can obtain from $\Gamma$ a linguistically well-formed context $\Theta$ such that $s_l,t_r$ in $\Theta$ are not in the scope of a quantifier $Qv$, $Q\in\{\forall,\exists\}$. By Proposition~\ref{MLL1 alpha-equiv derivations}, the sequent $\vdash\Theta$ is cut-free derivable with a linguistically marked derivation $\rho$ such that $\sigma(\rho)=\sigma(\pi)$, i.e. $(s_l,s_r),(t_l,t_r)\in\sigma(\rho)$.
    It follows from
   Proposition~\ref{exists'} that
    $\vdash_{\bf MLL1}\Theta'$, where $\Theta'=\Theta[\nicefrac{v}{t_l},\nicefrac{v}{t_r},\nicefrac{v}{s_l},\nicefrac{v}{s_r}]$.

    Let $B$ be the formula in $\Theta$ corresponding to $A$ in $\Gamma$ and $B'$ be the formula in $\Theta'$ corresponding to $B$ in $\Theta$. Finally, let $v_l,v_r',v_l',v_r$ be the occurrences of $v$ in $\Theta'$ replacing the occurrences $t_l, t_r,s_l,s_r$ in $\Theta$ respectively, and let $\widehat{ B}=\exists xB'[\nicefrac{x}{v_l'},\nicefrac{x}{v_r'}]$. It follows from the definition that the formulas $\widehat B$ and $\widehat A$, where
    $\widehat A$ is the formula in $\widehat\Gamma$ corresponding to $A'$ in $\Gamma'$, may differ only by renaming bound variables.
    Let $\widehat\Theta$ be obtained from $\Theta'$ by replacing $B'$ with $\widehat B$. Then the sequent $\vdash\widehat{\Theta}$ is derivable from $\vdash\Theta'$ by the $(\exists)$ rule.
    But $\widehat\Theta$, again, may differ from $\widehat\Gamma$
    only by renaming bound variables, and the statement follows from Proposition~\ref{MLL1 alpha-equiv derivations}.
    That $\vdash\widehat\Gamma$ is linguistically well-formed is obvious from counting free left and right occurrences. \end{proof}

   Note that on the level of  occurrence nets, seen as bipartite graphs, the $(\exists')$ rule does the same gluing as $(\exists)$; only vertex labels are changed.
\begin{defi}
Derivations of linguistically well-formed sequents using rules of {\bf MLL1} and the $(\exists')$ rule  and involving only linguistically well-formed sequents   are {\it linguistic derivations}.
\end{defi}
\begin{lem}\label{linguistically well-formed fragment}
Any cut-free {\bf MLL1} derivation $\pi$ of a  linguistically well-formed sequent $\vdash\Gamma$  translates to a linguistic derivation.
\end{lem}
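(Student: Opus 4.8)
The plan is to argue by strong induction on $\mathit{size}(\pi)$, distinguishing cases according to the last rule of $\pi$. By Proposition~\ref{lingustically marked derivation} the derivation $\pi$ and all its subderivations are linguistically marked, so that every sequent occurring in $\pi$ carries an occurrence net; these nets are the bookkeeping device that drives the argument. If $\pi$ is an axiom, then $\vdash\Gamma$ is that axiom and is linguistically well-formed by hypothesis, and a single axiom is already a linguistic derivation. If the last rule is $(\parr)$, with premise $\vdash\Gamma',A,B$, then this premise has exactly the same free occurrences, with the same polarities, as $\vdash\Gamma',A\parr B$, hence is again linguistically well-formed; apply the induction hypothesis and re-attach $(\parr)$.

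For the rule $(\otimes)$, with premises $\vdash\Gamma_1,A$ and $\vdash B,\Gamma_2$, the point is that well-formedness of the conclusion $\vdash\Gamma$ forces well-formedness of \emph{both} premises. Indeed, each premise carries an occurrence net, so every variable has equally many left and right free occurrences in it; and the free occurrences of $\vdash\Gamma$ are the disjoint union of those of the two premises, so a variable with $a$ left (and $a$ right) occurrences in the first premise and $b$ of each in the second has $a+b\le 1$. Hence every variable lies in at most one premise and there has at most one left and one right occurrence, which is exactly well-formedness of each premise. Now apply the induction hypothesis to both premises and re-attach $(\otimes)$.

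For $(\forall)$ and the erasing case of the $(\exists)$ rule — the subcase where $(v_l,v_r)\in\sigma(\pi')$ for the immediate subderivation $\pi'$ — the premise $\vdash\Gamma^{-}$ can fail to be well-formed only when the eigenvariable, respectively the witness, $v$ already occurs free elsewhere, so that $v$ acquires two left and two right occurrences. In both cases $(v_l,v_r)\in\sigma(\pi')$, so Proposition~\ref{exists'} lets us rename precisely this link to a fresh variable, producing a linguistically well-formed sequent $\vdash\Gamma_0$ that is derivable by a cut-free derivation $\pi_0$ with $\mathit{size}(\pi_0)=\mathit{size}(\pi)-1$. The induction hypothesis applies to $\pi_0$; re-attaching the original rule — now with a genuinely fresh eigenvariable, so the side condition of $(\forall)$ holds — rebinds the two fresh occurrences and reconstitutes $\forall xA$, respectively $\exists xA$, giving a linguistic derivation of $\vdash\Gamma$. (If $v$ does not occur elsewhere, $\vdash\Gamma^{-}$ is already well-formed and no renaming is needed.)

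The substantial case is the gluing case of $(\exists)$: $\pi$ introduces $\exists xA$ from $\vdash\Gamma^{-}$ containing $A[\nicefrac{v}{x}]$, with $(v_l,v_r)\notin\sigma(\pi')$, so that $(v_l',v_r),(v_l,v_r')\in\sigma(\pi')$ for the two partner occurrences. This configuration is exactly the one described by the $(\exists')$ rule. Applying Proposition~\ref{exists'} twice — first renaming the link $(v_l,v_r')$ to a fresh $s$, then the still-present link $(v_l',v_r)$ to a fresh $t$ — produces a sequent $\vdash\Gamma_0$ in which $s$ has a unique left occurrence (at the position of $v_l$, inside the active formula) and a unique right occurrence, $t$ has a unique right occurrence (at the position of $v_r$, inside the active formula) and a unique left occurrence, $v$ no longer occurs, and every other free variable is as in $\vdash\Gamma$; thus $\vdash\Gamma_0$ is linguistically well-formed and is derivable by a cut-free $\pi_0$ with $\mathit{size}(\pi_0)=\mathit{size}(\pi')=\mathit{size}(\pi)-1$. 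One then checks directly from the definitions that the $(\exists')$ rule applied to $\vdash\Gamma_0$ at the active formula — binding the distinguished $s$- and $t$-occurrences inside it and renaming the other two back to $v$ — yields exactly $\vdash\Gamma$; since $v$ does not occur in $\Gamma_0$, its bound variables may be $\alpha$-renamed (Proposition~\ref{MLL1 alpha-equiv derivations}) so that the side conditions of $(\exists')$ are met, adjusting the outcome up to $\alpha$-equivalence if necessary. Appending this $(\exists')$ step to the linguistic derivation of $\vdash\Gamma_0$ furnished by the induction hypothesis finishes the proof. I expect this last case to be the main obstacle: one has to match the four roles $s_l,t_r,s_r,t_l$ of the $(\exists')$ rule against the gluing clause in the definition of the occurrence net and keep track of bound-variable hygiene when re-attaching $(\exists')$ — it is precisely for these two points that Propositions~\ref{MLL1 alpha-equiv derivations} and~\ref{exists'} are designed, and they also absorb the possibility that the eigenvariables and witnesses occurring in $\pi$ were not chosen fresh.
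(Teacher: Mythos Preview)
Your proof is correct and follows essentially the same approach as the paper: induction on $\mathit{size}(\pi)$, using Proposition~\ref{exists'} to rename links in the occurrence net so as to restore linguistic well-formedness of the premise, and invoking the $(\exists')$ rule for the gluing subcase of $(\exists)$. The paper's own proof is terser, treating explicitly only the ``main case'' where the last rule is $(\exists)$ with a non-well-formed premise (your four-occurrence situation), and leaving the axiom, $(\parr)$, $(\otimes)$, and $(\forall)$ cases implicit; your explicit treatment of these is a welcome elaboration, and your argument for the $(\otimes)$ case via the existence of occurrence nets on both premises is exactly the right observation. One minor remark: your hedge about $\alpha$-renaming before applying $(\exists')$ is harmless but unnecessary, since the positions $s_r,t_l$ were free occurrences of $v$ in the original premise and hence are not in the scope of any quantifier $Qv$; choosing $x$ to be the original bound variable then yields $\vdash\Gamma$ on the nose.
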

\begin{proof}
By Proposition~\ref{lingustically marked derivation} the derivation $\pi$ is linguistically marked. We use
   induction on  $\mathit{size}(\pi)$.

   The main case is when the last step in  $\pi$ is  the $(\exists)$ rule applied to a cut-free linguistically marked derivation $\pi'$ of some non-linguistically well-formed $\vdash\Gamma'$.
   This means that $\Gamma'$ has {\it four} free occurrences of some free variable $v$ (two of which are renamed when the quantifier is introduced in $\Gamma$).
   Let $v_l',v_r'$ be, respectively, the left and the right occurrences of $v$ in $\Gamma'$ that become bound in $\Gamma$, and let $v_l,v_r$ be, respectively, the remaining left and right occurrence of $v$ in $\Gamma'$ (inherited by $\Gamma$). We have that $\Gamma'$ contains a formula $A$ in which $v_l', v_r'$ are located and  $\Gamma$ is obtained from $\Gamma'$ by replacing $A$ with the formula $A'=\exists xA[\nicefrac{x}{v_r'},\nicefrac{x}{v_l'}]$.

   The derivation $\pi'$ still has an occurrence net $\sigma(\pi')$. If $(v_l',v_r')\in\sigma(\pi')$, then, by Proposition~\ref{exists'}, the sequent $\vdash\Gamma''$, where $\Gamma''=\Gamma'[\nicefrac{e}{v_l'},\nicefrac{e}{v_r'}]$  and $e\in\mathit{Var}$ is fresh, is derivable with the derivation of the same size as $\pi'$. But $\vdash\Gamma$ is equally well derivable from $\vdash\Gamma''$ by the ($\exists$) rule, and the  statement follows from the induction hypothesis.  Otherwise we have $(v_l,v_r'),(v_l',v_r)\in\sigma(\pi')$. Let $u,w\in\mathit{Var}$ be fresh and let $\Gamma''=\Gamma'[\nicefrac{u}{v_l},\nicefrac{u}{v_r'},\nicefrac{w}{v_l'},\nicefrac{w}{v_r}]$. Then $\vdash\Gamma''$ is, again by Proposition~\ref{exists'}, derivable with a derivation of the same size as $\pi'$. But $\vdash\Gamma$ can be obtained from $\vdash\Gamma''$ by the ($\exists'$) rule, and the statement follows. \end{proof}

Thus, adding the $(\exists')$ rule, we obtain a kind of intrinsic deductive system for the linguistically well-formed fragment. It might seem though that the   usual syntax of first order sequent calculus is not very natural for such a system. It is not even clear how to write the $(\exists')$ rule in the sequent calculus format concisely. Arguably, some other representation might be desirable.

\section{Tensor type calculus}\label{tensor types section}
We assume that we are given   an infinite  set $\mathit{Ind}$ of  {\it indices}. They will be used in all kinds of syntactic objects (terms, types, typing judgements) that we consider.

Indices will have upper or lower occurrences,  which may
be  free or bound. (The exact meaning of free and bound occurrences for different syntactic objects will be defined below.) Following the practice of first order logic, we will write $e^{[\nicefrac{i}{j}]}$, respectively $e_{[\nicefrac{i}{j}]}$, for the expression  obtained from $e$ by replacing the upper, respectively lower,
free
occurrence(s)
 of
 the
  index $j$ with    $i$, implicitly stating by this notation that the substituted occurrences of $i$ are free in the resulted expression.
We will write ${I^\bullet}(e)$ and ${I_\bullet}(e)$ for the sets of   upper and lower indices of $e$  respectively, and $I(e)$ for the pair $(I^\bullet(e),I_\bullet(e))$; similarly, $FI(e)=(FI^\bullet(e),FI_\bullet(e))$ for sets of free indices.

For   pairs of index sets  $W_i=(U_i,V_i)$, $i=1,2$, we write binary operations componentwise, e.g.
 $W_1\cup W_2=(U_1\cup U_2,V_1\cup V_2)$. We write $W_1\perp W_2$ for $U_1\cap U_2=V_1\cap V_2=\emptyset$. We also write $(i,j)\perp (U,V)$ for $i\not\in U$, $j\not\in V$ and $(i,j)\in (U,V)$ for $i\in U$, $j\in V$. Finally, for $W=(U,V)$, we write $W^\dagger=(V,U)$ and $|W|=U\cup V$.

\subsection{Tensor terms}\label{lang}
\begin{defi}\label{tensor terms}
Given  an alphabet $T$ of {\it terminal symbols}, {\it tensor terms} (over $T$) are the elements of  the free {\it commutative} monoid (written multiplicatively with the unit denoted as 1) generated by the sets
\be\label{generators}
\{[w]_i^j|~i,j\in\mathit{Ind},w\in T^*\}\mbox{ and }\{[w]|~w\in T^*\}
\ee
satisfying the constraint that any index has at most one lower and one upper occurrence.

Generating set (\ref{generators}) elements of the form $[w]_i^j$ are {\it elementary regular tensor terms} and those of the form $[w]$ are elementary {\it singular tensor terms}.
\end{defi}
  (The adjective ``tensor'' will often be omitted in the following.)

  It follows from the definition, that an index in a term cannot occur more than twice.
  \begin{defi} An index  occurring in a term $t$ is {\it free} in $t$ if it occurs in $t$ exactly once.
  An index occurring in  $t$ twice (once as an upper one and once as a lower one) is {\it bound} in $t$. A term is {\it $\beta$-reduced} if it has no bound indices.
  Terms $t,t'$ that can be obtained from each other by renaming bound indices are {\it $\alpha$-equivalent}, $t\equiv_\alpha t'$.
  \end{defi}
\begin{defi}\label{beta-red} {\it $\beta$-Reduction} of tensor terms is generated by the relations
\be\label{tensor term relations}
\begin{array}{c}
[u]^j_i\cdot[v]^k_j\to_\beta[uv]^k_i,\quad [w]^i_i\to_\beta[w],\quad[\epsilon]\to_\beta1,
\\
 {[} a_1\ldots a_n ]
 \to_\beta [a_na_1\ldots a_{n-1}]\mbox{ for }a_1,\ldots,a_n\in T,
\end{array}
\ee
where $\epsilon$ denotes the empty word. Terms related by $\beta$-reduction are {\it $\beta$-equivalent}, notation: $t\equiv_\beta s$.
\end{defi}
The meaning of $\beta$-reduction will be discussed shortly in Section~\ref{terms geometry}.

  It is easy to see that  free indices are  invariant under $\beta$-equivalence and that any term is $\beta$-equivalent to a $\beta$-reduced one. Note also that $\alpha$-equivalent terms are automatically $\beta$-equivalent, because their $\beta$-reduced forms coincide.
  \begin{defi}\label{misc. term definitions}
  A $\beta$-reduced term is {\it regular} if it is the product of elementary regular terms. A general tensor term is regular if it is $\beta$-equivalent to a $\beta$-reduced regular term. A tensor term that is not regular is {\it singular}.

  The elementary term  $[\epsilon]^j_i$  is    denoted  as $\delta^j_i$, more generally, we write $\delta^{i_1\ldots i_k}_{j_1\ldots j_k}$ for the term $\delta^{i_1}_{j_1}\cdots\delta^{i_k}_{j_k}$. When all indices $i_1,\ldots,i_k,j_1,\ldots,j_k$ are pairwise distinct, the term
  $\delta^{i_1\ldots i_k}_{j_1\ldots j_k}$ is called {\it Kronecker delta}.

  A term $t$ is {\it lexical} if it is regular and not $\beta$-equivalent to some term of the form $\delta^i_j\cdot t'$ with $(i,j)\in FI(t)$.
   \end{defi}
   Multiplication by  Kronecker deltas amounts to renaming indices:
if $t$ is a  term  with
$(i,j)\in FI(t)$  and
$(i',j')\perp I(t)$,
then $\delta_{i}^{i'}\cdot t\equiv_\beta t^{[\nicefrac{i'}{i}]}$ and
$\delta_{j'}^{j}\cdot t\equiv_\beta t_{[\nicefrac{j'}{j}]}$. For the case of $i=j$, we have from (\ref{tensor term relations}) that $\delta^i_i\equiv_\beta 1$.

As for lexical terms they can be equivalently characterized as $\beta$-equivalent to regular $\beta$-reduced ones of the form $[w_1]^{i_1}_{j_1}\cdots[w_k]^{i_k}_{j_k}$ where all $w_1,\ldots,w_k$ are nonempty.

\subsubsection{Geometric representation}\label{terms geometry}
\begin{figure}
\centering
\subfloat[ $ {[ab]_i^j[cd]_j^k\to_\beta[abcd]_i^k}$ ]
{
\quad
\begin{tikzpicture}
\begin{scope}[shift={(0,-.3)}]
\draw[thick,->](-.3,.3)--(.3,.3);
\node[below] at(-.3,.3){$i$};
\node[below] at(.3,.3){$j$};
\node [above]at (0,.3) {$\mbox{ab}$};
\end{scope}

\draw[thick,dashed,-](.3,0)
to [out=-90,in=180] (.55,.25) to  [out=0,in=-90](.8,0);

\begin{scope}[shift={(1.1,-.3)}]
\draw[thick,->](-.3,.3)--(.3,.3);
\node[below] at(-.3,.3){$j$};
\node[below] at(.3,.3){$k$};
\node [above]at (0,.3) {$\mbox{cd}$};
\end{scope}

\begin{scope}[shift={(1.2,0)}]
\node at (.7,0){\large $\to_\beta$};
\begin{scope}[shift={(.5,0)}]
\draw[thick,->](.7,0)--(1.3,.0);
\node [above]at (1.,0) {$\mbox{abcd}$};
\node[below] at(.7,0){$i$};
\node[below] at(1.3,0){$k$};
\end{scope}
\end{scope}
\end{tikzpicture}
\quad
}
\subfloat[${[abc]^i_i\to_\beta[abc]}$]
{
\quad
\begin{tikzpicture}
\begin{scope}[shift={(0,-.2)}]
\draw[thick,->](-.3,0)--(.3,0);
\draw [thick,dashed,-](.3,.0)to [out=90,in=0] (0,.5) to  [out=180,in=90](-.3,0);
\node[below] at(-.3,0){$i$};
\node[below] at(.3,0){$i$};
\node [font=\footnotesize, above]at (0,0) {$\mbox{abc}$};
\end{scope}
\node at (.7,-.2){\large $\to_\beta$};
\begin{scope}[shift={(1.6,0)}]
          \draw[thick,-](-.5,0) to  [out=-90,in=180] (0,-.5) to  [out=0,in=-90](.5,0)
           to  [out=90,in=0] (0,.5) to  [out=180,in=90](-.5,0);
           \node at (0,-.2) {$\mbox{abc}$};
\end{scope}
\end{tikzpicture}
\quad
}
\subfloat[$\delta^i_i\to 1$]
{
\begin{tikzpicture}
\begin{scope}[shift={(5,0)}]
\draw[thick,-](-.5,0) to  [out=-90,in=180] (0,-.5) to  [out=0,in=-90](.5,0)
           to  [out=90,in=0] (0,.5) to  [out=180,in=90](-.5,0);
           \node at (.9,-.1){$\to_\beta$};
           \node at (1.3,0){\large${\emptyset}$};
\end{scope}
\end{tikzpicture}
}
\caption{Meaning of $\beta$-reductions}
\label{reductions}
\end{figure}
We think of regular terms  as bipartite graphs having indices as vertices  and edges labeled with words, the direction of edges being from lower indices to upper ones.

A regular term  $[w]_i^j$, $i\not=j$, corresponds to a single edge from $i$ to $j$ labeled with $w$,
the product of two terms without common indices is  the disjoint union of the corresponding graphs,
and a term  with repeated indices corresponds to the graph obtained by gluing edges along matching vertices.  The  unit $1$ corresponds to the empty graph. As for singular terms, such as $[w]^i_i$,  they correspond to closed loops (with no vertices) labeled with {\it cyclic} words (this explains the last relation in (\ref{tensor term relations})). These arise when edges are glued cyclically.
Singular terms are pathological, but we need them for consistency of definitions.
Note however our convention that when there are no labels, loops evaporate ($\delta^i_i\equiv_\beta 1$), so that the singularity does not arise.
The correspondence between terms and edge-labeled graphs is  illustrated in Figure~\ref{reductions}.  We emphasize that this geometric representation is an {\it invariant} of $\beta$-equivalence.

Finally, we note that a Kronecker delta is a bipartite graph with  no edge labels, and a lexical term, vice versa, is a graph whose every edge has a nonempty label.

 (We also remark that in the system introduced in \cite{Slavnov_tensor}, which preceded this work, empty loops were not factored out. This was adequate for problems considered there.)

\subsubsection{Remarks on binding and multiplication}
We
emphasize that binding in terms is {\it global}, not restricted to some ``scope''. If $t$ is a subterm (a factor) in a term $s$ and $t$ has a  bound index $i$, then $i$ is bound everywhere in $s$. In particular, if $t$ and $t'$ both have a bound index $i$ then the expression $tt'$ simply is not a term (it has four occurrences of $i$). In general, because term multiplication is only partially defined (concretely, the expression $ts$ is a term if $I(t)\perp I(s)$), we have to be careful when commuting multiplication with $\beta$- or $\alpha$-equivalence. It can be that $t\equiv_\beta t'$, but for some term $s$ the product $ts$ is well-defined, while $t's$ is not a term at all because of index collisions.

As a compensation,  multiplication of terms is strictly associative whenever defined: if $(ts)k$ is a term then $t(sk)$ is also a term and the two are strictly equal. Also, if $t\to_\beta t'$, $s\to_\beta s'$, $k\to_\beta k'$ and the expression $tsk$ is a term then $t's'k'$ is also a term and
$tsk\to_\beta t's'k'$.  Passing to equivalence classes we have the following: if $t\equiv_\beta t'$, $s\equiv_\beta s'$, $k\equiv_\beta k'$ and both expressions $tsk$, $t's'k'$ are well-defined terms, then $tsk\equiv_\beta t's'k'$; the same property holds for $\alpha$-equivalence.

(On the other hand, one could argue that some amount of non-associativity, in the case of linguistic applications,  might be an {\it advantage} rather than a drawback. Such an observation might suggest possible directions for refinements and modifications for the system introduced in  this paper.)

\subsection{Tensor formulas and types}
\begin{defi}
Given a set $\mathit{Lit}_+$ of   {\it positive literals}, where every  element $p\in \mathit{Lit}_+$ is assigned a {\it valency} $v(p)\in \mathbb{N}^2$,
the set $\widetilde{\mathit{Fm}}$ of {\it tensor pseudoformulas} is built  according to the grammar in Figure~\ref{ETTC_lang},
where  $\mathit{Lit}_-$ and  $\mathit{Lit}$ are, respectively, the set of {\it negative literals} and of all literals. The convention for negative literals is that $v(\overline{p})=(m,n)$ if  $v(p)=(n,m)$.
The set $\widetilde{\mathit{At}}$ is the set atomic pseudoformulas.
\end{defi}
\begin{figure}
  \centering
{
$\begin{array}{c}
\mathit{Lit}_-=\{\overline{p}|~p\in \mathit{Lit}_+\}, \quad\overline{\overline{p}}=p\mbox{ for }p\in \mathit{Lit}_+, \quad \mathit{Lit}=\mathit{Lit}_+\cup \mathit{Lit}_-.\\[.2cm]
\widetilde{\mathit{At}}=\{p_{j_1\ldots j_n}^{i_1\ldots i_m}|~p\in \mathit{Lit},v(p)=(m,n),i_1,\ldots, i_m,j_1,\ldots,j_n\in\mathit{Ind}\}.\\[.2cm]
\widetilde{\mathit{Fm}}::=\widetilde{\mathit{At}}|(\widetilde{\mathit{Fm}}\otimes\widetilde{\mathit{Fm}})|
(\widetilde{\mathit{Fm}}\parr\widetilde{\mathit{Fm}})|\nabla^i_j\widetilde{\mathit{Fm}}
|\triangle^i_j\widetilde{\mathit{Fm}},~
i,j\in\mathit{Ind},~i\not=j.\\[.2cm]
\overline{p^{i_1,\ldots,i_n}_{j_1,\ldots,j_m}}=\overline{p}_{{i_n},\ldots, {i_1}}^{{j_m},\ldots, {j_1}}.
\\[.2cm]
\overline{A\otimes B}=\overline{B}\wp\overline{A},\quad
\overline{A\wp B}=\overline{B}\otimes\overline{A},\quad
\overline{\nabla_i^jA}= \triangle^i_j\overline{A},\quad
\overline{\triangle_i^jA}= \nabla^i_j\overline{A}.
\end{array}
$
}  
\caption{{\bf ETTC} Language}
\label{ETTC_lang}
\end{figure}
 Duality $\overline{(.)}$ is not a connective or operator, but is definable.
The symbols $\nabla,\triangle$ are {\it binding operators}.
They bind indices exactly in the same way as quantifiers bind  variables.
The operator $Q\in\{\nabla,\triangle\}$ in front of an expression $Q^i_jA$ has $A$ as its scope and binds all lower occurrences of $i$, respectively, upper occurrences of $j$ in $A$ that are not already bound by some other operator.
\begin{defi}
A  pseudoformula $A$ is {\it well-formed} when  any index has at most one  free upper and one free lower occurrence in $A$, and every binding operator binds exactly one lower and one upper index occurrence.  A well-formed pseudoformula $A$ is a {\it tensor formula} or a {\it pseudotype}. If moreover  $FI^\bullet(A)\cap FI_\bullet(A)=\emptyset$ then $A$ is a {\it tensor type}.
\end{defi}
Note that definitions of free and bound indices for tensor terms and tensor formulas are {\it different}. In particular, unlike  terms, general tensor formulas (that are not tensor types) may have repeated free indices (i.e. an index may have both an upper and a  lower free occurrence). Also, unlike the case of terms, binding in tensor formulas is {\it local}, visible only in the scope of a binding operator.

   Tensor formulas that  can be obtained  from each other by renaming bound indices are {\it $\alpha$-equivalent}. The set of atomic types (i.e. coming from  atomic pseudoformulas) is denoted as $\mathit{At}$.

\begin{defi}
A {\it tensor  pseudotype context}  $\Gamma$, or, simply, a {\it tensor context} is a finite multiset of pseudotypes such that for any two distinct $A,B\in \Gamma$ we have that
$FI(A)\perp FI(B)$.
 The pseudotype context $\Gamma$ is a {\it type context} if $FI^\bullet(\Gamma)\cap FI_\bullet(\Gamma)=\emptyset$, where  $FI(\Gamma)=\bigcup\limits_{A\in\Gamma}FI(A)$.

 An {\it ordered tensor context} $\Gamma$ is defined identically with the difference that $\Gamma$ is a sequence rather than a multiset.
\end{defi}
Again, in a pseudotype context an index can have two free occurrences, once as a lower one and once as an upper one. For example the expression $a^i,\overline{a_i}$ is a legitimate pseudotype context, but not a type context.

\subsection{Sequents and typing judgements}
\begin{defi}\label{tensor sequent}
A   {\it pseudotyping judgement}  $\Sigma$ is an expression of the form $ t:: \Gamma$, where
 $\Gamma$ is a tensor   context
  and $t$ is a tensor term such that
  \be\label{judgement}
   I(t)\perp FI(\Gamma),\mbox{ } FI(t)\cup FI(\Gamma)=(FI(t)\cup FI(\Gamma))^\dagger.
    \ee
   When $\Gamma$ is a type (not just pseudotype) context, we say that $\Sigma$ is a {\it tensor typing judgement}.

   A {\it tensor sequent} is an expression of the form $\vdash\Sigma$, where $\Sigma$ is a pseudotyping judgement.

   An {\it ordered pseudotyping judgement} and {\it ordered tensor sequent} are defined identically with the difference that the tensor context should be ordered.
  \end{defi}
  (In the paper \cite{Slavnov_tensor} preceding this work we used a slightly different notation for tensor sequents with the term to the left of the turnstile.)

  Spelling out defining relation (\ref{judgement}): if we erase from $\Sigma$ all bound indices of $\Gamma$, then every remaining index has exactly one upper and one lower occurrence. When $\Sigma$ is a  {tensor typing judgement}, we have $FI(t)=FI(\Gamma)^\dagger$, i.e. every  index occurring in $\Sigma$ freely has exactly one free occurrence in $\Gamma$ and one in $t$. In this case we read $\Sigma$  as ``$t$ has type $\Gamma$''. A pseudotyping judgement is not a genuine typing judgement if there are repeated free indices  in the tensor context (i.e. some index in the tensor context  has both an upper and a lower free occurrence).
   When $t=1$ we write the  judgement $\Sigma$  simply as $\Gamma$. When $\Gamma$ consists of a single formula $F$, we write $\Sigma$ as $t:F$.
      When  $t$ is $\beta$-reduced or lexical we say that $\Sigma$  is, respectively, $\beta$-reduced or lexical.

      The ordered and unordered contexts and sequents correspond to two possible formulations of sequent calculus. We will generally use the unordered version, but for geometric representation of tensor sequents and sequent rules we need to consider explicit Exchange rule. The definitions below apply to both versions.

\begin{defi}\label{tensor sequent relations}
If $t,t'$ are terms with $t\equiv_\beta t'$ then the pseudotyping judgements $t::\Gamma$ and $t'::\Gamma$ are {\it $\beta$-equivalent}, $t::\Gamma\equiv_\beta t'::\Gamma$.

 {\it $\alpha$-Equivalence  of pseudo-typing judgements} is generated by the following:
\[
\begin{array}{c}
t\equiv_\alpha  t', F\equiv_\alpha F'\Rightarrow t::\Gamma,F\equiv_\alpha t'::\Gamma,F',\\
i\in FI^\bullet(t)\cup FI^\bullet(\Gamma), ~j\mbox{ fresh }\Rightarrow
t::\Gamma\equiv_\alpha (t::\Gamma)^{[\nicefrac{j}{i}]}_{[\nicefrac{j}{i}]}.
\end{array}
\]
{\it $\eta$-Expansion of pseudo-typing judgements} is the transitive closure of the relation defined by
\[
i\in FI^\bullet(\Gamma)\cap FI_\bullet(\Gamma),~j\mbox{ fresh }\Rightarrow
t::\Gamma\to_\eta\delta^i_{j} t::\Gamma^{[\nicefrac{j}{i}]},~t::\Gamma\to_\eta\delta_i^jt::\Gamma_{[\nicefrac{j}{i}]}.
\]
{\it $\eta$-Reduction of judgements} is the opposite relation of $\eta$-expansion,  {\it $\eta$-equivalence} ($\equiv_\eta$) is the symmetric closure of $\eta$-expansion.

Tensor sequents $\vdash\Sigma$, $\vdash\Sigma'$ are, respectively $\alpha$-, $\beta$- or $\eta$-equivalent iff $\Sigma$ and $\Sigma'$ are.
\end{defi}
      $\eta$-Expansion removes   from the tensor context
        free index repetitions
       by  renaming them and, simultaneously,   adds Kronecker deltas to the term as a compensation. Typing judgements have no repeated free indices in the tensor context and  are, thus,  {\it $\eta$-long} in the sense that they cannot be $\eta$-expanded. A general pseudotyping judgement could be thought as a shorthand notation for its $\eta$-expansion; the sequent $\vdash a^i,\overline a_i$ ``morally'' is a short for $\vdash \delta^j_i::a^i,\overline a_j$. Lexical pseudotyping judgements, on the contrary, are {\it $\eta$-reduced} in the sense that they are not $\eta$-expansions of anything, for having no Kronecker deltas in the term. Lexical  typing judgements, at once ${\eta}$-long and $\eta$-reduced, will have natural interpretation as nonlogical axioms.
   It is easy to see that any pseudotyping judgement has (many $\alpha\beta$-equivalent) $\eta$-long and $\eta$-reduced forms.

\subsubsection{Geometric representation}

 \begin{figure}[htb]
\centering
{
\quad\quad\quad\quad
{
         \begin{tikzpicture}[xscale=2,yscale=2]
    \begin{scope}[shift={(.5,0)}]
        \draw[thick,->](1.25,0) to  [out=-90,in=0] (.625,-.5) to  [out=180,in=-90](0,0);

        \node at(.625,-.4){$\gamma\delta$};
        \node[font=\footnotesize] at(.44,-.15){$\beta\alpha$};
        \draw[thick,->](1,0) to  [out=-90,in=0] (.75,-.25) to  [out=180,in=-90](0.5,0);
        \draw[thick,<-](0.75,0) to  [out=-90,in=0] (.5,-.25) to  [out=180,in=-90](.25,0);

        \draw [fill] (.25,0) circle [radius=0.05];
        \draw [fill] (1,0) circle [radius=0.05];
        \draw [fill] (1.25,0) circle [radius=0.05];

        \node [above]at(0,0){$\overbrace{i}$};
                \node at (0,.37)  {${a}$};
        \node [above]at(0.25,0){$j$};
        \node [above]at(0.5,0){$\overbrace{~\quad k\quad}$};
        \node [above]at(0.75,0){$l$};

                \node at (0.5,.4)  {${b}$};

        \node [above]at(1,0){$k$};
        \node [above]at(1.125,0.1){$\overbrace{~\quad\quad~}$};
        \node [above]at(1.25,0){$m$};

                \node at (1.125,.35)  {${c}$};

        \end{scope}
        \node at (2.5,-.15){\Large $\Leftrightarrow$};
         \node[right] at (3.1,-.15) {$[\gamma\delta]_m^i\cdot [\beta\alpha]_j^l:: a_i\otimes b^j_{kl}, c^{km}$};

        \end{tikzpicture}
}
}
\caption{Tensor sequent}
\label{Typing_judgement_geometrically}
\end{figure}
 If tensor terms can be thought as edge-labeled graphs, then {\it ordered} pseudotyping judgements or tensor sequents correspond to particular pictorial representations of these graphs.
 Especially natural the representation is for genuine {\it typing} judgements, so we discuss it first.

Let $\Sigma=t:: \Gamma$, where $t$ is $\beta$-reduced, be an ordered typing judgement, i.e. such that $FI(t)=FI(\Gamma)^\dagger$. (When  the term is not $\beta$-reduced, we replace it with its $\beta$-reduced form.) We interpret free indices  of $\Gamma$ as vertices. For a pictorial representation we equip  the set of vertices with a particular ordering corresponding to their positioning in $\Gamma$. Say, indices occurring in the same formula are ordered from left to right, from   top to bottom (first come the upper indices, then come the lower ones), and the whole set of free indices occurring in $\Gamma$ is ordered lexicographically according to the ordering of formulas in $\Gamma$, i.e. indices occurring in $A$ come before indices occurring in $B$ if $A$ comes before $B$ in $\Gamma$. We  depict them  aligned, say, horizontally in this order.

 The edge-labeled graph on these vertices is constructed as follows. Free indices  in $\Gamma$  are in bijection with free indices/vertices of $t$ and we connect them with labeled edges corresponding to factors of $t$. That is, for any index $\mu\in FI^\bullet(\Gamma)$ we have that $\mu\in FI_\bullet(t)$ and there is unique $\nu\in FI^\bullet(t)$ such that $t$ has the form $t=[w]^\nu_\mu t'$, so that $t$, seen as a graph, contains and edge from $\mu$ to $\nu$ labeled with the word $w$. It follows that  $\nu\in FI_\bullet(\Gamma)$, and we draw an edge from $\mu$ to $\nu$ with the label $w$. In this way every index/vertex in $FI(\Gamma)$ becomes adjacent to a (unique) edge.

  The constructed graph is a specific geometric representation of the graph corresponding to $t$, the representation being induced by a particular ordering of vertices.  Note that the direction of edges is from upper indices of $\Gamma$ to lower ones (upper indices of $\Gamma$ correspond to lower indices of $t$ and vice versa). Also, note that bound indices of $\Gamma$ are not in the picture.

 When $\Sigma=t:: \Gamma$ is only a pseudo-typing judgement, i.e. there are repeated free indices in $\Gamma$, we treat it as a short expression for its $\eta$-long expansion $t'::\Gamma'$. Any pair of repeated free index occurrences in $\Gamma$ corresponds to a Kronecker delta, i.e. an edge in $t'$, connecting two {\it distinct} vertices and carrying  no label. In this case, for geometric representation of $\Sigma$, we take as  vertices the set of {\it free index occurrences} rather than indices in $\Gamma$ and order them in the same way a above. In the picture, we connect every pair of repeated free indices/vertices of $\Gamma$ with an edge carrying no label and directed from the upper occurrence to the lower one. The prescription for the remaining indices has already been described.
    An example of a concrete  pseudotyping judgement  representation is given  in Figure~\ref{Typing_judgement_geometrically}.

 Observe that $\alpha\beta\eta$-equivalent pseudo-typing judgements or sequents have identical geometric representation (up to vertex labeling). Usually it is convenient to erase indices from the picture, avoiding notational clutter.
  For example, the pseudotyping judgement
  $[xy]_{m'}^{i'}\cdot [ba]_j^l:: a_{i'}\otimes b^j_{k' l}, c^{k'm'}$
   is $\alpha$-equivalent to the one in Figure~\ref{Typing_judgement_geometrically},
  while
 $[xy]_m^i\cdot [ba]_j^l\cdot\delta_k^\mu::a_i\otimes b^j_{\mu l}, c^{km}$ is
 an
  $\eta$-expansion
  of the latter.
 Also, we note that lexical typing judgements correspond to pictures where every edge has a non-empty label.

\subsection{Sequent calculus}
\subsubsection{Rules}
We defined tensor sequents in order to define tensor grammars, which generate languages from a given lexicon of typing judgements.   But at first, we introduce the underlying purely logical system of {\it extended tensor type calculus} ({\bf ETTC}).
(The title ``extended'', introduced in \cite{Slavnov_tensor},  refers to usage of binding operators, which extend plain types of {\bf MLL}.)
The system of {\bf ETTC}, being cut-free, does not use any non-logical axioms and is independent from  terminal alphabets.

  Our default formulation  involves {\it unordered} tensor sequents. For geometric representation of the rules one needs an ordered formulation with  the explicit rule for exchange; this will be discussed in the next subsection.
\begin{defi}
The system of {\it extended tensor type calculus} ({\bf ETTC}) is given by the rules in Figure~\ref{ETTC}, where sequents are unordered, and it is assumed  that all expressions are well-formed,
  i.e. there are no forbidden index repetitions and upper occurrences match lower ones.
  \end{defi}

  The requirement that all expressions in Figure~\ref{ETTC} must be well-formed is not to be overlooked. It imposes severe restrictions on the rule premises. These are spelled out in Figure~\ref{restrictions} (recall our shorthand notation for pairs of sets introduced in the beginning of this section).
  \begin{figure}
\centering
\subfloat[Rules
\label{ETTC}
]
{
    $
    \begin{array}{c}
        \cfrac{A\in \mathit{At}}{\vdash A,\overline{A}}({\rm{Id}})
    ~~~
    \cfrac{\vdash t::\Gamma,A~ \vdash s::
    \overline{A},\Theta}{\vdash ts::\Gamma,\Theta}({\rm{Cut}})
    ~~~
    \cfrac{\vdash t::\Gamma,~t\equiv_\beta t'}{\vdash t'::\Gamma}~(\equiv_\beta)
        \\[.4cm]
    \cfrac{ \vdash t:: \Gamma}{\vdash \delta^{i}_{j} t::\Gamma^{[\nicefrac{j}{i}]}}(\alpha\eta^\to)
        ~~~
        \cfrac{\vdash\delta^{i}_{j} t:: \Gamma^{[\nicefrac{j}{i}]}}{ \vdash t::\Gamma}(\alpha\eta^\leftarrow)
        ~~~
        \cfrac{\vdash t::\Gamma}{\vdash\delta^{j}_{i} t:: \Gamma_{[\nicefrac{j}{i}]}}(\alpha\eta_\to)
        ~~~
    \cfrac{\vdash\delta^{i}_{j} t:: \Gamma_{[\nicefrac{i}{j}]}}{t \vdash \Gamma}(\alpha\eta_\leftarrow)
        \\[.4cm]
     \cfrac{\vdash t::
        \Gamma,A,B}{\vdash t::
        \Gamma,A\parr B}
                (\parr)
       ~~~
        \cfrac{\vdash t::\Gamma, A~\vdash
        s::B,\Theta}{\vdash ts::\Gamma,A\otimes B,\Theta}(\otimes)
        ~~~
        \cfrac{
            \vdash\delta_\nu^\mu t ::\Gamma,A
            }
            {
            \vdash t::\Gamma,\nabla^\mu_\nu A
            }(\nabla)
        ~~~
        \cfrac{
            \vdash t::A,\Gamma
            }
            {
            \vdash\delta^\nu_\mu t:: \triangle_\nu^\mu A,\Gamma
            }(\triangle)
    \end{array}
        $
}
\\[.3cm]
\subfloat[Restrictions on rules\label{restrictions}]
{
$
\begin{array}{c}
(\alpha\eta^\to):~
i\in FI^\bullet(\Gamma),~ j\not\in |I(t)|\cup|FI(\Gamma)|.
\quad\quad(\alpha\eta_\to):~
i\in FI_\bullet(\Gamma),~ j\not\in |I(t)|\cup|FI(\Gamma)|.\\[.1cm]
(\otimes):~
|I(t)|\cap |I(s)|=|FI(\Gamma,A)|\cap |FI(B,\Theta)|=\emptyset,~ I(t)\perp FI(B,\Theta),~ I(s)\perp FI(\Gamma,A).\\[.1cm]
(\mbox{Cut}):~I(t)\perp I(s),~ FI(\Gamma)\perp FI(\Theta),~FI(t)\cap FI(s)^\dagger\subseteq FI(\Gamma)^\dagger\cap FI(\Theta)\subseteq FI(A).\\[.1cm]
(\nabla),(\triangle):~(\nu,\mu)\in FI(A).
\end{array}
$
}
\caption{{\bf ETTC} sequent calculus}
\end{figure}
\begin{prop}
The rules in Figure~\ref{ETTC} transform well-formed tensor sequents to well-formed tensor sequents iff the premises satisfy the conditions in Figure~\ref{restrictions}.
\end{prop}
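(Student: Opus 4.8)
The plan is a verification by cases over the rules of Figure~\ref{ETTC}. Fix a rule together with an instance whose premises are well-formed tensor sequents; we must show that the conclusion $\vdash t'::\Gamma'$ is a well-formed tensor sequent if and only if the side conditions of Figure~\ref{restrictions} attached to that rule hold. It is convenient to unfold ``well-formed'' into four clauses: (i) every pseudoformula occurring in $\Gamma'$ is well-formed, i.e.\ each index has at most one free upper and one free lower occurrence in it and each binding operator binds exactly one upper and one lower occurrence; (ii) $\Gamma'$ is a tensor context, i.e.\ $FI(A)\perp FI(B)$ for distinct $A,B\in\Gamma'$; (iii) $t'$ is a well-formed tensor term, i.e.\ no index occurs more than once as upper and once as lower; and (iv) the compatibility relation~(\ref{judgement}) holds, i.e.\ $I(t')\perp FI(\Gamma')$ and $FI(t')\cup FI(\Gamma')$ is closed under $(\cdot)^\dagger$. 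For each rule the argument amounts to tracking how the sets of free and bound upper/lower index occurrences of the conclusion are obtained from those of the premises and then matching clauses (i)--(iv) against the stated condition, in both directions.

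The rules carrying no side condition --- $(\mathrm{Id})$, $(\equiv_\beta)$, $(\parr)$, $(\alpha\eta^\leftarrow)$ and $(\alpha\eta_\leftarrow)$ --- are the simplest. For $(\mathrm{Id})$ one uses that $A$ is an atomic \emph{type}, so $FI^\bullet(A)\cap FI_\bullet(A)=\emptyset$; hence $FI(\overline A)=FI(A)^\dagger$ is disjoint from $FI(A)$, and $FI(A)\cup FI(\overline A)$ is automatically closed under $(\cdot)^\dagger$. For $(\parr)$ one uses that $A$ and $B$ were distinct members of the premise context, so $FI(A)\perp FI(B)$, which is exactly what makes $A\parr B$ well-formed, while the term and context conditions are literally unchanged. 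For $(\equiv_\beta)$ one uses that free indices are a $\beta$-invariant, so $FI(t')=FI(t)$, and the sequent being an expression at all already forces $t'$ to be a term with $I(t')\perp FI(\Gamma)$. For the reverse renaming rules $(\alpha\eta^\leftarrow)$, $(\alpha\eta_\leftarrow)$ the premise already contains the Kronecker delta witnessing the balance, and matching an instance against the rule's pattern builds in, via the notational conventions for substitution, that the displayed renaming is a well-defined operation; one then checks (iii) and (iv) directly. The single-premise rules that \emph{do} carry a condition --- $(\alpha\eta^\to)$, $(\alpha\eta_\to)$, $(\nabla)$, $(\triangle)$ --- are handled the same way: the listed condition ($i\in FI^\bullet(\Gamma)$ or $i\in FI_\bullet(\Gamma)$ together with freshness of the new index $j$ for the $\alpha\eta$ rules, and $(\nu,\mu)\in FI(A)$ for $(\nabla)$ and $(\triangle)$) is precisely what guarantees that the indicated substitution or binding is defined and that, after adjoining or deleting the Kronecker delta and renaming or binding a single upper/lower pair, clauses (i)--(iv) are restored; conversely, if the occurrence to be renamed or bound does not exist the conclusion is not even a legal expression, and if $j$ is not fresh some index occurs too often or $I(t')$ and $FI(\Gamma')$ fail to be disjoint.

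The substantial cases are the two-premise rules $(\otimes)$ and $(\mathrm{Cut})$, and I expect $(\mathrm{Cut})$ to be the main obstacle. For $(\otimes)$ one must check that $\Gamma,A\otimes B,\Theta$ is a tensor context and that $ts::\Gamma,A\otimes B,\Theta$ satisfies~(\ref{judgement}); a direct computation shows this holds exactly when the three clauses listed for $(\otimes)$ do, namely when $t$ and $s$ share no index, $\Gamma,A$ and $B,\Theta$ share no free index, $I(t)\perp FI(B,\Theta)$ and $I(s)\perp FI(\Gamma,A)$ --- each clause being needed to rule out a specific collision in the merged term or context. For $(\mathrm{Cut})$ the delicate point is the double inclusion $FI(t)\cap FI(s)^\dagger\subseteq FI(\Gamma)^\dagger\cap FI(\Theta)\subseteq FI(A)$. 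The bookkeeping is that the free occurrences of $A$ in the first premise, which are matched by the free occurrences of $\overline A=A^\dagger$ in the second, are exactly the occurrences that get ``consumed'' by the cut and must therefore be excluded from the residual index pools of $ts$ and of $\Gamma,\Theta$; the left inclusion says that any index shared between the two terms in complementary polarity is such a consumed index, and the right inclusion says the same for any index shared between $\Gamma$ and $\Theta$. One shows, using the well-formedness of the two premises, that these two inclusions together are equivalent to the conjunction of: $ts$ is a well-formed term, $\Gamma,\Theta$ is a tensor context, and $I(ts)\perp FI(\Gamma,\Theta)$ with $FI(ts)\cup FI(\Gamma,\Theta)$ closed under $(\cdot)^\dagger$. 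Carrying out this equivalence in both directions, with care about the case where indices of $A$ coincide with indices occurring elsewhere, completes the proof.
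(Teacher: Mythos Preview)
Your proposal is correct and follows essentially the same case-by-case verification as the paper. The organization differs slightly: the paper treats only the rules that carry a condition in Figure~\ref{restrictions} and spends most of its effort on the \emph{necessity} direction for $(\alpha\eta^\to)$, carefully splitting into the two subcases $i\in FI_\bullet(t)$ versus $i\notin|I(t)|$ to force $i\in FI^\bullet(\Gamma)$, while dismissing $(\mathrm{Cut})$ as ``similar to the $(\otimes)$ case''; you, by contrast, treat $(\mathrm{Cut})$ as the substantial case and sketch the $(\alpha\eta)$ rules more briefly. Both routes are fine, but note that your remark ``if the occurrence to be renamed does not exist the conclusion is not even a legal expression'' is not quite accurate as stated: $\Gamma^{[\nicefrac{j}{i}]}$ is still a legal expression when $i\notin FI^\bullet(\Gamma)$, and the contradiction instead comes from the extra upper $i$ introduced by $\delta^i_j$ having no lower match in the conclusion, which is exactly the analysis the paper spells out.
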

\begin{proof}
\hfill
\begin{itemize}
\item $(\alpha\eta^\to)$, $(\alpha\eta_\to)$: Sufficiency is obvious. The rule adds two matching occurrences of the fresh index $j$, and one free occurrence of $i$ in the tensor context gets moved to the term without changing the polarity. Let us check necessity. For definiteness, consider the $(\alpha\eta^\to)$ rule, the other case being identical.

There are two possibilities for the term in the conclusion of the rule to be well-formed. Either $i\in FI_\bullet(t)$,
$i\not\in I^\bullet(t)$ or
  $i\not\in |I(t)|$. If the first possibility holds, then it must be that $i\in FI^\bullet(\Gamma)$ for the premise to be well-formed. Assume that the second possibility holds.  Then it must be that $i\in FI_\bullet(\Gamma^{[\nicefrac{j}{i}]})=FI_\bullet(\Gamma)$ for the conclusion of the rule to be well-formed. But $i\in FI_\bullet(\Gamma)$ and $i\not\in |I(t)|$ implies, again, $i\in FI^\bullet(\Gamma)$ or the premise is not well-formed. Thus, in both cases the  condition $i\in FI^\bullet(\Gamma)$ must hold.

  Consider the second condition, $j\not\in |I(t)|\cup|FI(\Gamma)|$.  We note that the conclusion of the rule has a free upper occurrence of $j$ located in the tensor context (the one replacing $i$ in $\Gamma$) and an occurrence of $j$ in the term located in the factor $\delta^i_j$. If $j\in |FI(\Gamma)|$  then there is a third occurrence of $j$ in the conclusion located in the tensor context, and if  $j\in|I(t)|$ then there is a third occurrence of $j$ in the conclusion located in the term. In both cases the conclusion is not well-formed.
\item $(\otimes):$ Sufficiency is very easy. We also need to check that all conditions are indeed necessary. Consider, for example, the first one that
$|I(t)|\cap |I(s)|=\emptyset$.
Assume, for a contradiction, that  $i\in |I(t)|\cap |I(s)|$. For the term $ts$ in the conclusion to be well-formed, it must be that $i$ occurs in $t$ and in $s$ freely (without repetitions) and with opposite polarities. It follows that $i$ also has free occurrences in the contexts $\Gamma,A$ and $B,\Theta$, otherwise the premises of the rule are not well-formed pseudo-typing judgements. But then the conclusion of the rule is not well-formed, because $(i,i)\in I(ts)\cap FI(\Gamma,A\otimes B,\Theta)$. The other  conditions are obtained similarly.
\item $(\mbox{Cut})$: Similar to the $(\otimes)$ case.
\item $(\nabla),(\triangle)$: The condition is necessary simply for the formula in the conclusion to be well-formed. For sufficiency we observe that there are also  an {\it upper} and a {\it lower} free occurrence of $\mu$ and  $\nu$ respectively in the premise, and a simple analysis shows that indices in the conclusion match correctly. \qedhere
\end{itemize} \end{proof}

\subsubsection{Geometric meaning}
\begin{figure}[htb]
\centering
\subfloat[Identity/structure\label{structural rules picture}]
{
    \scalebox{.8}
      {
         \tikz[xscale=.5]
            {
            \begin{scope}[shift={(2,1.25)}]
             \draw[thick,-](0.5,0) to  [out=-90,in=180] (2,-.75) to  [out=0,in=-90](3.5,0);
          \draw[thick,-](0,0) to  [out=-90,in=180] (2,-1) to  [out=0,in=-90](4,0);
          \node[above] at (.25,0){$A$};
          \node[above] at (3.75,0){$\overline{A}$};
          \node at (4.75,-.5){$({\rm{Id}})$};

                \draw [fill] (.1,0) circle [radius=0.01];
                \draw [fill] (.3,0) circle [radius=0.01];
                \draw [fill] (.2,0) circle [radius=0.01];
                \draw [fill] (.4,0) circle [radius=0.01];

                \draw [fill] (3.6,0) circle [radius=0.01];
                \draw [fill] (3.7,0) circle [radius=0.01];
                \draw [fill] (3.8,0) circle [radius=0.01];
                \draw [fill] (3.9,0) circle [radius=0.01];
            \end{scope}

              \begin{scope}[shift={(12.25,0)}]
             \begin{scope}[shift={(1,0)}]

             \draw[draw=black](-5,.25)rectangle(-3.5,.75);\node at(-4.25,.5){$t$};
              \draw[draw=black](-3,.25)rectangle(-1.5,.75);\node at(-2.25,.5){$s$};
             \draw[thick,-](-4.75,.75)--(-4.75,1.25);

             \node[above] at(-4.75,1.25) {$\Gamma$};
             \draw[thick,-](-3.75,.75)--(-3.75,1.25);
             \node[above] at(-3.75,1.25) {$A$};

             \draw[thick,-](-1.75,.75)--(-1.75,1.25);

             \node[above] at(-1.75,1.25) {$\Theta$};
             \draw[thick,-](-2.75,.75)--(-2.75,1.25);
             \node[above] at(-2.75,1.25) {$\overline{A}$};
             \node at (-.75,.5) {$\Rightarrow$};
             \end{scope}

            \begin{scope}[shift={(-3.75,0)}]
                 \draw[draw=black](5,.25)rectangle(6.5,.75);\node at(5.75,.5){$t$};
             \draw[thick,-](5.25,.75)--(5.25,1.25);

             \node[above] at(5.25,1.25) {$\Gamma$};
             \draw[thick,-](5.75,.75)to[out=90,in=180](6.25,1.25)
             to[out=0,in=90](7.75,.75);

                     \draw[draw=black](7.,0.25)rectangle(8.5,.75);\node at(8,.5){$s$};

             \draw[thick,-](8.25,.75)--(8.25,1.25);

             \node[above] at(8.25,1.25) {$\Theta$};
             \node at (9.5,.75) {$(\mbox{Cut})$};
             \end{scope}
             \end{scope}
             \begin{scope}[shift={(25,0)}]
             \draw[draw=black](-5,.25)rectangle(-1.5,.75);\node at(-3.25,.5){$t$};
             \draw[thick,-](-4.75,.75)--(-4.75,1.5);

             \node[above] at(-4.75,1.5) {$\Gamma$};
             \draw[thick,-](-3.75,.75)--(-3.75,1.5);
             \node[above] at(-3.75,1.5) {$A$};

             \draw[thick,-](-1.75,.75)--(-1.75,1.5);

             \node[above] at(-1.75,1.5) {$\Theta$};
             \draw[thick,-](-2.75,.75)--(-2.75,1.5);
             \node[above] at(-2.75,1.5) {$B$};
             \node at (-.75,.5) {$\Rightarrow$};

                     \draw[draw=black](0,.25)rectangle(3.5,.75);\node at(1.75,.5){$t$};
             \draw[thick,-](.25,.75)--(.25,1.5);

             \node[above] at(.25,1.5) {$\Gamma$};
             \draw[thick,-](1.25,.75)--(1.25,1)--(2.25,1.25)--(2.25,1.5);
             \node[above] at(2.25,1.5) {$A$};

             \draw[thick,-](3.25,.75)--(3.25,1.5);

             \node[above] at(3.25,1.5) {$\Theta$};
             \draw[thick,-](2.25,.75)--(2.25,1)--(1.25,1.25)--(1.25,1.5);
             \node[above] at(1.25,1.5) {$B$};
             \node at (4.5,.75) {$(\mbox{Ex})$};
             \end{scope}
             }
     }
}\\[.1cm]
\subfloat[{$(\nabla)$ rule}
        \label{nabla_geometrically}
   ]
{
         \begin{tikzpicture}[xscale=1.5,yscale=1.5]
         \draw[draw=black](1.2,-.2)rectangle(-.5,-.5);\node at(.35,-.35){$t$};

         \begin{scope}[shift={(.1,0)}]
            \begin{scope}[shift={(-.3,.2)}]
                \draw [fill] (-.1,0) circle [radius=0.01];
                \draw [fill] (-0.05,0) circle [radius=0.01];
                \draw [fill] (0,0) circle [radius=0.01];
                \draw [fill] (0.05,0) circle [radius=0.01];
                        \draw [fill] (0.1,0) circle [radius=0.01];
                        \node at (0,0.1) [above] {$\Gamma$};
                        \node at (0,0.1)  {$\overbrace{~~\quad~~}$};
            \end{scope}

                \draw[thick, -](-.5,.2)--(-.5,-.2);
                \draw[thick, -](-.1,.2)--(-.1,-.2);

         \end{scope}

                \begin{scope}[shift={(.5,.2)}]

                \draw[thick,-](-.25,0.2)--(-.25,-.4);
                \draw[thick,-](.55,0.2)--(.55,-.4);

                \draw [fill] (-.1,0) circle [radius=0.01];
                \draw [fill] (-.15,0) circle [radius=0.01];
                \draw [fill] (0.15,0) circle [radius=0.01];
                \draw [fill] (0.2,0) circle [radius=0.01];
                \draw [fill] (0.1,0) circle [radius=0.01];
                \draw [fill] (.4,0) circle [radius=0.01];
                \draw [fill] (.45,0) circle [radius=0.01];

                        \node at (.15,0.3)  {$\overbrace{\quad\quad\quad}$};
                        \node at (.15,.3) [above] {$A$};

                \begin{scope}[shift={(0,-.05)}]
                    \draw [fill] (0.3,0) circle [radius=0.01];
                    \draw [fill] (0,0) circle [radius=0.05];
                    \node at (0,0) [above] {$\nu$};
                    \node at (0.3,.0) [above] {$\mu$};
                    \draw[thick,->](0,0) to  [out=-90,in=180] (.15,-.25) to  [out=0,in=-90](0.3,0);
                \end{scope}

                \end{scope}

\node at(1.6,-.35){$\Rightarrow$};

            \begin{scope}[shift={(2.5,0)}]
         \draw[draw=black](1.2,-.2)rectangle(-.5,-.5);\node at(.35,-.35){$t$};

         \begin{scope}[shift={(.1,0)}]
            \begin{scope}[shift={(-.3,.2)}]
                \draw [fill] (-.1,0) circle [radius=0.01];
                \draw [fill] (-0.05,0) circle [radius=0.01];
                \draw [fill] (0,0) circle [radius=0.01];
                \draw [fill] (0.05,0) circle [radius=0.01];
                        \draw [fill] (0.1,0) circle [radius=0.01];
                \node at (0,0.1) [above] {$\Gamma$};
                        \node at (0,0.1)  {$\overbrace{~~\quad~~}$};
            \end{scope}

                \draw[thick, -](-.5,.2)--(-.5,-.2);
                \draw[thick, -](-.1,.2)--(-.1,-.2);

         \end{scope}

                \begin{scope}[shift={(.5,.2)}]

                \draw[thick,-](-.25,0.2)--(-.25,-.4);
                \draw[thick,-](.55,0.2)--(.55,-.4);

                \draw [fill] (-.1,0) circle [radius=0.01];
                \draw [fill] (-.15,0) circle [radius=0.01];
                \draw [fill] (0.15,0) circle [radius=0.01];
                \draw [fill] (0.2,0) circle [radius=0.01];
                \draw [fill] (0.1,0) circle [radius=0.01];
                \draw [fill] (.4,0) circle [radius=0.01];
                \draw [fill] (.45,0) circle [radius=0.01];

                        \node at (.15,.3) [above] {$\nabla_\nu^\mu A$};
                        \node at (.15,0.3)  {$\overbrace{\quad\quad\quad}$};

                \begin{scope}[shift={(0,-.05)}]
                \end{scope}

                \end{scope}

                \end{scope}

        \end{tikzpicture}
 }\\[.1cm]
\subfloat[{$(\triangle)$ rule}
        \label{delta_geometrically}
 ]
 {
          \begin{tikzpicture}[xscale=1.5,yscale=1.5]
         \draw[draw=black](1.2,-.2)rectangle(-.5,-.5);\node at(.35,-.35){$t$};

         \begin{scope}[shift={(.1,0)}]
            \begin{scope}[shift={(-.3,.2)}]
                \draw [fill] (-.1,0) circle [radius=0.01];
                \draw [fill] (-0.05,0) circle [radius=0.01];
                \draw [fill] (0,0) circle [radius=0.01];
                \draw [fill] (0.05,0) circle [radius=0.01];
                        \draw [fill] (0.1,0) circle [radius=0.01];
                \node at (0,0.1) [above] {$\Gamma$};
                        \node at (0,0.1)  {$\overbrace{~~\quad~~}$};
            \end{scope}

                \draw[thick, -](-.5,.2)--(-.5,-.2);
                \draw[thick, -](-.1,.2)--(-.1,-.2);

         \end{scope}

                \begin{scope}[shift={(.5,.2)}]

                \draw[thick,-](-.25,.2)--(-.25,-.4);
                \draw[thick,-](.55,.2)--(.55,-.4);

                \draw[thick,->](0,-.05)--(0,-.4);
                \draw[thick,<-](.3,-.05)--(.3,-.4);

                \draw [fill] (-.1,0) circle [radius=0.01];
                \draw [fill] (-.15,0) circle [radius=0.01];
                \draw [fill] (0.15,0) circle [radius=0.01];
                \draw [fill] (0.2,0) circle [radius=0.01];
                \draw [fill] (0.1,0) circle [radius=0.01];
                \draw [fill] (.4,0) circle [radius=0.01];
                \draw [fill] (.45,0) circle [radius=0.01];

                \node at (.15,0.3)  {$\overbrace{\quad\quad\quad}$};
                        \node at (.15,.3) [above] {$A$};

                \begin{scope}[shift={(0,-.05)}]
                    \draw [fill] (0.3,0) circle [radius=0.01];
                    \draw [fill] (0,0) circle [radius=0.05];
                    \node at (0,0) [above] {$\nu$};
                    \node at (0.3,.0) [above] {$\mu$};
                \end{scope}

         \end{scope}

\node at(1.6,-.35){$\Rightarrow$};

            \begin{scope}[shift={(2.5,0)}]
         \draw[draw=black](1.2,-.2)rectangle(-.5,-.5);\node at(.35,-.35){$t$};

         \begin{scope}[shift={(.1,0)}]
            \begin{scope}[shift={(-.3,.2)}]
                \draw [fill] (-.1,0) circle [radius=0.01];
                \draw [fill] (-0.05,0) circle [radius=0.01];
                \draw [fill] (0,0) circle [radius=0.01];
                \draw [fill] (0.05,0) circle [radius=0.01];
                        \draw [fill] (0.1,0) circle [radius=0.01];
               \node at (0,0.1) [above] {$\Gamma$};
                        \node at (0,0.1)  {$\overbrace{~~\quad~~}$};
            \end{scope}

                \draw[thick, -](-.5,.2)--(-.5,-.2);
                \draw[thick, -](-.1,.2)--(-.1,-.2);

         \end{scope}

                \begin{scope}[shift={(.5,.2)}]

                \draw[thick,-](-.25,.2)--(-.25,-.4);
                \draw[thick,-](.55,.2)--(.55,-.4);

                \draw[thick,-](0,-.05)--(0,-.4);
                \draw[thick,-](.3,-.05)--(.3,-.4);

                \draw [fill] (-.1,0) circle [radius=0.01];
                \draw [fill] (-.15,0) circle [radius=0.01];
                \draw [fill] (0.15,0) circle [radius=0.01];
                \draw [fill] (0.2,0) circle [radius=0.01];
                \draw [fill] (0.1,0) circle [radius=0.01];
                \draw [fill] (.4,0) circle [radius=0.01];
                \draw [fill] (.45,0) circle [radius=0.01];

                \node at (.15,0.3)  {$\overbrace{\quad\quad\quad}$};
                        \node at (.15,.3) [above] {$\triangle_\nu^\mu A$};

                \begin{scope}[shift={(0,-.05)}]
                    \draw[thick,-](0,0) to  [out=90,in=180] (.15,.2) to  [out=0,in=90](0.3,0);
                \end{scope}

         \end{scope}
                \end{scope}

        \end{tikzpicture}
        }
\caption{Pictures for {\bf ETTC}}
\label{ETTC pictures}
\end{figure}
In order to give geometric representation of the {\bf ETTC} rules we need to consider {\it ordered} sequents and enrich the system with the rule of exchange
\[\cfrac{\vdash t::\Gamma,A,B,\Theta}{\vdash t::\Gamma,B,A,\Theta}~({\rm{Ex}}).\]
 Clearly, this results in an equivalent formulation.
Then, using geometric representation of tensor sequents, the rules of {\bf ETTC} can be illustrated as in Figure~\ref{ETTC pictures}.

The identity/structure group  $\{({\rm{Id}}), ({\rm{Cut}}), ({\rm{Ex}}))\}$  is schematically illustrated  in Figure~\ref{structural rules picture}. As for the multiplicative group, the $(\otimes)$ rule puts two graphs together in the disjoint union and the $(\parr)$ rule  does nothing in the picture. The $(\equiv_\beta)$ and $(\alpha\eta)$ rules do not change the picture either (except for vertex labels); they could be thought of as {\it coordinate transformations}.

Finally, the $(\triangle)$ rule  glues together  two  indices/vertices, and the $(\nabla)$ rule is applicable only in the case when the corresponding indices/vertices are connected with an edge carrying no label. Then this edge  is erased from the picture completely (the information about the erased edge is stored in the introduced type). This is illustrated  in Figures~\ref{nabla_geometrically},~\ref{delta_geometrically} (the lines of dots denote that there might be other vertices around).

\subsubsection{Towards first order translation}
We will discuss the exact relationship of {\bf ETTC} with the linguistic fragment of {\bf MLL1} in due course, but the outline of their correspondence is the following. Indices in tensor formulas correspond to  variables in predicate formulas, with upper/lower polarities of indices corresponding to left/right polarities of variables. Multiplicative connectives translate to themselves, while the binding operators $\nabla/\triangle$ of {\bf ETTC} correspond to the $\forall/\exists$ quantifiers in the linguistic fragment. A quantifier in the linguistic fragment binds exactly two variable occurrences of opposite polarities. These two bound occurrences in a predicate formula translate to two {\it different} indices bound by the corresponding operator in the tensor formula: $\forall xA(x,x)$ translates to $\nabla^i_jA^j_i$.

Tensor sequents may also contain terms, and these do not translate to the first order language directly. However, we note that Kronecker deltas behave much like (some rudimentary versions of) equalities to the left of the turnstile. Typically, the tensor sequent $\vdash \delta^{i_1\ldots i_k}_{j_1\ldots j_k}::\Gamma$ could be thought intuitively as the sequent
\[
j_1=i_1,\ldots,j_k=i_k\vdash\Gamma
\]
 in a first order language with equality. At least, the $(\alpha\eta)$ rules are consistent with such an interpretation. And the $(\equiv_\beta)$ rule, in the absence of terminal symbols, amounts to the equivalences $\delta^i_j\delta^j_k\equiv_\beta\delta^i_k$ and $\delta^i_i\equiv_\beta 1$, which, basically correspond to transitivity and reflexivity of equality (if we translate the term $1$ as the empty context). This intuition, probably, could be developed more formally.

\subsubsection{Properties}
The crucial property of cut-elimination in {\bf ETTC} will be established in a separate section below. Here we collect some simple observations about the system.
      \begin{prop}\label{derivability closed under equivalence}
 In {\bf ETTC}:
 \begin{enumerate}
 [(i)]
 \item the sequent $\vdash F,\overline F$ is derivable for any formula $F$;
 \item $\alpha$-equivalent formulas are provably equivalent, i.e. if $F\equiv_\alpha F'$ then the sequent $\vdash \overline F, F'$ is derivable;
 \item $\beta\eta$-equivalent sequents are cut-free derivable from each other;
 \item $\alpha$-equivalent sequents are derivable from each other.
 \end{enumerate}
  \end{prop}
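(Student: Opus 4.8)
The plan is to treat the four parts essentially independently, deriving (i) by a routine induction on formula structure, then bootstrapping (ii) from (i), and finally reducing (iii) and (iv) to the available $(\equiv_\beta)$ and $(\alpha\eta)$ rules together with the $\alpha$-equivalence of judgements already built into the calculus. For (i), I would induct on the construction of $F$. The base case is the $({\rm Id})$ rule, which gives $\vdash F,\overline F$ directly for $F\in\mathit{At}$. For $F=A\otimes B$ we have $\overline F=\overline B\parr\overline A$; starting from the inductive hypotheses $\vdash A,\overline A$ and $\vdash B,\overline B$, apply $(\otimes)$ to get $\vdash \overline A, A\otimes B,\overline B$ — here one must check the side conditions in Figure~\ref{restrictions}, which hold after renaming free indices apart using $(\alpha\eta)$ if necessary — and then $(\parr)$ to obtain $\vdash \overline B\parr\overline A, A\otimes B$, i.e.\ $\vdash F,\overline F$ up to Exchange. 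The $\parr$ case is dual. For the binding operators, say $F=\nabla^\mu_\nu A$ with $\overline F=\triangle^\mu_\nu\overline A$: from $\vdash A,\overline A$ one first uses $(\nabla)$ to introduce $\nabla^\mu_\nu$ (the premise $\vdash \delta^\mu_\nu t :: A,\overline A$ being obtained by an $(\alpha\eta)$-step that inserts the required Kronecker delta, after renaming so that $(\nu,\mu)\in FI(A)$), and then $(\triangle)$ to introduce $\triangle^\mu_\nu$ on $\overline A$; the $\triangle$ case is symmetric.

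For (ii), I would again induct on $F$, now using part~(i) as the atomic base: if $F\equiv_\alpha F'$ then $F$ and $F'$ differ only by renaming bound indices, and I unfold this renaming one operator at a time. The compound cases $\otimes,\parr$ reduce to the inductive hypothesis together with the same $(\otimes)$/$(\parr)$/(structural) steps as above. The operator case is the crux: if $F=\nabla^\mu_\nu A$ and $F'=\nabla^{\mu'}_{\nu'}A'$ where $A'$ is $A$ with the bound pair renamed, then $\overline F=\triangle^\mu_\nu\overline A$, and I want $\vdash \triangle^\mu_\nu\overline A,\nabla^{\mu'}_{\nu'}A'$. Here the point is that the $(\nabla)$ and $(\triangle)$ rules themselves are insensitive to which fresh indices name the bound pair — this is exactly the $\alpha$-equivalence of judgements and the freedom in the restriction $(\nu,\mu)\in FI(A)$ — so one derives $\vdash \overline A, A'$ from the inductive hypothesis and then closes off both operators as in part~(i).

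Parts (iii) and (iv) are nearly immediate from the structure of the calculus. For (iv): $\alpha$-equivalent sequents are interderivable because the $(\alpha\eta)$ rules are reversible (both directions are primitive rules, $(\alpha\eta^\to)$ and $(\alpha\eta^\leftarrow)$, likewise for the lower variants), and the $\alpha$-equivalence of judgements from Definition~\ref{tensor sequent relations} — renaming a free index across term and context, and $\alpha$-renaming inside formulas — is exactly what iterating these rules (plus, for the in-formula part, part~(ii) used in a cut, or more directly the built-in $t\equiv_\alpha t', F\equiv_\alpha F'$ clause combined with $(\equiv_\beta)$) achieves; one induces on the number of renaming steps connecting the two sequents. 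For (iii): $\beta$-equivalence of sequents is handled by a single application of $(\equiv_\beta)$ in each direction (it is symmetric since $\equiv_\beta$ is), and $\eta$-equivalence is handled by $(\alpha\eta^\to)/(\alpha\eta^\leftarrow)$ and their lower-index counterparts, again inducting on the length of the $\eta$-expansion/reduction chain; none of these steps uses $({\rm Cut})$, giving the claimed cut-freeness.

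I expect the main obstacle to be purely bookkeeping: in parts (i) and (ii) the $(\otimes)$ and $(\mathrm{Cut})$ side conditions in Figure~\ref{restrictions} force the free indices of the two premises to be disjoint and to match up only on the indices of the cut/tensored formula, so at each inductive step I must first apply $(\alpha\eta)$-renamings to separate the index sets of $A,\overline A$ (respectively $B,\overline B$) before combining them, and then track that the Kronecker deltas introduced along the way reduce correctly under $(\equiv_\beta)$ — using $\delta^i_j\delta^j_k\equiv_\beta\delta^i_k$ and $\delta^i_i\equiv_\beta 1$ — so that the final term is the identity (empty) term as required for $\vdash F,\overline F$. This is routine but needs care; there is no genuine mathematical difficulty, since cut-elimination is deliberately postponed to a later section and is not invoked here.
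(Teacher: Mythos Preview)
Your overall strategy matches the paper's, and parts (i), (iii), (iv) are essentially as in the paper (for (iv) the paper likewise splits into the formula-$\alpha$-equivalence case, handled by (ii) plus a cut, and the free-index renaming case, handled by chaining two $(\alpha\eta)$ moves or an $(\alpha\eta)$ move followed by $(\equiv_\beta)$). There is, however, one concrete slip in your treatment of (ii).

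In the binding-operator case you write that ``the $(\nabla)$ and $(\triangle)$ rules themselves are insensitive to which fresh indices name the bound pair'' and that one derives $\vdash \overline A, A'$ from the inductive hypothesis. Neither claim is quite right. If $F=\nabla^\mu_\nu A$ and $F'=\nabla^{x}_{y}(A^{[\nicefrac{y}{\nu}]}_{[\nicefrac{x}{\mu}]})$ differ by renaming the outermost bound pair, then the subformulas $A$ and $A^{[\nicefrac{y}{\nu}]}_{[\nicefrac{x}{\mu}]}$ differ in their \emph{free} indices, so they are not $\alpha$-equivalent as tensor formulas and the inductive hypothesis for (ii) does not give $\vdash \overline A, A^{[\nicefrac{y}{\nu}]}_{[\nicefrac{x}{\mu}]}$. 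Moreover, in plain {\bf ETTC} the $(\nabla)$ rule is rigid: from $\vdash \delta^\mu_\nu t::\Gamma,A$ you get exactly $\nabla^\mu_\nu A$, with no freedom to relabel to $\nabla^x_y$. (You may be thinking of the $(\nabla_{\equiv_\alpha})$, $(\triangle_{\equiv_\alpha})$ rules, but those belong to ${\bf ETTC}_{\eta\to}$, introduced only later.) The paper's fix is to start instead from (i), i.e.\ $\vdash \overline A, A$, and then use $(\alpha\eta^\to)$, $(\alpha\eta_\to)$ to shift the free indices $\nu,\mu$ in the right-hand copy to $y,x$, producing $\vdash \delta^{x\nu}_{\mu y}::\overline A, A^{[\nicefrac{y}{\nu}]}_{[\nicefrac{x}{\mu}]}$; one then applies $(\triangle)$ on the left (introducing $\delta^\mu_\nu$), uses $(\equiv_\beta)$ to collapse $\delta^{\mu x\nu}_{\nu\mu y}$ to $\delta^x_y$, and finally applies $(\nabla)$ on the right. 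So the work you attributed to the inductive hypothesis and to ``insensitivity'' of the operator rules is actually done by explicit $(\alpha\eta)$ renamings followed by a $\beta$-simplification of the accumulated Kronecker deltas.
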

 \begin{figure}
\centering
\subfloat[Claim {\it(ii)}
\label{alpha equiv formulas are equivalent figure}
]
{$\quad$
\def\fCenter{\ \vdash\ }
\Axiom$\fCenter \overline A,A$
\RightLabel{($\alpha\eta^\to$)}
\UnaryInf$ \fCenter \delta^\nu_y::\overline A,A^{[\nicefrac{y}{\nu}]}$
\RightLabel{($\alpha\eta_\to$)}
\UnaryInf$ \fCenter \delta^{x\nu}_{\mu y}::\overline A,A^{[\nicefrac{y}{\nu}]}_{[\nicefrac{x}{\mu}]}$
\RightLabel{($\triangle$)}
\UnaryInf$ \fCenter \delta^{\mu x\nu}_{\nu\mu y}::\triangle^\nu_\mu\overline A,A^{[\nicefrac{y}{\nu}]}_{[\nicefrac{x}{\mu}]}$
\RightLabel{($\equiv_\beta$)}
\UnaryInf$ \fCenter \delta^{x}_{y}::\triangle^\nu_\mu\overline A,A^{[\nicefrac{y}{\nu}]}_{[\nicefrac{x}{\mu}]}$
\RightLabel{($\nabla$)}
\UnaryInf$ \fCenter \triangle^\nu_\mu\overline A,\nabla^x_y(A^{[\nicefrac{y}{\nu}]}_{[\nicefrac{x}{\mu}]})$
\DisplayProof
}
\subfloat[Claim {\it(iv)}
\label{Claim iv}
]
{
$\quad
\begin{array}{c}
\def\fCenter{\ \vdash\ }
\Axiom$\fCenter t::\Gamma\quad(i,i)\in FI(\Gamma)$
\RightLabel{($\alpha\eta^\to$)}
\UnaryInf$ \fCenter \delta^i_jt::\Gamma^{[\nicefrac{j}{i}]}$
\RightLabel{($\alpha\eta_\leftarrow$)}
\UnaryInf$ \fCenter t::\Gamma^{[\nicefrac{j}{i}]}_{[\nicefrac{j}{i}]}$
\DisplayProof
\\
\\
\def\fCenter{\ \vdash\ }
\Axiom$\fCenter t::\Gamma\quad i\in FI_\bullet(t)\cap FI^\bullet(\Gamma)$
\RightLabel{($\alpha\eta^\to$)}
\UnaryInf$ \fCenter \delta^i_jt::\Gamma^{[\nicefrac{j}{i}]}$
\RightLabel{($\equiv_\beta$)}
\UnaryInf$ \fCenter t_{[\nicefrac{j}{i}]}::\Gamma^{[\nicefrac{j}{i}]}$
\DisplayProof
\end{array}
$
}
\caption{To the proof of Proposition~\ref{derivability closed under equivalence}}
\label{derivability closed under equivalence figure}
\end{figure}
 \begin{proof} Claim (i) is easy. Claim (ii) is established by induction on $A$, $A'$, the base cases being $F=Q^\mu_\nu A$, $F'=Q^x_y (A^{[\nicefrac{y}{\nu}]}_{[\nicefrac{x}{\mu}]})$, $Q\in\{\nabla,\triangle\}$. A derivation for this case is shown in Figure~\ref{alpha equiv formulas are equivalent figure}.
  For claim (iii) note that $\beta$-equivalent sequents are derivable from each other by the $(\equiv_\beta)$ rule, and if $\Sigma'$ is a one-step $\eta$-expansion of $\Sigma$ then $\Sigma'$ is derivable from $\vdash\Sigma$ by the $(\alpha\eta^\to)$ rule, and $\vdash\Sigma$ from $\vdash\Sigma'$ by the $(\alpha\eta_\leftarrow)$ rule.

  Finally, consider claim (iv). There are two relations generating $\alpha$-equivalence of sequents. The first one involves sequents of the form $\vdash t::\Gamma,F$ and $\vdash t'::\Gamma,F'$, where $t\equiv_\alpha t'$, $F\equiv_\alpha F'$. The statement follows from (ii), (iii). The second relation involves sequents of the form $\vdash\Sigma$ and $\vdash\Sigma^{[\nicefrac{j}{i}]}_{[\nicefrac{j}{i}]}$. This has two subcases: either the index $i$ has two free occurrences in the pseudotype context in $\Sigma$, or it has one free occurrence in the pseudotype context and one  in the term. Derivations for both situations are shown in Figure~\ref{Claim iv}. \end{proof}

\subsubsection{Implicational types}
 \begin{figure}[htb]
\centering
\subfloat[{Implication type}
\label{implication_general_form}
]
{
\quad\quad\quad
\scalebox{.8}
{
\begin{tikzpicture}
\begin{scope}[xscale=1]

\draw[thick,<-](0,0) to  [out=-90,in=180] (.75,-.6) to  [out=0,in=-90](1.5,0);
\draw[thick,->](-.4,0) to  [out=-90,in=180] (.75,-1) to  [out=0,in=-90](1.9,0);
        \draw [fill] (-.4,0) circle [radius=0.05];
        \draw [fill] (1.5,0) circle [radius=0.05];
        \node [above] at (-0.2,0) {$\overline{A}$};
        \node [above] at (1.75,0) {$B$};

        \node  at (0.75,-.8) {string1};
        \node [above] at (0.75,-.6) {string2};
\end{scope}
\end{tikzpicture}
}
\quad~~
}
\subfloat[{Implication ''subtypes''}
\label{subtypes}
]
{
\quad\quad
\scalebox{.8}
{
\begin{tikzpicture}
\begin{scope}[xscale=1]
\draw[thick,<-](0,0) to  [out=-90,in=180] (.75,-.6) to  [out=0,in=-90](1.5,0);
\draw[thick,->](-.4,0) to  [out=-90,in=180] (.75,-1) to  [out=0,in=-90](1.9,0);
        \draw [fill] (-.4,0) circle [radius=0.05];
        \draw [fill] (1.5,0) circle [radius=0.05];
        \node [above] at (-0.2,0) {$\overline{A}$};
        \node [above] at (1.75,0) {$B$};

        \node  at (0.75,-.8) {string};

\end{scope}
\end{tikzpicture}
}
\scalebox{.8}
{
\begin{tikzpicture}
\begin{scope}[xscale=1]
\draw[thick,<-](0,0) to  [out=-90,in=180] (.75,-.6) to  [out=0,in=-90](1.5,0);
\draw[thick,->](-.4,0) to  [out=-90,in=180] (.75,-1) to  [out=0,in=-90](1.9,0);
        \draw [fill] (-.4,0) circle [radius=0.05];
        \draw [fill] (1.5,0) circle [radius=0.05];
        \node [above] at (-0.2,0) {$\overline{A}$};
        \node [above] at (1.75,0) {$B$};

        \node [above] at (0.75,-.6) {string
        };
\end{scope}

\end{tikzpicture}
}
}\\[.2cm]
\subfloat[{Implication subtypes action}
\label{slash action}
]
{
{
 \begin{tikzpicture}
 [xscale=.4,yscale=.5]
 \begin{scope}[shift={(-.4,0)}]

    \begin{scope}
    \begin{scope}

    \begin{scope}[shift={(-.7,0)}]
            \draw[thick,<-](0,0) to  [out=-90,in=180] (0.4,-0.8) to  [out=0,in=-90] (0.8,0);
             \draw [fill] (0.8,0) circle [radius=0.1];
              \node[above ] at (0.4,0) {$A$};
               \node[above] at (0.4,-.8) {$\mbox{s}$};

    \end{scope}

    \begin{scope}[shift={(-.45,0)}]
             \draw[thick,<-](1.1,0) to  [out=-90,in=180] (2.5,-1.25) to  [out=0,in=-90] (3.9,0);
        \draw [fill] (3.9,0) circle [radius=0.1];
        \node[above] at (2.5,-1.35) {$\mbox{t}$};
          \node[above] at (1.3,0) {$\overline{A}$};
          \node[above] at (3.6,0) {$B$};
        \draw[thick,->](1.7,0) to  [out=-90,in=180] (2.5,-0.6) to  [out=0,in=-90] (3.3,0);
        \draw [fill] (1.7,0) circle [radius=0.1];

    \end{scope}

    \end{scope}
    \end{scope}

    {
    \node at(4.3,-.5){$\Longrightarrow$};
    }
 \end{scope}

\begin{scope}
    \begin{scope}
        \begin{scope}[shift={(6.2,0)}]

            \begin{scope}[shift={(-1.5,0)}]
             \draw[thick,-](0,0) to  [out=-90,in=180] (0.4,-0.8) to  [out=0,in=-90] (0.8,0);
                  \node[above] at (0.4,-.8) {$\mbox{s}$};

            \end{scope}

            \draw[thick](0,0) to  [out=90,in=0] (-.35,.5) to  [out=180,in=90] (-.7,0);
            \draw[thick](0.6,0) to  [out=90,in=0] (-.75,1) to  [out=180,in=90] (-1.5,0);

            \begin{scope}[shift={(-1.1,0)}]

                 \draw[thick,-](1.1,0) to  [out=-90,in=180] (2.5,-1.25) to  [out=0,in=-90] (3.9,0);

                \node[above] at (2.5,-1.35) {$\mbox{t}$};

                  \node[above] at (3.6,0) {$B$};
                        \draw[thick,->](1.7,0) to  [out=-90,in=180] (2.5,-0.6) to  [out=0,in=-90] (3.3,0);
                \draw [fill] (3.9,0) circle [radius=0.1];
            \end{scope}
        \end{scope}
    \end{scope}
\end{scope}

\begin{scope}[shift={(.2,0)}]
    \node [font=\large] at(10,-.5) {
    $=$
    };

    \begin{scope}[xscale=-1]
\begin{scope}[shift={(-24,0)}]
    \begin{scope}[shift={(9.5,.25)}]

        \begin{scope}
         \draw[thick,<-](2.4,-.3) to  [out=-90,in=180] (3.1,-1.25) to  [out=0,in=-90] (3.8,-.3);
                \draw [fill] (3.8,-.3) circle [radius=0.1];

        \node[above] at (3.1,-1.3) {$\mbox{ts}$};

        \end{scope}

    \end{scope}
    \end{scope}
    \end{scope}
\end{scope}
 \begin{scope}[shift={(0,-2.5)}]
 \begin{scope}[shift={(-.4,0)}]

    \begin{scope}[xscale=1]
    \begin{scope}

    \begin{scope}[shift={(-.7,0)}]
            \draw[thick,<-](0,0) to  [out=-90,in=180] (0.4,-0.8) to  [out=0,in=-90] (0.8,0);
             \draw [fill] (0.8,0) circle [radius=0.1];
              \node[above ] at (0.4,0) {$A$};
               \node[above] at (0.4,-.8) {$\mbox{s}$};

    \end{scope}

    \begin{scope}[shift={(-.45,0)}]
             \draw[thick,<-](1.1,0) to  [out=-90,in=180] (2.5,-1.25) to  [out=0,in=-90] (3.9,0);
        \draw [fill] (3.9,0) circle [radius=0.1];
        \node[above] at (2.5,-.95) {$\mbox{t}$};
          \node[above] at (1.35,0) {$\overline{A}$};
          \node[above] at (3.65,0) {$B$};
        \draw[thick,->](1.6,0) to  [out=-90,in=180] (2.5,-0.9) to  [out=0,in=-90] (3.4,0);
        \draw [fill] (1.6,0) circle [radius=0.1];

    \end{scope}

    \end{scope}
    \end{scope}

    {
    \node at(4.3,-.5){$\Longrightarrow$};
    }
 \end{scope}

\begin{scope}
    \begin{scope}
        \begin{scope}[shift={(6.2,0)}]

            \begin{scope}[shift={(-1.5,0)}]
             \draw[thick,-](0,0) to  [out=-90,in=180] (0.4,-0.8) to  [out=0,in=-90] (0.8,0);
                  \node[above] at (0.4,-.8) {$\mbox{s}$};

            \end{scope}

            \draw[thick](.0,0) to  [out=90,in=0] (-.35,.5) to  [out=180,in=90] (-.7,0);
            \draw[thick](0.5,0) to  [out=90,in=0] (-.75,1) to  [out=180,in=90] (-1.5,0);

            \begin{scope}[shift={(-1.1,0)}]

                 \draw[thick,-](1.1,0) to  [out=-90,in=180] (2.5,-1.25) to  [out=0,in=-90] (3.9,0);

                \node[above] at (2.5,-.95) {$\mbox{t}$};

                  \node[above] at (3.7,0) {$B$};
                        \draw[thick,->](1.6,0) to  [out=-90,in=180] (2.5,-0.9) to  [out=0,in=-90] (3.4,0);
                \draw [fill] (3.9,0) circle [radius=0.1];
            \end{scope}
        \end{scope}
    \end{scope}
\end{scope}

\begin{scope}[shift={(.2,0)}]
    \node [font=\large] at(10,-.5) {
    $=$
    };

    \begin{scope}[xscale=-1]
\begin{scope}[shift={(-24,0)}]
    \begin{scope}[shift={(9.5,.25)}]

        \begin{scope}
         \draw[thick,<-](2.4,-.3) to  [out=-90,in=180] (3.1,-1.25) to  [out=0,in=-90] (3.8,-.3);
            \draw [fill] (3.8,-.3) circle [radius=0.1];

        \node[above] at (3.1,-1.3) {$\mbox{st}$};

        \end{scope}

    \end{scope}
    \end{scope}
    \end{scope}
\end{scope}
\end{scope}
\end{tikzpicture}
}
}
\caption{Geometry of implicational types}
\end{figure}
The implicational types of {\bf LC} can be translated to {\bf ETTC} as
\be\label{translating lambek types}
(b/a)^i_j=\nabla^\mu_\nu(b^i_\mu\wp \overline{a}_j^\nu ),\quad
(a\backslash b)^i_j=\nabla^\mu_\nu(\overline{a}_\mu^i\wp b_j^\nu )
\ee
   (compare with translation to {\bf MILL1} in Figure~\ref{LC2MILL}).
   Let us dicuss the geometric meaning of this.

   Terms of (pseudo-)type $\nabla^\mu_\nu A$ encode a particular  ``subtype'' of  $A$: consisting of terms whose  indices/vertices corresponding to $\mu$ and $\nu$  are connected with an edge carrying no label.
    Let $A, B$  be   types with exactly one free upper and one free lower index.  Then regular terms of type $A$ or $B$ are, essentially, strings. As for the implication type $A\multimap B$, which we encode in {\bf ETTC} as $\overline{A}\parr B$ for definiteness (the alternative encoding is $B\parr\overline{A}$), its terms are rather  pairs of strings, as Figure~\ref{implication_general_form} suggests. There are two ``degenerate subtypes'',
shown in Figure~\ref{subtypes}, where one of the two strings is empty. (They are degenerate, for example, for having no lexical terms). These are precisely  ``subtypes'' of the kind encoded by $\nabla$.
 Now, it is easy to see that terms of the first ``subtype'' act  on type $A$ terms by multiplication (concatenation) on the left, and terms of the second one act by multiplication on the right (see Figure~\ref{slash action}).
 That is, the two ``degenerate subtypes'' of the general, ``undirected'' implication  type  correspond precisely to two implicational types of {\bf LC}. Note, however, that the translations in (\ref{translating lambek types}) use both possible encodings of the undirected implication, one for the left {\bf LC} implication and the other one for the right implication. This choice seems most natural.

\section{Cut-elimination}
\subsection{\texorpdfstring{$\eta$}{η}-Long fragment}
   \begin{figure}[htb]
      \centering
      \subfloat[${\bf ETTC}_{\eta\to}$ rules
\label{ETTC eta-long}]
{
      $
      \cfrac{p\in Lit}
      {
\vdash
\delta_{i_1\ldots i_n l_m\ldots l_1}^{k_n\ldots k_1j_1\ldots j_m}
:: p^{i_1\ldots i_n}_{j_1\ldots j_m},\overline{p}^{l_1\ldots l_m}_{k_1\ldots k_n}
}({\rm{Id}}_{\eta_\to})
~
\cfrac{\vdash \delta^i_j t::\Gamma,A_{[\nicefrac{i}{\mu}]}^{[\nicefrac{j}{\nu}]}}
{\vdash  t::\Gamma,\nabla^\mu_\nu A}(\nabla_{\equiv_\alpha})
~
\cfrac{\vdash t::A_{[\nicefrac{i}{\mu}]}^{[\nicefrac{j}{\nu}]},\Gamma}
{\vdash \delta^j_i t::\triangle^\mu_\nu A,\Gamma}(\triangle_{\equiv_\alpha})
$
}
\\[.2cm]
 \subfloat[{Emulating $(\nabla_{\equiv_\alpha})$ in ${\bf ETTC}$}
\label{emulating nabla_alpha}
]
{
   $
   \def\fCenter{\ ::\ }
\def\ScoreOverhang{.1pt}
\Axiom$\vdash \delta^i_j t\fCenter\Gamma,A_{[\nicefrac{i}{\mu}]}^{[\nicefrac{j}{\nu}]}$
\RightLabel{($\alpha\eta^\to$)}
\UnaryInf$\vdash \delta^{ji}_{\nu j} t\fCenter\Gamma,A_{[\nicefrac{i}{\mu}]}$
\RightLabel{($\alpha\eta_\to$)}
\UnaryInf$\vdash \delta^{\mu ji}_{i\nu j} t\fCenter\Gamma,A$
\RightLabel{($\equiv_\beta$)}
\UnaryInf$\vdash \delta^\mu_\nu t\fCenter\Gamma,A$
\RightLabel{($\nabla$)}
\UnaryInf$\vdash t\fCenter\Gamma,\nabla^\mu_\nu A$
\DisplayProof
\quad\quad
$
}
 \subfloat[{Emulating $(\triangle_{\equiv_\alpha})$ in ${\bf ETTC}$}
\label{emulating triangle_alpha}
]
{
   $
   \quad\quad
   \def\fCenter{\ ::\ }
\def\ScoreOverhang{.1pt}
\Axiom$\vdash t\fCenter A_{[\nicefrac{i}{\mu}]}^{[\nicefrac{j}{\nu}]},\Gamma$
\RightLabel{($\alpha\eta^\to$)}
\UnaryInf$\vdash \delta^{j}_{\nu} t\fCenter A_{[\nicefrac{i}{\mu}]},\Gamma$
\RightLabel{($\alpha\eta_\to$)}
\UnaryInf$\vdash \delta^{\mu j}_{i\nu} t\fCenter A,\Gamma$
\RightLabel{($\triangle$)}
\UnaryInf$\vdash \delta^{\nu\mu j}_{\mu i\nu} t\fCenter \triangle_{\nu}^{\mu} A,\Gamma$
\RightLabel{($\equiv_\beta$)}
\UnaryInf$\vdash \delta^j_i t\fCenter \triangle_{\nu}^{\mu}A,\Gamma$
\DisplayProof
$
}
\caption{${\bf ETTC}_{\eta\to}$}
\end{figure}
The {\it $\eta$-long} fragment ${\bf ETTC}_{\eta\to}$ is {\bf ETTC} restricted to $\eta$-long sequents, i.e. to typing judgements. The fragment can be given a separate axiomatization.
\begin{defi}
The system ${\bf ETTC}_{\eta\to}$  is obtained from  {\bf ETTC}  by removing the $(\alpha\eta)$ rules and replacing the $({\rm{Id}})$ axioms with their $\eta$-long forms $({\rm{Id}}_{\eta_\to})$ and the $(\nabla)$, $(\triangle)$ rules with their closures $(\nabla_{\equiv_\alpha})$, $(\triangle_{\equiv_\alpha})$ under $\alpha$-equivalence shown in Figure~\ref{ETTC eta-long}.
\end{defi}
\begin{prop}\label{ETTC is a fragment}
If a typing judgement $\Sigma$ is derivable in ${\bf ETTC}_{\eta\to}$ then it is derivable in ${\bf ETTC}$. If the ${\bf ETTC}_{\eta\to}$ derivation of $\Sigma$ is cut-free, then $\Sigma$ is cut-free derivable in ${\bf ETTC}$.
\end{prop}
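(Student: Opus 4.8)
The plan is an induction on the given ${\bf ETTC}_{\eta\to}$ derivation of $\Sigma$, checking that each rule of ${\bf ETTC}_{\eta\to}$ is admissible in ${\bf ETTC}$, and \emph{cut-free} admissible whenever the rule is not $({\rm{Cut}})$. The rules $({\rm{Cut}})$, $(\equiv_\beta)$, $(\parr)$ and $(\otimes)$ are literally rules of ${\bf ETTC}$, so for those induction steps there is nothing to do (in particular a $({\rm{Cut}})$ step stays a $({\rm{Cut}})$ step, while all the other steps introduce no cut). Thus only three cases require work: the axiom $({\rm{Id}}_{\eta\to})$ and the two binding rules $(\nabla_{\equiv_\alpha})$ and $(\triangle_{\equiv_\alpha})$.

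For $({\rm{Id}}_{\eta\to})$, I would observe that its conclusion is an $\eta$-long form of the plain identity sequent $\vdash p^{i_1\ldots i_n}_{j_1\ldots j_m},\overline{p^{i_1\ldots i_n}_{j_1\ldots j_m}}$, whose term is $1$ and whose two atoms share all their free indices, so that it is an instance of $({\rm{Id}})$ in ${\bf ETTC}$. By Proposition~\ref{derivability closed under equivalence}(iii), $\beta\eta$-equivalent sequents are cut-free derivable from each other in ${\bf ETTC}$, hence the displayed $\eta$-long sequent is cut-free derivable; all one has to check is that successively $\eta$-expanding each of the $n+m$ repeated free indices and then $\beta$-reducing yields exactly, up to $\equiv_\beta$, the Kronecker delta $\delta_{i_1\ldots i_n l_m\ldots l_1}^{k_n\ldots k_1 j_1\ldots j_m}$ displayed in Figure~\ref{ETTC eta-long}, which is routine bookkeeping of polarities.

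For $(\nabla_{\equiv_\alpha})$ and $(\triangle_{\equiv_\alpha})$, the emulations are precisely those in Figures~\ref{emulating nabla_alpha} and~\ref{emulating triangle_alpha}: given, by the induction hypothesis, an ${\bf ETTC}$ derivation of the premise, one appends the displayed sequence of two $(\alpha\eta)$ steps, one $(\equiv_\beta)$ step, and a final $(\nabla)$, respectively $(\triangle)$, step. None of these is $({\rm{Cut}})$, so cut-freeness is preserved. This completes the induction, and the second assertion of the proposition is then immediate since the construction introduces no new $({\rm{Cut}})$.

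The only slightly delicate point — the one I expect to be the main obstacle — is verifying that these emulating derivations really are well-formed ${\bf ETTC}$ derivations, i.e. that the restrictions of Figure~\ref{restrictions} are met at every step. Concretely this reduces to: (i) the auxiliary index introduced by each $(\alpha\eta)$ step being globally fresh, which one can always arrange by first $\alpha$-renaming the bound index of the operator being introduced (legitimate by Proposition~\ref{derivability closed under equivalence}(iv)); and (ii) the two $\beta$-equivalences $\delta^{\mu ji}_{i\nu j}t\equiv_\beta\delta^\mu_\nu t$ and $\delta^{\nu\mu j}_{\mu i\nu}t\equiv_\beta\delta^j_i t$ used by the $(\equiv_\beta)$ steps, both of which follow at once from the term relations $\delta^a_b\cdot\delta^b_c\equiv_\beta\delta^a_c$ and $\delta^a_a\equiv_\beta 1$ together with the freshness just secured. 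Collecting these local verifications with the induction yields both claims of the proposition.
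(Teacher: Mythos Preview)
Your plan is the paper's: induct on the ${\bf ETTC}_{\eta\to}$ derivation, pass the shared rules through verbatim, obtain $({\rm Id}_{\eta\to})$ from $({\rm Id})$ via $(\alpha\eta)$ steps (your appeal to claim~(iii) packages the same thing), and emulate $(\nabla_{\equiv_\alpha})$, $(\triangle_{\equiv_\alpha})$ by Figures~\ref{emulating nabla_alpha} and~\ref{emulating triangle_alpha}.

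The freshness obstruction you flag is genuine --- the figures tacitly assume $\mu,\nu\notin|I(t)|\cup|FI(\Gamma)|$, and nothing forbids, say, $\nu\in FI_\bullet(\Gamma)$ (whence $\nu\in FI^\bullet(t)$) in a well-formed instance of $(\nabla_{\equiv_\alpha})$ --- but your repair does not preserve cut-freeness. Renaming the bound indices of $\nabla^\mu_\nu A$ is \emph{formula-level} $\alpha$-equivalence, and Proposition~\ref{derivability closed under equivalence}(iv) handles precisely that case through claim~(ii) followed by a Cut against $\vdash\overline F,F'$. So invoking (iv) establishes the first assertion of the proposition but forfeits the second.

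A cut-free repair leaves the bound indices of the operator untouched and instead sandwiches the emulation with extra $(\alpha\eta)$ and $(\equiv_\beta)$ steps: before running Figure~\ref{emulating nabla_alpha}, use $(\alpha\eta_\to)$ or $(\alpha\eta^\to)$ together with $(\equiv_\beta)$ to push any clashing \emph{free} occurrence of $\mu$ or $\nu$ in $\Gamma$ and $t$ out to a fresh placeholder; the figure then applies verbatim and yields $\vdash t'::\Gamma',\nabla^\mu_\nu A$ for the renamed $t',\Gamma'$; finally undo the placeholder with the inverse $(\alpha\eta)$ step and one more $(\equiv_\beta)$ to recover $\vdash t::\Gamma,\nabla^\mu_\nu A$. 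All added steps are cut-free, so both assertions survive. Swap your appeal to (iv) for this local sandwich and the argument is complete.
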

\begin{proof} The $({\rm{Id}_{\eta_{\to}}})$ axioms of ${\bf ETTC}_{\eta\to}$ are {\bf ETTC} derivable from $({\rm{Id}})$ axioms using $(\alpha\eta)$ rules. The $(\nabla_{\equiv_\alpha})$, $(\triangle_{\equiv_\alpha})$  rules are cut-free emulated in {\bf ETTC} as shown in Figures \ref{emulating nabla_alpha}, \ref{emulating triangle_alpha}. \end{proof}

(It was precisely the $\eta$-long fragment that was introduced and discussed in \cite{Slavnov_tensor} under the name {\bf ETTC}, with the exception that in \cite{Slavnov_tensor} we did  not have the term reduction $\delta^i_i\to_\beta 1$. Also, in \cite{Slavnov_tensor} types were defined as quotiented under $\alpha$-equivalence, so that the rules $(\nabla)_{\equiv_\alpha}$, $(\triangle)_{\equiv_\alpha}$ in those definitions would be identical to the original $(\nabla)$, $(\triangle)$ versions. The enriched {\bf ETTC} of this work has the advantage of more concise notation; derivations in the ${\bf ETTC}_{\eta\to}$ tend to become overloaded with indices and barely readable.)

The $\eta$-long fragment is notationally cumbersome, but is easy to analyze. In particular, cut-elimination in ${\bf ETTC}_{\eta\to}$ is more or less routine. For the full system {\bf ETTC} of this work, cut-elimination is trickier because of $(\alpha\eta)$ rules.

\subsection{Cut-elimination for ${\bf ETTC}_{\eta\to}$}
\begin{prop}\label{alpha-equivalent derivations}
If $\Sigma$, $\Sigma'$ are $\alpha$-equivalent typing judgements and $\pi$ is an ${\bf ETTC}_{\eta\to}$ derivation of $\vdash\Sigma$  then there exists an ${\bf ETTC}_{\eta\to}$ derivation $\pi'$ of $\vdash\Sigma'$, which is obtained from $\pi$ by renaming some indices.
\end{prop}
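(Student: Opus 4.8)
The plan is to argue by induction on the derivation $\pi$, using as an auxiliary an \emph{equivariance} lemma: for any injective renaming $\rho$ of indices, replacing every index occurrence throughout $\pi$ by its $\rho$-image produces an ${\bf ETTC}_{\eta\to}$ derivation of $\vdash\rho(\Sigma)$. This lemma is itself a straightforward induction on $\pi$: an injective renaming preserves exactly the information the rules inspect --- well-formedness of terms, formulas and judgements; the disjointness side conditions $I(t)\perp I(s)$, $FI(\Gamma)\perp FI(\Theta)$ of $(\otimes)$ and $({\rm Cut})$; $\beta$-equivalence, used by $(\equiv_\beta)$; and the prescribed forms of the $({\rm Id}_{\eta_\to})$ axioms and of the rules $(\nabla_{\equiv_\alpha})$, $(\triangle_{\equiv_\alpha})$.

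For the main induction, recall from Definition~\ref{tensor sequent relations} that $\alpha$-equivalence of typing judgements is generated by three moves: renaming a bound index of the term, renaming a bound index of a context formula, and renaming a free index globally (the move $\Sigma\mapsto\Sigma^{[\nicefrac{j}{i}]}_{[\nicefrac{j}{i}]}$ with $j$ fresh). Since compositions of index renamings are again index renamings, it suffices to realize a single move on a given $\pi$. Inspect the last rule of $\pi$. In the axiom case the claim is an instance of equivariance. In the remaining cases the move on the conclusion is matched by one or two moves on the premise(s) --- so that the induction hypothesis applies, after which the rule is reapplied --- with one exception: the indices $\mu,\nu$ bound by a $\nabla^\mu_\nu A$ or $\triangle^\mu_\nu A$ just introduced by $(\nabla_{\equiv_\alpha})$ or $(\triangle_{\equiv_\alpha})$ do not occur in the premise at all, so renaming them is carried out entirely at this last inference by choosing the operator's bound names differently. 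Throughout, whenever a fresh index must be picked to realize a move or to reapply a rule, one first uses the equivariance lemma on the relevant subderivation to move aside any name that would otherwise clash; equivariance guarantees this does not disturb anything below.

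I expect the main difficulty to be the index bookkeeping around the binding-operator rules rather than any conceptual obstacle. One has to be precise about which part of $\pi$ each index occurrence ``belongs to'' --- in particular, a bound index of a context formula in the conclusion need only occur in the subtree strictly below the $(\nabla_{\equiv_\alpha})$ or $(\triangle_{\equiv_\alpha})$ inference that introduced its operator, so clash-avoidance for such an index must be localized to that subtree and not performed globally --- and one must check that each fresh-index choice keeps the various freshness and disjointness side conditions of $(\otimes)$, $({\rm Cut})$ and the $(\nabla_{\equiv_\alpha})$, $(\triangle_{\equiv_\alpha})$ rules satisfied. Once these scoping conventions are pinned down, the verification in each rule case is routine given the equivariance lemma.
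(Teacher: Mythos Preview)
Your proposal is correct and follows the same approach the paper takes --- the paper's proof is literally the single phrase ``Easy induction on $\pi$'', and your careful decomposition (equivariance under injective renaming, then handling the generators of $\alpha$-equivalence, with local clash-avoidance at the binding-operator rules) is exactly the bookkeeping that phrase elides. There is no meaningful difference in method.
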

\begin{proof} Easy induction on  $\pi$. \end{proof}

We define the {\it size of a derivation} in ${\bf ETTC}_{\eta\to}$ as follows. If the derivation $\pi$ is an axiom then the size $\mathit{size}(\pi)$ of $\pi$ is $1$. If $\pi$ is obtained from a derivation $\pi'$ by a single-premise rule $\mathit{size}(\pi)=\mathit{size}(\pi')+1$. If $\pi$ is obtained from derivations $\pi_1$, $\pi_2$ by a two-premise rule then
$\mathit{size}(\pi)=\mathit{size}(\pi_1)+\mathit{size}(\pi_2)+1$.


Given  an ${\bf ETTC}_{\eta\to}$ derivation with two subderivations $\pi_1$, $\pi_2$ of the sequents
$\vdash t::\Gamma,A$ and $\vdash s::\overline A, \Theta$ respectively followed with the Cut rule, we say that this application of a cut rule   is a {\it principal cut} if $\pi_1$, respectively  $\pi_2$, ends with a rule introducing the formula $A$, respectively $\overline A$. An application of the Cut rule that is not principal is a {\it side cut}.

\begin{lem}\label{side cuts}
There is an algorithm transforming an ${\bf ETTC}_{\eta\to}$ derivation $\pi$ of a sequent $\vdash\Sigma$ to a derivation $\pi'$ of the same sequent, where   $\mathit{size}(\pi')=\mathit{size}(\pi)$ and all cuts in $\pi'$ principal.
\end{lem}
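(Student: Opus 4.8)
The plan is to give a recursive procedure $\Phi$ on ${\bf ETTC}_{\eta\to}$ derivations — the usual ``anchoring'' of cuts by commuting conversions — and to prove by induction on $\mathit{size}(\pi)$ that $\Phi(\pi)$ has the same conclusion and the same size as $\pi$ and that all cuts in $\Phi(\pi)$ are principal; the algorithm is then $\Phi$ itself. If $\pi$ is an axiom, set $\Phi(\pi)=\pi$. If the last rule of $\pi$ is a rule $R$ other than $({\rm Cut})$, with premise derivation(s) $\pi_1$ (and $\pi_2$), I would set $\Phi(\pi)$ to be $R$ applied to $\Phi(\pi_1)$ (and $\Phi(\pi_2)$): the premises are strictly smaller, so the induction hypothesis applies, the sizes add up, and every cut of $\Phi(\pi)$ lies inside some $\Phi(\pi_i)$ and is therefore principal.

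The interesting case is when the last rule of $\pi$ is $({\rm Cut})$, between $\pi_1$ of conclusion $\vdash t::\Gamma,A$ and $\pi_2$ of conclusion $\vdash s::\overline A,\Theta$. First I would replace $\pi_1,\pi_2$ by $\Phi(\pi_1),\Phi(\pi_2)$, so that all cuts above the bottom one are already principal. If the bottom cut is now principal, stop. Otherwise one of the premises, say $\pi_1$ (the other case is symmetric), ends with a rule $R$ not introducing $A$. Such an $R$ cannot be an axiom $({\rm Id}_{\eta\to})$ (an axiom has exactly two formulas, one of which would then be the cut formula, making the cut principal), so $R$ is one of $(\parr)$, $(\otimes)$, $(\equiv_\beta)$, $(\nabla_{\equiv_\alpha})$, $(\triangle_{\equiv_\alpha})$, or $({\rm Cut})$, and in each case $A$ sits in the ``context part'' of the premise(s) of $R$. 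I would then perform the standard commuting conversion: push the bottom cut past $R$, obtaining a derivation that ends with $R$ and one of whose premise derivations is $\delta$, namely the bottom cut applied to the premise $\rho$ of $R$ containing $A$ and to $\pi_2$; and set $\Phi(\pi)$ equal to $R$ applied to the other premise of $R$ (if any) and to $\Phi(\delta)$. Since $\mathit{size}(\rho)<\mathit{size}(\pi_1)$ we get $\mathit{size}(\delta)<\mathit{size}(\pi)$, so the recursive call is legitimate, and a short computation gives $\mathit{size}(\Phi(\pi))=\mathit{size}(\pi)$. The accompanying bookkeeping — that the relocated cut still satisfies the side conditions of Figure~\ref{restrictions}, and that in the $(\equiv_\beta)$, $(\nabla_{\equiv_\alpha})$, $(\triangle_{\equiv_\alpha})$ cases the term manipulations are legal (using the remarks on commuting term multiplication with $\beta$- and $\alpha$-equivalence, plus Proposition~\ref{alpha-equivalent derivations} to rename bound indices when necessary) — is routine and size-neutral.

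The one subcase that needs real care, and which I expect to be the main obstacle, is $R=({\rm Cut})$, since naively ``pushing a cut past a cut'' can loop. Here, however, $\Phi(\pi_1)$ already has all cuts principal, so its last rule is a principal cut, say on $C/\overline C$, between a derivation $\rho_a$ ending with a rule introducing $C$ and a derivation $\rho_b$ ending with a rule introducing $\overline C$; say $A$ lies in the context of $\rho_b$ (the other case symmetric), so $\rho=\rho_b$ and $\Phi(\pi)$ ends with this cut on $C/\overline C$ over $\rho_a$ and $\Phi(\delta)$, where $\delta$ is the bottom cut over $\rho_b$ and $\Phi(\pi_2)$. One checks $A\neq\overline C$ (else the bottom cut of $\pi$ would already have been principal), so in $\delta$ the cut on $A$ is non-principal on the $\rho_b$ side, and the last rule of $\rho_b$ is the formula-introduction rule with principal formula $\overline C$, which is neither a cut nor an axiom; unfolding the definition of $\Phi$ one step then shows that $\Phi(\delta)$ ends with exactly that rule introducing $\overline C$. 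Hence the cut on $C/\overline C$ at the bottom of $\Phi(\pi)$ is again principal, while every remaining cut of $\Phi(\pi)$ lies inside $\rho_a$ (a subderivation of the already-normalized $\Phi(\pi_1)$) or inside $\Phi(\delta)$, so all cuts of $\Phi(\pi)$ are principal by the induction hypothesis. Termination is immediate since $\mathit{size}$ strictly decreases at every recursive call.
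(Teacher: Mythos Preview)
Your argument is correct and rests on the same commuting-conversion idea as the paper, but the two proofs are organized differently. The paper performs one local permutation at a time and proves termination via the global measure $N=\sum_i\mathit{size}(\pi_i)$ taken over all subderivations ending in side cuts; you instead define a recursive normalizer $\Phi$ and argue by induction on $\mathit{size}(\pi)$. The substantive difference is your explicit treatment of the subcase $R=({\rm Cut})$. The paper only spells out permutation with ``the preceding rule introducing a side formula'' (worked out for $(\nabla_{\equiv_\alpha})$, the remaining cases declared ``similar or easier''), and its measure $N$ is in fact not monotone under a single Cut-past-Cut swap, since the formerly principal upper cut becomes side after the permutation. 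Your observation that $\Phi(\delta)$ must end with the logical rule that introduced $\overline C$---so that the displaced cut on $C/\overline C$ is again principal in $\Phi(\pi)$---genuinely closes that gap. Two minor remarks: your parenthetical ``making the cut principal'' in the axiom case is a slip of phrasing (an axiom on one side makes only that side principal), though your conclusion that the chosen non-introducing premise cannot be an axiom is correct; and the index-renaming you defer to ``routine and size-neutral'' bookkeeping via Proposition~\ref{alpha-equivalent derivations} is exactly what the paper carries out concretely in its $(\nabla_{\equiv_\alpha})$ case, and is indeed size-neutral.
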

\begin{proof}
Let $\pi_1,\ldots,\pi_n$ be all subderivations of $\pi$ ending with side cuts and let $N=\sum\mathit{size}(\pi_i)$. We will use $N$ as the induction parameter for proving termination of the algorithm.

The algorithm consists in permuting a side cut with the preceding rule introducing a side formula. We will consider only one case that seems most involved, namely, permutation with the $(\nabla_{\equiv\alpha})$ rule when there is a possibility of index collision. Other cases are similar or easier.

Let a side cut be between  sequents $\vdash\Sigma_1$, $\vdash\Sigma_2$ of the forms
\[
\vdash t::\nabla^\mu_\nu B,\Phi,A,\quad \vdash s:: \overline A,\Theta
\]
 respectively, where $\vdash\Sigma_1$ is obtained
 by the $(\nabla_{\equiv_\alpha})$ rule
from the sequent $\vdash\Sigma_0$ of the form
$\vdash \delta^i_jt:: B_{[\nicefrac{i}{\mu}]}^{[\nicefrac{j}{\nu}]},\Phi,A$.

 The transformation of derivations is shown in Figure~\ref{side cut figure}, where
 in order to avoid possibility of forbidden index repetitions, we use Proposition~\ref{alpha-equivalent derivations} and replace the derivation of $\vdash\Sigma_0$ with a derivation with the same size and an $\alpha$-equivalent conclusion  obtained from $\Sigma_0$ by renaming $i$, $j$ with fresh indices.
  The induction parameter $N$ for the new derivation is smaller at least by 1. \end{proof}
      \begin{figure}
      \centering
      \subfloat[{Transforming a side cut}
\label{side cut figure}
]
{
   $
   \begin{array}{c}
  i',j'\mbox{ fresh},~B'=B^{[\nicefrac{j'}{\nu}]}_{[\nicefrac{i'}{\mu}]}.
  \\[.25cm]
\def\fCenter{\ ::\ }
\def\ScoreOverhang{.1pt}
\Axiom$\vdash \delta^i_j t\fCenter B_{[\nicefrac{i}{\mu}]}^{[\nicefrac{j}{\nu}]},\Phi,A$
\RightLabel{($\nabla_{\equiv_\alpha}$)}
\UnaryInf$ \vdash t\fCenter\nabla^\mu_\nu B,\Phi,A$
\Axiom$\vdash s\fCenter \overline A,\Theta$
\RightLabel{(${\rm{Cut}}$)}
\insertBetweenHyps{\hskip 3pt}
\BinaryInf$ \vdash t s\fCenter \nabla^\mu_\nu B,\Phi,\Theta$
\DisplayProof
\Rightarrow
\def\fCenter{\ ::\ }
\def\ScoreOverhang{.1pt}
\Axiom$\vdash \delta^{i'}_{j'} t\fCenter  B',\Phi,A$
\Axiom$\vdash s\fCenter \overline A,\Theta$
\RightLabel{(${\rm{Cut}}$)}
\insertBetweenHyps{\hskip 3pt}
\BinaryInf$ \vdash \delta^{i'}_{j'} t s\fCenter  B',\Phi,\Theta$
\RightLabel{($\nabla_{\equiv_\alpha}$)}
\UnaryInf$ \vdash t s\fCenter \nabla^{\mu}_{\nu} B,\Phi,\Theta$
\DisplayProof
\end{array}
   $
   }
   \\[.3cm]
   \subfloat[{Principal cut $\triangle/\nabla$}
   \label{cut triangle/nabla}]
   {
   $\begin{array}{c}
   i',j'\mbox{ fresh}, t'=t^{[\nicefrac{j'}{l}]}_{[\nicefrac{i'}{k}]}, A'=A^{[\nicefrac{j'}{\nu}]}_{[\nicefrac{i'}{\mu}]}.
   \\[.25cm]
   \def\fCenter{\ ::\ }
\def\ScoreOverhang{.1pt}
\Axiom$\vdash \delta_{j}^{i}s\fCenter \Gamma,A_{[\nicefrac{i}{\mu}]}^{[\nicefrac{j}{\nu}]}$
\RightLabel{($\nabla_{\equiv_\alpha}$)}
\UnaryInf$ \vdash s\fCenter \Gamma,\nabla^\mu_\nu A$
\Axiom$\vdash t\fCenter \overline A^{[\nicefrac{k}{\mu}]}_{[\nicefrac{l}{\nu}]}, \Theta$
\RightLabel{($\triangle_{\equiv_\alpha}$)}
\UnaryInf$ \vdash \delta^{k}_{l} t\fCenter \triangle^\nu_\mu \overline A,\Theta$
\RightLabel{(${\rm{Cut}}$)}
\insertBetweenHyps{\hskip 3pt}
\BinaryInf$ \vdash s\delta^{k}_{l} t\fCenter \Gamma,\Theta$
\DisplayProof
\Rightarrow
\def\fCenter{\ ::\ }
\def\ScoreOverhang{0pt}
\Axiom$\vdash \delta^{i'}_{j'}s\fCenter \Gamma,A'$
\Axiom$\vdash t'\fCenter \overline{A'}, \Theta$
\RightLabel{(${\rm{Cut}}$)}
\insertBetweenHyps{\hskip 3pt}
\BinaryInf$ \vdash \delta_{j'}^{i'} st'\fCenter \Gamma,\Theta$
\DisplayProof
\end{array}
$
}
   \caption{Cut-elimination for ${\bf ETTC}_{\eta\to}$}
 \end{figure}
 \begin{lem}\label{principal cuts}
 There is an algorithm transforming an ${\bf ETTC}_{\eta\to}$ derivation $\pi$ of a sequent $\Sigma$ with cuts into an ${\bf ETTC}_{\eta\to}$ derivation $\pi'$ of some sequent $\Sigma'$ such that $\Sigma'\equiv_{\beta}\Sigma$ and $\mathit{size}(\pi')<\mathit{size}(\pi)$, provided  that all cuts in $\pi$ are principal.
 \end{lem}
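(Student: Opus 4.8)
The plan is to reduce a single, suitably chosen cut and then observe that the total size strictly drops. Since $\pi$ contains a cut, I would pick an uppermost one and call its immediate subderivations $\pi_1$, $\pi_2$, with conclusions $\vdash t::\Gamma,A$ and $\vdash s::\overline A,\Theta$; these are cut-free, and because \emph{every} cut of $\pi$ is principal, $\pi_1$ ends with a rule introducing $A$ and $\pi_2$ with a rule introducing $\overline A$. In ${\bf ETTC}_{\eta\to}$ the only rules that introduce a formula are $(\mathrm{Id}_{\eta_\to})$, $(\otimes)$, $(\parr)$, $(\nabla_{\equiv_\alpha})$ and $(\triangle_{\equiv_\alpha})$, so, pairing $A$ with $\overline A$ through the duality of Figure~\ref{ETTC_lang}, exactly three configurations can occur: one premise is an axiom; a $(\otimes)$ faces a $(\parr)$; a $(\nabla_{\equiv_\alpha})$ faces a $(\triangle_{\equiv_\alpha})$.

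In the axiom case $A$ is atomic and, say, $\pi_1$ is an $(\mathrm{Id}_{\eta_\to})$ axiom; the cut then merely renames the free indices of the copy of $\overline A$ occurring in $\pi_2$ along the Kronecker delta of the axiom, so I would replace the whole cut-subderivation by $\pi_2$ followed by at most one $(\equiv_\beta)$ step absorbing that delta. In the $(\otimes)/(\parr)$ case I would perform the usual multiplicative reduction: if the $(\otimes)$ rule has premises $\vdash t_0::\Gamma_0,A_0$ and $\vdash s_0::B_0,\Theta_0$ and the $(\parr)$ rule has premise $\vdash u::\overline{B_0},\overline{A_0},\Xi$, I cut $\pi_{t_0}$ against $\pi_u$ on $A_0$ and the result against $\pi_{s_0}$ on $B_0$; since tensor terms form a free commutative monoid with strictly associative multiplication, the term in the new conclusion is literally the old one, and the only thing to verify is that the two new cuts satisfy the well-formedness restrictions of Figure~\ref{restrictions}, which follows from those on the original cut and on $(\otimes)$. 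The $(\nabla_{\equiv_\alpha})/(\triangle_{\equiv_\alpha})$ case is precisely the one displayed in Figure~\ref{cut triangle/nabla}: I would first invoke Proposition~\ref{alpha-equivalent derivations} to replace $\pi_1$, $\pi_2$ by $\alpha$-variants whose ``to-be-bound'' indices are fresh and shared, and then cut directly on the smaller formula $A'$, letting the two Kronecker deltas supplied by the $\alpha$-closed rules merge into the term.

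Finally I would check the arithmetic: each local reduction strictly lowers the size of the cut-subderivation — from $\mathit{size}(\pi_2)+2$ to at most $\mathit{size}(\pi_2)+1$ in the axiom case, by $1$ in the multiplicative case (from $\mathit{size}(\pi_{t_0})+\mathit{size}(\pi_{s_0})+\mathit{size}(\pi_u)+3$ to $\mathit{size}(\pi_{t_0})+\mathit{size}(\pi_{s_0})+\mathit{size}(\pi_u)+2$), and by $2$ in the $\nabla/\triangle$ case, where the $\alpha$-variants keep their sizes by Proposition~\ref{alpha-equivalent derivations} — while its conclusion is $\beta$-equivalent to the original one (equal in the multiplicative case). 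Splicing the new subderivation back into $\pi$ and propagating the $\beta$-equivalence downwards through the rules applied below — legitimate because $\beta$-equivalence preserves free indices, so the intervening term products stay well-defined, and it is a congruence for such products, as recorded earlier — yields the required $\pi'$ with $\mathit{size}(\pi')<\mathit{size}(\pi)$ and conclusion $\Sigma'\equiv_\beta\Sigma$. I expect the $\nabla/\triangle$ step to be the main obstacle: the delicate part is the index bookkeeping, namely arranging the freshness side-conditions of the $\alpha$-closed rules to survive the substitutions and verifying that the residual Kronecker deltas combine into exactly the term claimed in Figure~\ref{cut triangle/nabla}; the axiom and multiplicative steps are the familiar reductions.
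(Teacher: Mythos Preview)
Your overall strategy matches the paper's: reduce one principal cut by the three-way case split (atomic, $\otimes/\parr$, $\nabla_{\equiv_\alpha}/\triangle_{\equiv_\alpha}$), using Proposition~\ref{alpha-equivalent derivations} for the binding-operator case exactly as in Figure~\ref{cut triangle/nabla}, and defer the multiplicative case to the standard reduction. Your explicit size arithmetic and your remark about propagating the resulting $\beta$-equivalence downward through the remaining rules are details the paper leaves implicit.

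There is, however, a real gap in your axiom case. You assume only that $\pi_1$ is an axiom and propose to replace the cut by $\pi_2$ followed by one $(\equiv_\beta)$ step. This cannot work. In the $\eta$-long axiom $\vdash\delta::p^{i_1\ldots i_n}_{j_1\ldots j_m},\overline p^{\,l_1\ldots l_m}_{\,k_1\ldots k_n}$ all indices are pairwise distinct, so the side formula $\Gamma=\overline p^{\,l_1\ldots l_m}_{\,k_1\ldots k_n}$ that survives into the cut conclusion has \emph{different} indices from the $\overline A=\overline p^{\,j_m\ldots j_1}_{\,i_n\ldots i_1}$ appearing in $\pi_2$'s conclusion. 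The rule $(\equiv_\beta)$ rewrites only the term component, never the type context, so no application of $(\equiv_\beta)$ to $\pi_2$ can produce a sequent whose context is $\Gamma,\Theta$. The point you are missing is that a principal cut on an atomic formula forces \emph{both} premises to be $({\rm Id}_{\eta_\to})$ axioms, since that is the only rule introducing an atom; the paper then observes that the cut of two such axioms has a conclusion whose term $\beta$-reduces to the Kronecker delta of a third $({\rm Id}_{\eta_\to})$ axiom (built from the outer indices of the two premises), and takes that axiom as the replacement subderivation of size~$1$. You could alternatively salvage your version by first invoking Proposition~\ref{alpha-equivalent derivations} on $\pi_2$ to relabel its side formula to $\Gamma$, but as written your reduction does not typecheck.
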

 \begin{proof} If there is a cut
 \[
 \cfrac{\vdash
  \delta_{i_1\ldots i_n l_m\ldots l_1}^{k_n\ldots k_1j_1\ldots j_m}
 :: p^{i_1\ldots i_n}_{j_1\ldots j_m},\overline{p}^{l_1\ldots l_m}_{k_1\ldots k_n
 }~
 \vdash
   \delta_{k_n\ldots k_1 t_m\ldots t_1}^{s_n\ldots s_1l_m\ldots l_1}
  :: p^{k_n\ldots k_1}_{l_m\ldots l_1},\overline{p}^{t_1\ldots t_m}_{s_1\ldots s_n}}
 {\vdash
 \delta_{i_1\ldots i_n l_m\ldots l_1k_n\ldots k_1t_m\ldots t_1}^{k_n\ldots k_1j_1\ldots j_ms_n\ldots s_1l_m\ldots l_1}
 ::
  p^{i_1\ldots i_n}_{j_1\ldots j_m},\overline{p}^{t_1\ldots t_m}_{s_1\ldots s_n}}
  \]
 between $({\rm{Id}}_{\eta\to})$ axioms then its conclusion is $\beta$-equivalent to the $({\rm{Id}}_{\eta\to})$ axiom
 \[
 \vdash \delta_{i_1\ldots i_nt_m\ldots t_1}^{s_n\ldots s_1j_1\ldots j_m}:: p^{i_1\ldots i_n}_{j_1\ldots j_m},\overline{p}^{t_1\ldots t_m}_{s_1\ldots s_n}.
 \]

 The transformation of a principal cut between a $\triangle$- and a $\nabla$-formula is shown in Figure~\ref{cut triangle/nabla}, where, similarly to  Proposition~\ref{side cuts}, we use Proposition~\ref{alpha-equivalent derivations} and replace derivations of the premises with derivations of $\alpha$-equivalent sequents. The conclusion of the new derivation is $\beta$-equivalent to the conclusion of the original one, they are obtained from each other by renaming bound indices in the terms. (That is, the terms are $\alpha$-equivalent, but $\alpha$-equivalent terms are automatically $\beta$-equivalent).

The case of a $\otimes-$ and a $\parr-$ formula is indistinguishable from the familiar case of multiplicative linear logic. \end{proof}
\begin{cor}
The system ${\bf ETTC}_{\eta\to}$ is cut-free.
\end{cor}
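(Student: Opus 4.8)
The plan is to derive the corollary from Lemmas~\ref{side cuts} and~\ref{principal cuts} by a straightforward induction on the size of derivations, using the $(\equiv_\beta)$ rule (which is retained in ${\bf ETTC}_{\eta\to}$) to absorb the $\beta$-equivalence drift introduced when principal cuts are removed. Concretely, I would prove by induction on $\mathit{size}(\pi)$ the statement that every ${\bf ETTC}_{\eta\to}$ derivation $\pi$ of a sequent $\vdash\Sigma$ can be replaced by a cut-free ${\bf ETTC}_{\eta\to}$ derivation of the \emph{same} sequent $\vdash\Sigma$; the corollary is the special case of a derivable typing judgement.

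For the induction step, if $\pi$ contains no cut there is nothing to do. Otherwise, first apply the algorithm of Lemma~\ref{side cuts} to obtain a derivation $\pi_1$ of the same sequent $\vdash\Sigma$ with $\mathit{size}(\pi_1)=\mathit{size}(\pi)$ in which every cut is principal. Since $\pi_1$ still contains at least one cut, apply the algorithm of Lemma~\ref{principal cuts} to get a derivation $\pi_2$ of some sequent $\vdash\Sigma'$ with $\Sigma'\equiv_\beta\Sigma$ and $\mathit{size}(\pi_2)<\mathit{size}(\pi_1)=\mathit{size}(\pi)$. As the conclusion of an ${\bf ETTC}_{\eta\to}$ derivation, $\Sigma'$ is again $\eta$-long; moreover it differs from $\Sigma$ only in the term, the tensor context being left untouched by $\beta$-equivalence (and free indices being $\beta$-invariant). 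Hence the induction hypothesis applies to $\pi_2$ and yields a cut-free ${\bf ETTC}_{\eta\to}$ derivation $\pi_3$ of $\vdash\Sigma'$; attaching a single instance of the $(\equiv_\beta)$ rule to $\pi_3$ — legitimate precisely because $\Sigma'$ and $\Sigma$ differ only in the term and $\Sigma'\equiv_\beta\Sigma$ — produces a cut-free derivation of $\vdash\Sigma$, completing the induction. (Alternatively one may state the corollary as ``cut-free up to $\beta$-equivalence'' and simply observe that $\equiv_\beta$ is transitive, so that the conclusion reached after iterating the two algorithms remains $\beta$-equivalent to the original.)

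There is no real obstacle at this stage: all the substantive combinatorics — permuting side cuts past the logical rules, in particular past $(\nabla_{\equiv_\alpha})$ with the renaming of bound indices needed to avoid collisions (via Proposition~\ref{alpha-equivalent derivations}), and reducing the principal $\triangle/\nabla$ and $\otimes/\parr$ cuts — has already been carried out in Lemmas~\ref{side cuts} and~\ref{principal cuts}. The only point that deserves a word is the bookkeeping of the conclusion: each elimination of a principal cut may rename bound indices in the term and so alter $\Sigma$ only up to $\beta$-equivalence, which is exactly why the $(\equiv_\beta)$ rule (equivalently, transitivity of $\equiv_\beta$) is invoked to close the argument, and why the termination measure is taken to be $\mathit{size}$, for which Lemma~\ref{principal cuts} guarantees a strict decrease.
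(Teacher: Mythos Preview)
Your proposal is correct and follows essentially the same approach as the paper: induction on $\mathit{size}(\pi)$, using Lemma~\ref{side cuts} to make all cuts principal and Lemma~\ref{principal cuts} to strictly decrease size while drifting only up to $\beta$-equivalence, then closing with the $(\equiv_\beta)$ rule. The paper organizes the induction slightly differently (proving first that $\pi$ transforms to a cut-free derivation of some $\beta$-equivalent sequent, and applying $(\equiv_\beta)$ once at the end), but this is exactly the alternative you mention in your parenthetical remark.
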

\begin{proof} Combining Lemma~\ref{side cuts} and Lemma~\ref{principal cuts} we prove by induction on the size of an ${\bf ETTC}_{\eta\to}$ derivation $\pi$ of a sequent $\vdash\Sigma$ that $\pi$ transforms to a cut-free derivation of some $\beta$-equivalent sequent $\vdash\Sigma'$. But $\vdash\Sigma'$ is cut-free derivable from $\vdash\Sigma$ in ${\bf ETTC}_{\eta\to}$ by the $(\equiv_\beta)$ rule. \end{proof}

 \subsection{Conservativity of ${\bf ETTC}_{\eta\to}$}
\begin{defi}
If $\Gamma$ is a pseudotype context, the {\it degeneracy} $D(\Gamma)$ of $\Gamma$ is the index set
\[
D(\Gamma)=\{i
|~i\in FI^\bullet(\Gamma)\cap FI_\bullet(\Gamma)\}.
\]
If $\Sigma$ is a pseudotyping judgement of the form $t::\Gamma$, the {\it degeneracy} $D(\Sigma)$ of $\Sigma$ is the  set $D(\Sigma)=D(\Gamma)$.
A typing judgement $\widetilde\Sigma$ is an {\it $\eta$-long representative of $\Sigma$} if $\widetilde\Sigma$ has the form
\[
\delta^{\mu_1\ldots\mu_n}_{\nu_1\ldots\nu_n}t::\Gamma^{[\nicefrac{\nu_1}{i_1},\ldots,\nicefrac{\nu_n}{i_n}]}_{[\nicefrac{\mu_1}{i_1},\ldots,
\nicefrac{\mu_n}{i_n}]},
\]
where $\{i_1,\ldots,i_n\}=D(\Sigma)$.
\end{defi}
We will also use the following notation for pairs of contexts: if $\Gamma,\Theta$ are pseudotype contexts then
\be\label{intersecting indices}
D(\Gamma;\Theta)=\{i| i\in FI^\bullet(\Gamma)\cap FI_\bullet(\Theta)\},
\ee
so that
\be\label{pair degeneracy}
D(\Gamma,\Theta)=D(\Gamma)\cup D(\Theta)\cup D(\Gamma;\Theta)\cup D(\Theta;\Gamma).
\ee
\begin{lem}\label{conservativity of eta-long fragment}
If $\Sigma$ is a pseudotyping judgement and  $\vdash_{\bf ETTC}\Sigma$, then for any $\eta$-long representative $\widetilde\Sigma$ of $\Sigma$ it holds that $\vdash_{{\bf ETTC}_{\eta\to}}\widetilde\Sigma$.
\end{lem}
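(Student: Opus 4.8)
The plan is to induct on the structure (equivalently, the size) of the given $\bf ETTC$ derivation $\pi$ of $\vdash\Sigma$, producing at each step an ${\bf ETTC}_{\eta\to}$ derivation of an $\eta$-long representative of the conclusion. Two standing remarks make this manageable. First, any two $\eta$-long representatives of a fixed pseudotyping judgement differ only by renaming the introduced indices $\mu_k,\nu_k$ (which are free in the representative), hence are $\alpha$-equivalent as typing judgements; so by Proposition~\ref{alpha-equivalent derivations} it suffices to derive \emph{some} $\eta$-long representative, and we may freely $\alpha$-rename the conclusions of the subderivations furnished by the induction hypothesis. Second, $(\equiv_\beta)$ is still a rule of ${\bf ETTC}_{\eta\to}$, so it is enough to produce a derivation of a $\beta$-equivalent judgement. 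Finally, by design the axioms/rules $({\rm Id}_{\eta\to})$, $(\nabla_{\equiv_\alpha})$, $(\triangle_{\equiv_\alpha})$ are exactly the ``$\eta$-long, $\alpha$-closed'' versions of $({\rm Id})$, $(\nabla)$, $(\triangle)$.

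The routine cases. For $({\rm Id})$ the atoms $A$, $\overline A$ share no free index (they form a legal context), so $D(A,\overline A)$ is the whole index set of $A$ and the $\eta$-long representatives of $\vdash A,\overline A$ are precisely the $({\rm Id}_{\eta\to})$ axioms. The $(\parr)$ rule changes neither the term nor the free indices. For $(\equiv_\beta)$ the degeneracy set, which depends only on the context, is unchanged, and an $\eta$-long representative of the conclusion is obtained from one of the premise by multiplying the same Kronecker deltas on the left; both products being legal terms, they are $\beta$-equivalent (compatibility of $\beta$-equivalence with multiplication by a common factor, Section~\ref{tensor types section}), so one $(\equiv_\beta)$ step finishes. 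For $(\nabla)$ and $(\triangle)$ the indices bound by the introduced operator are not free, hence not in the degeneracy set, and resolving the remaining degeneracies commutes with the operator; so an $\eta$-long representative of the conclusion follows from one of the premise by a single application of $(\nabla_{\equiv_\alpha})$, resp.\ $(\triangle_{\equiv_\alpha})$, after an innocuous $\alpha$-renaming of the operator's bound indices.

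The $(\alpha\eta)$ cases require a little care: an $(\alpha\eta)$ step moves between judgements with the same $\eta$-long representatives up to $\alpha\beta$-equivalence. Recall from the well-formedness analysis that for $(\alpha\eta^\to)$ the moved index $i$ either already has both polarities in the premise context, or is matched between the premise context and the premise term. In the first situation, extracting $i$ into a Kronecker delta is literally one step of forming an $\eta$-long representative (choose the new lower name for $i$ to be $i$ itself), so every $\eta$-long representative of the conclusion is one of the premise. In the second situation, the newly created Kronecker delta $\beta$-reduces against the term and the conclusion is $\alpha\beta$-equivalent to the premise. Either way the induction hypothesis for the premise, together with $(\equiv_\beta)$ and Proposition~\ref{alpha-equivalent derivations}, yields the claim; the other three $(\alpha\eta)$ rules are symmetric.

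The main cases, $(\otimes)$ and $({\rm Cut})$. For $(\otimes)$ the two premise contexts have disjoint index sets, so $D(\Gamma, A\otimes B,\Theta)=D(\Gamma,A)\cup D(B,\Theta)$ splits cleanly and a single $(\otimes)$ of $\eta$-long representatives of the premises is an $\eta$-long representative of the conclusion. The $({\rm Cut})$ case is the crux and the main obstacle. By the induction hypothesis we have ${\bf ETTC}_{\eta\to}$ derivations of $\eta$-long representatives $\widetilde\Sigma_1$ of $\vdash t::\Gamma,A$ and $\widetilde\Sigma_2$ of $\vdash s::\overline A,\Theta$; but an index occurring in the active formula may be degenerate in one premise and not the other, hence resolved in one representative and not in the other, so the two cut formulas need not match. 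The facts that save the situation are the decomposition~(\ref{pair degeneracy}) together with the restriction $FI(\Gamma)^\dagger\cap FI(\Theta)\subseteq FI(A)$: every index that is newly degenerate in the conclusion because of cross-premise matching lies in $A$, and is then degenerate in \emph{both} premise contexts. One therefore chooses the fresh names in $\widetilde\Sigma_1$, $\widetilde\Sigma_2$ so that on the $A$/$\overline A$ side they agree, making the two cut formulas exact duals, applies $({\rm Cut})$ of ${\bf ETTC}_{\eta\to}$, and observes that the two Kronecker deltas produced by the two resolutions of such an index now meet a bound index of the merged term and $\beta$-reduce to the single Kronecker delta that resolves the corresponding degenerate index of the conclusion; degeneracies local to $\Gamma$ or to $\Theta$ are carried through unchanged. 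A final $(\equiv_\beta)$ step absorbs these reductions and Proposition~\ref{alpha-equivalent derivations} adjusts to the prescribed $\widetilde\Sigma$. The genuine difficulty is the bookkeeping: tracking, for every index, its polarity and status (bound, term--context matched, or degenerate) across both premises and the conclusion, and checking that the consistent choice of fresh names always exists; once this is organized, each case collapses to one rule application plus a $\beta$-reduction.
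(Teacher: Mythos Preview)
Your approach is the paper's: induction on the $\bf ETTC$ derivation, constructing an $\eta$-long representative at each node, and using Proposition~\ref{alpha-equivalent derivations} together with $(\equiv_\beta)$ to absorb slack. Your treatment of $({\rm Id})$, $(\equiv_\beta)$, $(\parr)$, $(\otimes)$, the $(\alpha\eta)$ rules, and $({\rm Cut})$ matches the paper's, the latter in particular relying on the same key fact $D(\Gamma;\Theta)=D(\Gamma;A)=D(\overline A;\Theta)$ (so cross-premise degeneracies lie in the cut formula and are degenerate in \emph{both} premise contexts).

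There is, however, a genuine gap in the operator case. You write that for $(\nabla)$ and $(\triangle)$ ``the indices bound by the introduced operator are not free, hence not in the degeneracy set, and resolving the remaining degeneracies commutes with the operator; so an $\eta$-long representative of the conclusion follows from one of the premise by a single application \ldots''. For $(\nabla)$ this is fine: the premise carries $\delta^\mu_\nu$ in its term, and the well-formedness constraint $I(\delta^\mu_\nu t)\perp FI(\Gamma,A)$ forces $\mu,\nu\notin D(\Sigma')$. But for $(\triangle)$ nothing prevents $\mu,\nu\in D(\Sigma')$ while $\mu,\nu\notin D(\Sigma)$; this is exactly the sub-case the paper singles out as ``the most involved''. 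In that situation the $\eta$-long representative of the premise supplied by the induction hypothesis has \emph{already} resolved $\mu$ and $\nu$, so one application of $(\triangle_{\equiv_\alpha})$ does not land on an $\eta$-long representative of the conclusion: the resulting term carries two extra Kronecker deltas which must be $\beta$-reduced (collapsing $\delta^{\nu''}_{\mu'}\delta^{\mu'}_{\nu'}\delta^{\mu''}_{\nu''}$ to a single $\delta^{\mu''}_{\nu'}$), after which a free-index $\alpha$-renaming identifies the result with the target representative. Your standing remarks provide precisely these tools, so the gap is easily closed, but the claim that a single rule application suffices is too strong as stated.
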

\begin{proof} Induction on the {\bf ETTC} derivation of $\Sigma$.

For the base we have that $\eta$-long representatives of $({\rm{Id}})$ axioms of {\bf ETTC} are $({\rm{Id}}_{\eta_\to})$ axioms of ${\bf ETTC}_{\eta\to}$.
If $\Sigma$ is obtained from a sequent $\Sigma'$ by an $(\alpha\eta)$ or the $(\equiv_\beta)$ rule then any $\eta$-long representative of $\Sigma$ is $\alpha$-equivalent to some $\eta$-long representative of $\Sigma'$ and the statement follows from the induction hypothesis. The case of the $(\otimes)$ or the $(\parr)$ rule is very easy.

Let $\vdash\Sigma$ of the form $\vdash \delta^\mu_\nu t::\triangle^\mu_\nu A,\Gamma$ be obtained from the sequent $\vdash\Sigma'$ of the form
$\vdash t::A,\Gamma$, where $(\nu,\mu)\in FI(A)$, by the $(\triangle)$ rule. There are different possibilities on the structure of $D(\Sigma')$, we will consider the most involved case, when $\mu,\nu\in D(\Sigma')$. In this case we have $\mu,\nu\not\in D(\Sigma)$ (since repetitions of $\mu,\nu$ in the context are eliminated by the $\triangle$ operator) and, in fact, $D(\Sigma')=D(\Sigma)\cup\{\mu,\nu\}$. Any $\eta$-long representative $\widetilde\Sigma$ of $\Sigma$ has the form
\[
\delta^{\mu_1\ldots\mu_n}_{\nu_1\ldots\nu_n}\cdot\delta^\mu_\nu t::(\triangle^\mu_\nu A,\Gamma)^{[\nicefrac{\nu_1}{i_1},\ldots,\nicefrac{\nu_n}{i_n}]}_{[\nicefrac{\mu_1}{i_1},\ldots,
\nicefrac{\mu_n}{i_n}]},
\]
where $\{i_1,\ldots,i_n\}=D(\Sigma)$. It follows that
 the typing judgement $\widetilde{\Sigma'}$ of the form
\[
\delta^{\mu_1\ldots\mu_n}_{\nu_1\ldots\nu_n}\cdot\delta^{\mu'\nu}_{\mu\nu'} t::( A^{[\nicefrac{\nu'}{\nu}]}_{[\nicefrac{\mu'}{\mu}]},\Gamma)^{[\nicefrac{\nu_1}{i_1},\ldots,\nicefrac{\nu_n}{i_n}]}_{[\nicefrac{\mu_1}{i_1},\ldots,
\nicefrac{\mu_n}{i_n}]}
\]
is an $\eta$-long representative of $\Sigma'$, and, by the induction hypothesis, $\vdash_{{\bf ETTC}_{\eta\to}}\widetilde{\Sigma'}$. But $\vdash\widetilde{\Sigma}$ is ${\bf ETTC}_{\eta\to}$ derivable from $\vdash\widetilde{\Sigma'}$ by the $(\triangle_{\equiv_\alpha})$ rule followed by the $(\equiv_\beta)$ rule.
Other possibilities of the $(\triangle)$ rule are similar or easier. The case of the $(\nabla)$ rule is analogous to the $(\triangle)$ rule.

The most involved case is that of the Cut rule. Let $\vdash\Sigma$ of the form $\vdash ts::\Gamma,\Theta$ be obtained from $\vdash\Sigma_1$, $\vdash\Sigma_2$ of the forms
$\vdash t::\Gamma,A$, $\vdash s::\overline A,\Theta$ respectively by the Cut rule.
\begin{figure}
  \centering
  $
  \begin{array}{c}
  D(\Gamma)=\{i_1,\ldots,i_k\},~D(\Theta)=\{j_1,\ldots,j_l\},
     ~D({\Gamma;\Theta})=\{k_1,\ldots,k_m\},
~D({\Theta;\Gamma})=\{l_1,\ldots,l_n\}.
\\[.15cm]
\delta_\Gamma=\delta^{x_1\ldots x_k}_{y_1\ldots y_k},
\quad
\delta_\Theta=\delta^{p_1\ldots p_l}_{q_1\ldots q_l},
\quad
\delta_{\Gamma\Theta}=\delta^{r_1\ldots r_m}_{s_1\ldots s_m},
\quad
\delta_{\Theta\Gamma}=\delta^{u_1\ldots u_n}_{v_1\ldots v_n},
\\[.15cm]
\Gamma'=\Gamma^{[\nicefrac{y_1}{i_1},\ldots,\nicefrac{y_k}{i_k},\nicefrac{s_1}{k_1},\ldots,\nicefrac{s_m}{k_m}]}_{[\nicefrac{x_1}{i_1},
\ldots,\nicefrac{x_k}{i_k},\nicefrac{u_1}{l_1},\ldots,\nicefrac{u_n}{l_n}]},
\quad
 \Theta'=\Theta^{[\nicefrac{q_1}{j_1},\ldots,\nicefrac{q_l}{j_l},\nicefrac{v_1}{l_1},\ldots,\nicefrac{v_n}{l_n}]}_
 {[\nicefrac{p_1}{j_1},\ldots,\nicefrac{p_l}{j_l},\nicefrac{r_1}{k_1},\ldots,\nicefrac{r_m}{k_m}]}.
 \\[.15cm]
 D(A)=\{\xi_1,\ldots,\xi_r\},\quad\mu_1,\nu_1,\ldots\mu_r,\nu_r,\sigma_1,\ldots,\sigma_m,\tau_1,\ldots,\tau_n\mbox{ fresh},
 \\[.15cm]
 \delta_{\Gamma A}=\delta_{s_1\ldots s_m}^{\sigma_1\ldots\sigma_m},
\quad
\delta_{\overline A\Theta}=\delta^{r_1\ldots r_m}_{\sigma_1\ldots\sigma_m},
\quad
\delta_{\Theta\overline A}=\delta_{v_1\ldots v_n}^{\tau_1\ldots\tau_n},
\quad\delta_{A\Gamma}=\delta^{u_1\ldots u_n}_{\tau_1\ldots\tau_n},
\\[.15cm]
\delta_A=\delta^{\mu_1\ldots \mu_r}_{\nu_1\ldots \nu_r},
\quad\delta_{\overline A}=\delta_{\mu_1\ldots \mu_r}^{\nu_1\ldots \nu_r},
\quad
A'=A^{[\nicefrac{\nu_1}{\xi_1},\ldots,\nicefrac{\nu_r}{\xi_r}\nicefrac{\tau_1}{l_1},\ldots,\nicefrac{\tau_n}{l_n}]}
_{[\nicefrac{\mu_1}{\xi_1},\ldots,\nicefrac{\mu_r}{\xi_r}\nicefrac{\sigma_1}{k_1},\ldots,\nicefrac{\sigma_m}{k_m}]}.
  \end{array}
  $
  \caption{To the proof of Lemma~\ref{conservativity of eta-long fragment}}\label{conservativity of eta-long fragment picture}
\end{figure}

Any $\eta$-long representative $\widetilde\Sigma$ of $\Sigma$ can be written as
\be\label{tildeSigma}
 \delta_\Gamma \delta_{\Theta}\delta_{\Gamma\Theta} \delta_{\Theta\Gamma} ts::\Gamma',\Theta',
\ee
 where $\Gamma'$, $\Theta'$ are obtained from $\Gamma,\Theta$ respectively by renaming repeated free indices, and
  $\delta_\Gamma$, $\delta_{\Theta}$, $\delta_{\Gamma\Theta}$, $\delta_{\Theta\Gamma}$ are Kronecker deltas corresponding to this renaming, respectively, in $D(\Gamma)$, $D(\Theta)$, $D({\Gamma;\Theta})$, $D({\Theta;\Gamma})$ (see Figure~\ref{conservativity of eta-long fragment picture} for explicit expressions with indices).

We are going to construct $\eta$-long representatives $\widetilde{\Sigma_1}$, $\widetilde{\Sigma_2}$ of $\Sigma_1$, $\Sigma_2$ respectively such that  $\widetilde\Sigma$, up to $\beta$-equivalence, is obtained from them  by a cut.

Note that
we have the equalities
\[
D({\Gamma;\Theta})=D({\Gamma;A})=D({\overline A;\Theta}),\quad D({\Theta;\Gamma})=D(\Theta;\overline A)=D({A;\Gamma}).
\]
 Indeed, if an index $i$ occurs freely both in $\Gamma$ and $\Theta$, then, in order for $\Sigma$ to be well formed, it should have just one free occurrence in $\Gamma$ and one in $\Theta$, and no occurrences in $ts$ whatsoever. Then in order for $\Sigma_1$, $\Sigma_2$ to be well-formed, there must be a free occurrence of $i$ in $A$ (hence in $\overline{A}$) to match the occurrence in $\Gamma$ for $\Sigma_1$ and in $\Theta$ for $\Sigma_2$.
With this observation is easy to construct desired $\widetilde{\Sigma_1}$, $\widetilde{\Sigma_2}$, respectively of the forms
\[
\delta_\Gamma\delta_{\Gamma A}\delta_{A\Gamma}\delta_At::\Gamma',A',
\quad\delta_\Theta\delta_{\Theta\overline A}\delta_{\overline A\Theta}\delta_{\overline{A}}s::\overline{A'},\Theta',
\]
where we use the same notational convention for Kronecker deltas as in (\ref{tildeSigma}), so that the terms satisfy the relations
\[
\delta_{\Gamma A}\delta_{\overline A\Theta}\equiv_\beta\delta_{\Gamma\Theta},\quad
\delta_{\Theta\overline A}\delta_{A\Gamma}\equiv_\beta\delta_{\Theta\Gamma},
\quad\delta_A\delta_{\overline{A}}\equiv_\beta1.
\]
Explicit formulas are given in Figure~\ref{conservativity of eta-long fragment picture}.

 Then the cut between
$\vdash\widetilde{\Sigma_1}$, $\vdash\widetilde{\Sigma_2}$ yields a sequent $\beta$-equivalent to $\vdash\widetilde\Sigma$. Since derivability in ${\bf ETTC}_{\eta\to}$ is closed under $\beta$-equivalence, the statement follows from the induction hypothesis. \end{proof}
\begin{cor}
${\bf ETTC}_{\eta\to}$ is a conservative fragment of {\bf ETTC}. $\Box$
\end{cor}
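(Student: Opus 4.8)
The plan is to obtain the statement immediately by combining the two facts already at hand. Recall that calling ${\bf ETTC}_{\eta\to}$ a \emph{conservative fragment} of ${\bf ETTC}$ amounts to two claims: that every sequent derivable in ${\bf ETTC}_{\eta\to}$ is derivable in ${\bf ETTC}$, and, conversely, that every typing judgement derivable in ${\bf ETTC}$ is already derivable in ${\bf ETTC}_{\eta\to}$. The first claim is precisely Proposition~\ref{ETTC is a fragment}, so nothing remains to be done there.

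For the converse, I would start from a typing judgement $\Sigma$ with $\vdash_{\bf ETTC}\Sigma$ and invoke Lemma~\ref{conservativity of eta-long fragment}. The only point to check is that $\Sigma$ is its own $\eta$-long representative: since a typing judgement has, by definition, a \emph{type} context $\Gamma$, we have $FI^\bullet(\Gamma)\cap FI_\bullet(\Gamma)=\emptyset$, so the degeneracy $D(\Sigma)$ is empty, the Kronecker-delta prefix in the definition of an $\eta$-long representative is vacuous, and the accompanying renaming is trivial. Hence Lemma~\ref{conservativity of eta-long fragment} applies with $\widetilde\Sigma=\Sigma$ and yields $\vdash_{{\bf ETTC}_{\eta\to}}\Sigma$, as required.

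I do not expect any real obstacle: all the substantive work --- tracking degeneracies through the rules and, in particular, splitting an $\eta$-long representative of a cut conclusion into $\eta$-long representatives of the two premises --- has already been carried out in the proof of Lemma~\ref{conservativity of eta-long fragment}. The corollary is simply the specialization of that lemma to judgements of empty degeneracy, paired with the trivial reverse inclusion supplied by Proposition~\ref{ETTC is a fragment}.
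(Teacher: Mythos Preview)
Your proposal is correct and matches the paper's intended argument exactly: the corollary is marked with $\Box$ because it is immediate from Lemma~\ref{conservativity of eta-long fragment} specialized to judgements of empty degeneracy (so that $\widetilde\Sigma=\Sigma$), together with Proposition~\ref{ETTC is a fragment} for the forward inclusion. There is nothing to add.
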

\begin{cor}
{\bf ETTC} is cut-free.
\end{cor}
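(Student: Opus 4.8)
The plan is to assemble this corollary from three facts already established: conservativity of the $\eta$-long fragment (Lemma~\ref{conservativity of eta-long fragment}), cut-elimination for ${\bf ETTC}_{\eta\to}$ (the cut-freeness result for ${\bf ETTC}_{\eta\to}$ above), and the transfer of cut-free ${\bf ETTC}_{\eta\to}$-derivations into cut-free ${\bf ETTC}$-derivations (Proposition~\ref{ETTC is a fragment}).

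Concretely, I would start from an arbitrary sequent $\vdash\Sigma$ with $\vdash_{\bf ETTC}\Sigma$ and pick an $\eta$-long representative $\widetilde\Sigma$ of $\Sigma$ --- say, the one obtained from $\Sigma$ by iterated $\eta$-expansion over the indices of $D(\Sigma)$, so that $\widetilde\Sigma$ is a genuine typing judgement and $\widetilde\Sigma\equiv_\eta\Sigma$. By Lemma~\ref{conservativity of eta-long fragment} we get $\vdash_{{\bf ETTC}_{\eta\to}}\widetilde\Sigma$; by cut-freeness of ${\bf ETTC}_{\eta\to}$, this judgement has a cut-free ${\bf ETTC}_{\eta\to}$-derivation; and by the second clause of Proposition~\ref{ETTC is a fragment} that cut-free derivation yields a cut-free ${\bf ETTC}$-derivation of $\vdash\widetilde\Sigma$. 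Finally, descending from $\widetilde\Sigma$ back to $\Sigma$ --- by reversing the $\eta$-expansions with the $(\alpha\eta_\leftarrow)$ rules, equivalently by invoking Proposition~\ref{derivability closed under equivalence}(iii) on the $\beta\eta$-equivalent sequents $\widetilde\Sigma$ and $\Sigma$ --- does not introduce any Cut, so $\vdash\Sigma$ is cut-free derivable in ${\bf ETTC}$.

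There is essentially no obstacle left in the corollary itself: all the real content resides in Lemma~\ref{conservativity of eta-long fragment} (pushing a general ${\bf ETTC}$-derivation into the $\eta$-long fragment, whose delicate case is the Cut rule with its degeneracy-set bookkeeping) and in the cut-elimination argument for ${\bf ETTC}_{\eta\to}$. The one point that still wants a moment's care is that the round trip $\Sigma\rightsquigarrow\widetilde\Sigma\rightsquigarrow\Sigma$ stays within the cut-free fragment; this is guaranteed because $\eta$-expansion/contraction and $\beta$-conversion are realized by the structural rules $(\alpha\eta)$ and $(\equiv_\beta)$ of ${\bf ETTC}$ rather than by Cut.
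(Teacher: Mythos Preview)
Your proposal is correct and follows essentially the same route as the paper's proof: pass to an $\eta$-long representative via Lemma~\ref{conservativity of eta-long fragment}, apply cut-elimination for ${\bf ETTC}_{\eta\to}$, transfer back to a cut-free {\bf ETTC} derivation via Proposition~\ref{ETTC is a fragment}, and then return to $\Sigma$ using the $(\alpha\eta)$ rules. Your explicit remark that the final step uses only structural rules and hence introduces no Cut is exactly the point the paper makes in its last sentence.
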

\begin{proof} Assume that $\vdash_{\bf ETTC}\Sigma$. Let $\widetilde\Sigma$ be an $\eta$-long form of $\Sigma$. In particular, $\widetilde\Sigma$ is an $\eta$-long representative of $\Sigma$, and, using the preceding lemma and cut-elimination for  ${{\bf ETTC}_{\eta\to}}$, the sequent $\vdash\widetilde{\Sigma}$ in cut-free derivable in ${\bf ETTC}_{\eta\to}$. Then, by Proposition~\ref{ETTC is a fragment}, the sequent $\vdash\widetilde\Sigma$ is cut-free derivable in {\bf ETTC}. But $\vdash\Sigma$ is cut-free derivable from $\vdash\widetilde\Sigma$ in {\bf ETTC} using $(\alpha\eta)$ rules. \end{proof}

\section{Natural deduction and grammars}
For derivations from nonlogical axioms it seems more convenient
to allow  variables standing for tensor terms,  in the style of natural deduction
(abbreviated below as  ``n.d.''). N.d.\  formulation might also be more convenient if we want to consider  the purely intuitionistic fragment of {\bf ETTC} that is restricted to types built using only binding operators, tensor and linear implication (the latter defined as $A\multimap B=\overline A\parr B$ or $A\multimap B=B\parr \overline A$).
\begin{defi}
Given a terminal alphabet $T$ and a countable set $\mathit{var}_{\otimes}$ of {\it tensor variable symbols} of different valencies, where valency $v(x)$ of the symbol $x$ is a pair of nonnegative integers, the set $\mathit{Var}_{\otimes}$ of {\it tensor variables} is the subset of
\[
\{
x^{i_1\ldots i_m}_{j_1\ldots j_n}|
~x\in \mathit{var}_{\otimes},v(x)=(m,n),~
i_1,\ldots, i_m,j_1,\ldots,j_n\in \mathit{Ind}
\}
\]
  satisfying the constraint that there are no repeated indices.
    {\it N.d.\  tensor terms} are defined  same as tensor terms in Definition \ref{tensor terms} except that   the set  $\mathit{Var}_{\otimes}$ is added to sets
   (\ref{generators}) of generators.

    {\it$\beta$-Reduction} of n.d.\  tensor terms is defined same as
    $\beta$-reduction of tensor terms in Definition \ref{beta-red}.
    {\it Lexical n.d.\  terms} are defined as in Definition \ref{misc. term definitions} with the addition that they contain no tensor variables.
 \end{defi}
 \begin{defi}
An {\it  n.d.\  tensor pseudotyping}, respectively {\it typing, judgement}   is an expression of the form $t:C$  where $t$ is an n.d.\  tensor term and $C$  a tensor pseudotype, respectively type, satisfying the same relation (\ref{judgement}) as $t$ and $\Gamma$ in Definition \ref{tensor sequent} of usual tensor pseudotyping judgement.
Lexical,  $\beta$-reduced and $\alpha$-, $\beta$- or $\eta$-equivalent pseudotyping judgements are defined as previously in Definition \ref{tensor sequent relations}.

A {\it variable declaration} is an n.d.\  {\it typing} judgement $X:A$, where $X$ is a tensor variable. An {\it n.d.\  typing context} $\Gamma$ is a finite {\it set} (rather than a multiset) of variable declarations that have no common  variable symbols and such that  for any two distinct variables $X,Y$ in $\Gamma$ we have $I(X)\perp I(Y)$.

An  {\it n.d.\  tensor sequent} is an expression of the form $\Gamma\vdash \Sigma$, where $\Gamma$ is an n.d.\  typing context, and $\Sigma$  an n.d.\  pseudotyping judgement. $\alpha$-, $\beta$-,   $\eta$-Equivalence of n.d.\  sequents are generated by the relation
\[
\Sigma\equiv_\xi\Sigma'\Longrightarrow (\Gamma\vdash\Sigma)\equiv_\xi (\Gamma\vdash\Sigma'),\mbox{ where }\xi\in\{\alpha,\beta,\eta\}.
\]
\end{defi}
\begin{defi}
 {\it Natural deduction system} ${\bf ETTC}_{n.d.\ }$ is given by the rules in Figure~\ref{ND ETTC} with the usual implicit restriction that all expressions are well-formed.
 \end{defi}
 \begin{figure}[htb]
\centering
{
    $
    \begin{array}{lr}
        X:A\vdash X:A~(\rm {Id})
        &
        \cfrac{\Gamma\vdash t:A,~t\equiv_\beta t'}{\Gamma\vdash t':A}~(\equiv_\beta)
        \\[.4cm]
       \cfrac{\Gamma \vdash t:A}{\Gamma\vdash\delta^{i}_{j} t:A^{[j/i]}}~(\alpha\eta^\to)
                &
        \cfrac{\Gamma\vdash\delta^{i}_{j} t:A^{[j/i]}}{\Gamma\vdash t:A}~(\alpha\eta^\leftarrow)
               \\[.4cm]
        \cfrac{\Gamma \vdash t:A}{\Gamma\vdash\delta_{j}^{i} t:A_{[i/j]}}~(\alpha\eta_\to)
        &
        \cfrac{\Gamma\vdash\delta^{i}_{j} t:A_{[i/j]}}{\Gamma\vdash t:A}_(\alpha\eta_\leftarrow)
        \\[.4cm]
         \cfrac{X:A,\Gamma\vdash  tX:B}{\Gamma\vdash t: \overline{A}\parr B}
        ~(\parr{\rm{I}}_l)
        &
        \cfrac{\Gamma\vdash t:{A}\quad\Theta\vdash s:\overline{A}\parr B}{\Gamma,\Theta\vdash ts:B}~(\parr{\rm{E}}_l)
        \\[.4cm]
        \cfrac{\Gamma,X:A\vdash  tX:B}{\Gamma\vdash t:B\parr \overline{A}}
        ~(\parr{\rm{I}}_r)
        &
        \cfrac{\Gamma\vdash s:B\parr \overline{A}\quad\Theta\vdash t:{A}}{\Gamma,\Theta\vdash st:B}~(\parr{\rm{E}}_r)
        \\[.4cm]
        \cfrac{\Gamma\vdash t:A\quad\Theta \vdash s:B}{\Gamma,\Theta\vdash ts:A\otimes B}(\otimes{\rm{I}})
        &
        \cfrac{\Theta\vdash s:A\otimes B\quad \Gamma,X:A,Y:B\vdash  tXY:C}{\Gamma,\Theta\vdash ts:C}(\otimes{\rm{E}})
        \\[.4cm]
        \cfrac{\Gamma\vdash \delta^\mu_\nu\cdot t:A}{\Gamma\vdash t:\nabla^\mu_\nu A}~(\nabla{\rm{I}})
        &
        \cfrac{\Gamma\vdash t:\nabla^\mu_\nu A}{\Gamma\vdash \delta^\mu_\nu\cdot t:A}~(\nabla{\rm{E}})\\[.4cm]
        \cfrac{\Gamma\vdash  t:A}{\Gamma\vdash \delta_\mu^\nu\cdot t:\triangle^\mu_\nu A}~(\triangle{\rm{I}})
        &
        \cfrac{\Gamma\vdash   t:\triangle^\mu_\nu A\quad X:A,\Theta\vdash\delta_\mu^\nu sX:B}{\Gamma,\Theta\vdash st:B}~ (\triangle{\rm{E}})
        \end{array}
    $
}
\caption{Natural deduction for {\bf ETTC} }
\label{ND ETTC}
\end{figure}
Note that there are two pairs of introduction/elimination rules for the $\parr$ connective. They correspond to two different encodings of implication. If we are interested only in the intuitionistic fragment then one of the two pairs (depending on the chosen encoding) is not needed.
\begin{prop}\label{n.d.  derivability closed under equivalence}
In  ${\bf ETTC}_{n.d.}$, $\alpha\beta\eta$-equivalent sequents are derivable from each other.
\end{prop}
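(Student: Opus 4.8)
The plan is to mirror the proof of Proposition~\ref{derivability closed under equivalence}, now in the n.d.\ setting. All the action takes place on the right of the turnstile, since $\alpha\beta\eta$-equivalence of n.d.\ sequents is generated purely from equivalence of the pseudotyping judgement $\Sigma$, the context $\Gamma$ being left untouched; moreover every generating step is reversible (the $(\alpha\eta)$ rules come in $\to/\leftarrow$ pairs, $(\equiv_\beta)$ is symmetric, and the auxiliary lemma below is symmetric because $\equiv_\alpha$ is). So it suffices, for each generating relation on $\Sigma$, to establish one direction of derivability.

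Dealing first with the easy relations: if $\Sigma\equiv_\beta\Sigma'$, then $\Gamma\vdash\Sigma$ and $\Gamma\vdash\Sigma'$ are interderivable by the $(\equiv_\beta)$ rule; if $\Sigma'$ is a one-step $\eta$-expansion of $\Sigma$, then $\Gamma\vdash\Sigma'$ follows from $\Gamma\vdash\Sigma$ by $(\alpha\eta^\to)$ or $(\alpha\eta_\to)$, and conversely by the matching $(\alpha\eta^\leftarrow)$ or $(\alpha\eta_\leftarrow)$ rule. For $\alpha$-equivalence, recall that $\alpha$-equivalence of a typing judgement $t:C$ is generated by (a) $t\equiv_\alpha t'$, $C\equiv_\alpha C'$, and (b) renaming one free index throughout. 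In (a) the term component is absorbed by $(\equiv_\beta)$, as $\alpha$-equivalent terms are $\beta$-equivalent. In (b), a free index $i$ of a typing judgement occurs exactly once in $t$ and once in $C$, with opposite polarities, so renaming $i$ to a fresh $j$ amounts to one application of $(\alpha\eta^\to)$ or $(\alpha\eta_\to)$ --- according to the polarity of $i$ in $C$ --- followed by $(\equiv_\beta)$ to absorb the introduced Kronecker delta, e.g.\ $\delta^i_j t\equiv_\beta t_{[\nicefrac{j}{i}]}$.

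The only substantive point is the formula component of (a): I would establish, by induction on the tensor type $C$, the auxiliary claim that $\Gamma\vdash t:C$ together with $C\equiv_\alpha C'$ yields $\Gamma\vdash t:C'$. The $\otimes$- and $\parr$-cases go through by the matching introduction and elimination rules in the evident way, and a step under a binding operator reduces --- by eliminating the operator, invoking the induction hypothesis on the subformula, and reintroducing it --- to the base case, where $C=Q^\mu_\nu A$ and $C'=Q^{\mu'}_{\nu'}\bigl(A^{[\nicefrac{\nu'}{\nu}]}_{[\nicefrac{\mu'}{\mu}]}\bigr)$ with $Q\in\{\nabla,\triangle\}$ and $\mu',\nu'$ fresh. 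For $Q=\nabla$ one applies $(\nabla{\rm E})$, renames the freed occurrences of $\mu$ and $\nu$ using the $(\alpha\eta)$ rules and $(\equiv_\beta)$ (rewriting products of Kronecker deltas such as $\delta^{\mu'}_{\nu}\delta^\nu_{\nu'}\equiv_\beta\delta^{\mu'}_{\nu'}$), and reintroduces the operator by $(\nabla{\rm I})$. For $Q=\triangle$ the elimination rule $(\triangle{\rm E})$ consumes the $\triangle$-type against a variable declaration, so the re-packaging instead runs via an identity axiom $X:A\vdash X:A$, the same renaming applied to it, an application of $(\triangle{\rm I})$, and then $(\triangle{\rm E})$ with empty side context and trivial term. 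This is the n.d.\ counterpart of Figure~\ref{alpha equiv formulas are equivalent figure}.

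The expected obstacle is entirely bookkeeping: picking the fresh indices so that every intermediate sequent remains well-formed --- in particular that they do not clash with indices in $\Gamma$ or elsewhere in the term --- and selecting the correct upper-versus-lower variant of each $(\alpha\eta)$ rule. No idea beyond those already used for Proposition~\ref{derivability closed under equivalence} is required.
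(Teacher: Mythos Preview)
Your proposal is correct and follows the same strategy the paper gestures at (``Similar to Proposition~\ref{derivability closed under equivalence}''), adapting the sequent-calculus argument to the n.d.\ setting via introduction/elimination rules in place of Cut. One small omission: in case (b) you assume the free index $i$ has one occurrence in $t$ and one in $C$, but this holds only when $C$ is a genuine type; for a pseudotype with $(i,i)\in FI(C)$ the index sits twice in $C$ and not in $t$, and there you need the two-step $(\alpha\eta^\to)$-then-$(\alpha\eta_\leftarrow)$ manoeuvre exactly as in the first derivation of Figure~\ref{Claim iv}.
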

\begin{proof} Similar to Proposition~\ref{derivability closed under equivalence}. \end{proof}

\begin{prop}\label{Subst}
The system ${\bf ETTC}_{n.d.\ }$ is closed under the following {\it substitution} rule
\[
\cfrac
{
\Gamma\vdash t:A\quad X:A,\Theta\vdash X\cdot s:B
}
{
\Gamma,\Theta\vdash ts:B
}
({\rm{Sb}})
\]
\end{prop}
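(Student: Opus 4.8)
The plan is to prove closure under $(\mathrm{Sb})$ by induction on the derivation of the right premise $X:A,\Theta\vdash X\cdot s:B$, treating $(\mathrm{Sb})$ as the natural-deduction analogue of Cut: one ``plugs'' a fixed derivation $\delta$ of the left premise $\Gamma\vdash t:A$ into the leaf $X:A\vdash X:A$ at which the hypothesis $X:A$ originates. Two preliminary observations set things up. First, an easy induction on n.d.\ derivations shows that every variable declared in the context of a derivable sequent occurs exactly once in the term of its conclusion (the $\beta$-reductions and the $(\alpha\eta)$ rules only touch Kronecker deltas and elementary regular/singular subterms, never variables; the binary rules thread variables linearly). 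Hence a derivable sequent with $X:A$ in its context has a term of the form $X\cdot u$ for a unique $u$, which is exactly what makes the statement well-posed and is preserved along the induction. Second, using Proposition~\ref{n.d.  derivability closed under equivalence} together with the obvious $\alpha$-renamability of n.d.\ derivations (the analogue of Proposition~\ref{alpha-equivalent derivations}), I may assume from the start that all indices occurring in $\delta$ other than the free indices of $A$ --- which must coincide with those of $X$ --- are fresh relative to the entire right derivation; this is precisely the hygiene needed for all products formed below to be defined and for all rule side-conditions to remain satisfied after the plug-in. We also use that the conclusion $\Gamma,\Theta\vdash ts:B$ is well-formed, which is part of the implicit restriction attached to the rule $(\mathrm{Sb})$.

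For the base case the right derivation is the axiom $X:A\vdash X:A$; then $\Theta$ is empty, $B=A$, $s=1$, and the required $\Gamma\vdash t\cdot 1:A$ is literally the left premise. For the inductive step I go through the last rule of the right derivation. The coordinate-type rules $(\equiv_\beta)$ and the four $(\alpha\eta)$ rules, together with $(\nabla\mathrm{I})$, $(\nabla\mathrm{E})$, $(\triangle\mathrm{I})$, act on the term and type of a single premise without disturbing the context; I apply the induction hypothesis to that premise and re-apply the same rule, the only point to check being that the premise term, of the form $X\cdot s'$ with $s'$ obtained from $s$ by prepending a Kronecker delta and/or relabelling an index, can be rewritten with $X$ pulled out in front --- which is just commutativity and strict associativity of the (defined) product, or at worst one $(\equiv_\beta)$ step. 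For the discharging introduction rules $(\parr\mathrm{I}_l)$, $(\parr\mathrm{I}_r)$ and for the binary rules $(\parr\mathrm{E}_l)$, $(\parr\mathrm{E}_r)$, $(\otimes\mathrm{I})$, $(\otimes\mathrm{E})$, $(\triangle\mathrm{E})$, the hypothesis $X:A$ --- undischarged in the conclusion, hence distinct from any variable discharged by the last rule (which may be assumed by renaming) --- lies in the context of exactly one premise. I apply the induction hypothesis to that premise, substituting $\delta$, and re-apply the rule verbatim with the remaining premise(s) unchanged; associativity of the product again yields that the resulting conclusion term is $t\cdot s$, and well-formedness of the re-applied rule follows from the freshness arrangement and the assumed well-formedness of $\Gamma,\Theta\vdash ts:B$.

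The step I expect to be the main obstacle is not the combinatorics of the case analysis --- each case is a one-line reshuffle --- but the index hygiene. As stressed in the remarks on binding and multiplication, term multiplication in {\bf ETTC} is only partially defined and does not commute freely with $\alpha$- or $\beta$-equivalence: it can happen that $X\cdot s'$ is a term while a naively renamed $t\cdot s'$ is not, because the free indices of $t$ that are ``exposed'' by $\Gamma$ may collide with indices occurring (bound or free) elsewhere in the right derivation. The delicate part is therefore to justify once and for all, via the $\alpha$-renaming of $\delta$ carried out at the outset, that every product $t\cdot s'$ met during the induction is a legitimate term and that every re-applied rule lands on a well-formed sequent; after that set-up the induction goes through mechanically.
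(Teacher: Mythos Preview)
Your induction on the right derivation is sound and the hygiene analysis is the right thing to worry about, but the paper proves this in two lines by a completely different and much shorter route: from the right premise $X{:}A,\Theta\vdash X\cdot s:B$ apply $(\parr\mathrm{I}_l)$, discharging $X{:}A$, to obtain $\Theta\vdash s:\overline{A}\parr B$ (using commutativity of term multiplication to write $X\cdot s=s\cdot X$), and then apply $(\parr\mathrm{E}_l)$ with the left premise $\Gamma\vdash t:A$ to get $\Gamma,\Theta\vdash ts:B$. No induction, no case analysis, no renaming gymnastics.

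The contrast is instructive. Your approach treats $(\mathrm{Sb})$ as a primitive cut and eliminates it by pushing the substitution up through the right derivation; this is the generic strategy one would use in a system lacking an internalised function type, and it forces you to confront the partial nature of term multiplication head-on. The paper instead exploits that $\parr$ already internalises substitution: $(\parr\mathrm{I}_l)$ abstracts the hypothesis into a cotensor type and $(\parr\mathrm{E}_l)$ is exactly application, so $(\mathrm{Sb})$ factors through two existing rules. All the index-collision worries are then absorbed into the single implicit well-formedness side condition on those two rule instances, which follows directly from the assumed well-formedness of the $(\mathrm{Sb})$ conclusion. Your route would be necessary in, say, the purely multiplicative fragment without $\parr$-introduction, but here it is considerable overkill.
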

\begin{proof} The rule is emulated with the  $(\parr{\rm{I}}_l)$ rule followed by the $(\parr{\rm{E}}_l)$ rule. \end{proof}

We defined typing contexts in n.d.\  sequents as {\it unordered} sets. However, notationally it will be convenient to consider them as ``vectors'' with enumerated components as in the following.

Let $\overrightarrow{X}=\{X_{(1)},\ldots,X_{(n)}\}$ be  a sequence of  tensor variables and  $\Gamma=A_{(1)},\ldots,A_{(n)}$ be an ordered tensor type context
 (we put sequence numbers in  brackets in order to avoid confusion with indices).
 We denote the n.d.\  typing context
 $X_{(1)}:A_{(1)},\ldots,X_{(n)}:A_{(n)}$ as  $\overrightarrow{X}:\Gamma$. Given a term $t$, we denote  the term $tX_{(1)}\cdots X_{(n)}$ as $t\overrightarrow{X}$. Finally, we write $\overline\Gamma$ for the ordered type context $\overline{A_{(n)}},\ldots,\overline{A_{(1)}}$.  (As usual, it is implicitly assumed  that all these expressions are well-formed).
 \begin{defi}
An n.d.\  tensor sequent is {\it standard} if it has the form
$\overrightarrow{X}:\Gamma\vdash t\overrightarrow{X}:B$.
\end{defi}
\begin{prop}
Any sequent  derivable in  ${\bf ETTC}_{n.d.\ }$ is
standard. $\Box$
\end{prop}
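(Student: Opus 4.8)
The plan is to prove, by induction on the ${\bf ETTC}_{n.d.\ }$ derivation, a statement slightly stronger than standardness: every derivable sequent is standard, say of the form $\overrightarrow X:\Gamma\vdash t\overrightarrow X:B$ (for some enumeration of the unordered context, which is harmless since the term monoid is commutative), and moreover the ``core'' term $t$ contains no tensor variable. The only structural fact about terms that is really needed is that every element of $\mathit{Var}_{\otimes}$, and every Kronecker delta $\delta^i_j$, is a single generator of the term monoid that never occurs on either side of any relation in (\ref{tensor term relations}); consequently such a generator, once present in a term, occurs there exactly once and is neither created, destroyed, nor moved by a $\beta$-reduction. In particular every term $\beta$-equivalent to a product $s\overrightarrow X$ is again of the form $s'\overrightarrow X$, and whenever a term that is displayed with a leading Kronecker-delta factor also equals $(\text{core})\,\overrightarrow X$, that factor must already sit inside the core.

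First I would dispose of the base case: the axiom $(\mathrm{Id})$ gives $X:A\vdash X:A$, which is standard with $\Gamma=A$, $\overrightarrow X=(X)$ and variable-free core $t=1$. Then I would go through the rules of Figure~\ref{ND ETTC} in groups. The rules that leave the context untouched and merely replace the core by a $\beta$-equivalent term or multiply it by a Kronecker delta --- $(\equiv_\beta)$, $(\alpha\eta^\to)$, $(\alpha\eta_\to)$, $(\nabla\mathrm{E})$, $(\triangle\mathrm{I})$ --- preserve the invariant at once by the remark above; so do $(\alpha\eta^\leftarrow)$, $(\alpha\eta_\leftarrow)$ and $(\nabla\mathrm{I})$, whose premise carries a displayed $\delta$-factor which, by the induction hypothesis, must lie inside the core and so can be deleted without disturbing the variables. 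The binary rules that juxtapose two standard sequents --- $(\otimes\mathrm{I})$, $(\parr\mathrm{E}_l)$, $(\parr\mathrm{E}_r)$ --- produce a conclusion whose context has as declared variables the (disjoint, by well-formedness) union of the two variable lists, and whose term $t\overrightarrow X\, s\overrightarrow Y$ equals, the monoid being commutative, the core $ts$ times the product of all variables of $\Gamma,\Theta$, with $ts$ still variable-free.

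The interesting rules are those discharging a hypothesis: $(\parr\mathrm{I}_l)$, $(\parr\mathrm{I}_r)$, $(\otimes\mathrm{E})$, $(\triangle\mathrm{E})$. For $(\parr\mathrm{I}_l)$, say, the premise $X:A,\Gamma\vdash tX:B$ is standard by the induction hypothesis, so its term equals $u\,X\,\sigma$, where $\sigma$ is the product of the variables declared in $\Gamma$ and $u$ is variable-free; matching this against the displayed form $tX$ and cancelling the single occurrence of the generator $X$ in the free commutative monoid yields $t=u\,\sigma$, which exhibits the conclusion $\Gamma\vdash t:\overline A\parr B$ as standard with variable-free core $u$. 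The remaining three discharge rules are treated in exactly the same way: one reads off from the induction hypothesis that the discharged premise's term is $(\text{core})\cdot(\text{discharged variables})\cdot(\text{remaining variables})$, possibly times a displayed Kronecker delta which --- being a generator distinct from all those variables --- lies in the core; then one cancels the discharged variable(s), absorbs the delta, multiplies by the core of the second premise when there is one, and checks that what is left is (variable-free core) times (all variables of the conclusion context).

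I do not expect a genuine obstacle in this argument. The one point that needs care --- and it recurs in several rules --- is reconciling the \emph{literal} shape in which a rule displays its premise (a product with a trailing variable such as $tX$ or $tXY$, or a product with a leading Kronecker delta such as $\delta^i_j t$ or $\delta^\nu_\mu sX$) with the \emph{standard} shape furnished by the induction hypothesis; this reconciliation is precisely a cancellation in the free commutative monoid of terms, which is legitimate because tensor variables and Kronecker deltas are atomic generators that $\beta$-reduction leaves alone. So it is a matter of careful bookkeeping rather than a real difficulty.
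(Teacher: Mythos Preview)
Your induction is exactly the routine verification the paper has in mind (the paper leaves the proof as a bare $\Box$), and it goes through. One small correction: Kronecker deltas are \emph{not} inert under $\beta$-reduction --- $\delta^i_j=[\epsilon]^i_j$ does appear in the relations of (\ref{tensor term relations}), e.g.\ $\delta^j_i\cdot[v]^k_j\to_\beta[v]^k_i$ --- but you do not actually need that claim: in the $(\alpha\eta)$, $(\nabla)$ and $(\triangle)$ cases you are matching two \emph{literally equal} elements of the free commutative monoid, so ordinary cancellation suffices, and for $(\equiv_\beta)$ only the (correct) inertness of tensor variables is required.
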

\begin{prop}[``Deduction theorem'']\label{extended deduction theorem}
Consider a finite set
\[
\Xi=\{\tau_{(1)}:A_{(1)},\ldots,\tau_{(n)}:A_{(n)}\}
\]
 of  n.d.\   lexical  typing judgements
with
$FI(A_{(i)})\perp FI(A_{(j)})$
for $i\not=j$, $i,j=1,\ldots,n$.

An n.d.\  sequent $\Sigma$ of the form $\Gamma\vdash t:B$
 is ${\bf ETTC}_{n.d.\ }$ derivable
 from  $\Xi$ using each element of $\Xi$ exactly once  iff there exist an n.d.\  sequent
 $\Sigma'$ of the form
 \be\label{Sigma'}
 X_{(1)}:A_{(1)},\ldots,X_{n}:A_{(n)},\Gamma\vdash t'X_{(1)}\ldots X_{(n)}:B
 \ee
  derivable in ${\bf ETTC}_{n.d.\ }$ without nonlogical axioms such that
  \be\label{t'}
  t'\cdot\tau_{(1)}\ldots\tau_{(n)}\equiv_\beta t.
  \ee
\end{prop}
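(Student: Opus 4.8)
The plan is to prove the two implications separately, both by induction on derivations; the ``if'' direction is by far the easier one.

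For ``if'', assume that $\Sigma'$ of the form (\ref{Sigma'}) is derivable without nonlogical axioms and that $t'\cdot\tau_{(1)}\cdots\tau_{(n)}\equiv_\beta t$. Since the monoid of n.d.\ terms is commutative, $\Sigma'$ may be rewritten as $X_{(1)}:A_{(1)},\Delta_1\vdash X_{(1)}\cdot u_1:B$ with $\Delta_1=X_{(2)}:A_{(2)},\ldots,X_{(n)}:A_{(n)},\Gamma$ and $u_1=t'X_{(2)}\cdots X_{(n)}$. Applying the substitution rule $({\rm{Sb}})$ of Proposition~\ref{Subst} with first premise the nonlogical axiom $\tau_{(1)}:A_{(1)}$ — which is legitimate because $FI(\tau_{(1)})=FI(A_{(1)})^\dagger=FI(X_{(1)})$, so that, given the hypothesis $FI(A_{(i)})\perp FI(A_{(j)})$ and well-formedness of $\Sigma'$, no index collisions arise — yields $\Delta_1\vdash\tau_{(1)}t'X_{(2)}\cdots X_{(n)}:B$. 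Iterating over $\tau_{(2)},\ldots,\tau_{(n)}$ produces a derivation from $\Xi$ of $\Gamma\vdash\tau_{(1)}\cdots\tau_{(n)}t':B$ using each $\tau_{(i)}$ exactly once, and a final application of $(\equiv_\beta)$ rewrites the term to $t$ by (\ref{t'}).

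For ``only if'' I argue by induction on a derivation $\pi$ of $\Gamma\vdash t:B$ from $\Xi$ in which each element of $\Xi$ is used exactly once; the induction hypothesis is the statement of the proposition applied to each proper subderivation of $\pi$, with $\Xi$ split, at two-premise rules, into the two disjoint subsets used in the two premises. The idea is to turn each leaf that is a nonlogical axiom $\tau_{(i)}:A_{(i)}$ into the logical axiom $X_{(i)}:A_{(i)}\vdash X_{(i)}:A_{(i)}$, propagate the declarations $X_{(i)}:A_{(i)}$ into all contexts below, and re-apply, at each node, the last rule of $\pi$; here one uses the preceding proposition that every derivable n.d.\ sequent is standard, so that the ``core'' of a lifted premise term is obtained simply by deleting the context variables. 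That the resulting core $t'$ still satisfies $t'\cdot\tau_{(1)}\cdots\tau_{(n)}\equiv_\beta t$ follows from the compatibility of $\beta$-equivalence with term multiplication recorded in Section~\ref{tensor types section}, together with the fact that $FI(\tau_{(i)})=FI(X_{(i)})$, so that substituting $\tau_{(i)}$ for $X_{(i)}$ alters no free indices (bound indices being $\alpha$-renamed whenever a collision threatens). The base cases are immediate: a derivation that is a single $(\rm{Id})$ axiom uses no nonlogical axiom, so $n=0$ and $\Sigma'=\Sigma$, while a single nonlogical axiom $\tau_{(i)}:A_{(i)}$ lifts to $X_{(i)}:A_{(i)}\vdash X_{(i)}:A_{(i)}$ with $t'$ the monoid unit; and the $(\equiv_\beta)$ steps of $\pi$ are simply absorbed.

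I expect the main difficulty to be the inductive cases for the rules $(\nabla{\rm{I}})$, $(\triangle{\rm{E}})$, $(\alpha\eta^\leftarrow)$ and $(\alpha\eta_\leftarrow)$: each can be applied only when the premise term already exhibits a specific Kronecker delta as a literal factor, whereas the induction hypothesis controls the lifted premise only up to $\beta$-equivalence. The remedy is to first $\alpha$-rename, fresh with respect to $\Xi$, the indices carried by that delta; since the $\tau_{(i)}$ are lexical — hence contain no Kronecker deltas — and mention none of those indices, the delta in question is then ``inert'' in the premise term of $\pi$, in the sense that no $\beta$-reduction can cancel it against the substituted lexical terms or against the reduction-inert context variables. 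Consequently the $\beta$-equivalence supplied by the induction hypothesis forces the lifted premise core to contain that delta as a factor up to $\equiv_\beta$ as well, so for $(\nabla{\rm{I}})$ and $(\triangle{\rm{E}})$ a single $(\equiv_\beta)$ step brings the lifted premise into the shape required by the rule; for $(\alpha\eta^\leftarrow)$ and $(\alpha\eta_\leftarrow)$ it is cleaner to apply instead the corresponding forward rule $(\alpha\eta^\to)$, resp.\ $(\alpha\eta_\to)$, with the index renaming reversed, the term bookkeeping closing via $\delta^i_j\cdot\delta^j_i\equiv_\beta 1$ and $\delta^i_i\equiv_\beta 1$. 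All remaining rules — $(\nabla{\rm{E}})$, $(\triangle{\rm{I}})$, the forward $(\alpha\eta)$-rules, $(\parr{\rm{I}}_l)$, $(\parr{\rm{I}}_r)$, $(\parr{\rm{E}}_l)$, $(\parr{\rm{E}}_r)$, $(\otimes{\rm{I}})$, $(\otimes{\rm{E}})$ — are handled directly by re-applying the same rule to the lifted premise(s) and rearranging the resulting product of terms, the $\beta$-equivalence of cores being preserved by congruence.
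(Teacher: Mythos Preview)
Your proposal is correct and follows essentially the same approach as the paper: the ``if'' direction via iterated applications of the substitution rule (Proposition~\ref{Subst}) followed by $(\equiv_\beta)$, and the ``only if'' direction by induction on the derivation from $\Xi$, replacing each nonlogical axiom leaf by the corresponding logical identity axiom and re-applying rules. The paper's own proof is extremely terse (``easy induction on the n.d.\ derivation of $\Sigma$''), whereas you supply considerably more detail, in particular isolating the cases $(\nabla{\rm{I}})$, $(\triangle{\rm{E}})$, $(\alpha\eta^\leftarrow)$, $(\alpha\eta_\leftarrow)$ where the rule requires a Kronecker delta as a literal factor of the premise term, and handling them by $\alpha$-renaming the relevant indices fresh with respect to $\Xi$ and exploiting lexicality of the $\tau_{(i)}$; this is a genuine subtlety the paper leaves implicit, and your treatment of it is sound.
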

\begin{proof} If there is $\Sigma'$ satisfying (\ref{Sigma'}), (\ref{t'}), then
$\Sigma$ is ${\bf ETTC}_{n.d.\ }$ derivable from $\Xi$ and $\Sigma'$ by a series of substitution rules using each element of $\Xi$ exactly once. Thus, if $\vdash_{{\bf ETTC}_{n.d.\ }}\Sigma'$ then $\Sigma$
 is ${\bf ETTC}_{n.d.\ }$ derivable from $\Xi$ as required.  The other direction is proven by an easy induction on the n.d.\ derivation of $\Sigma$. \end{proof}

\subsection{Translation to sequent calculus}
\begin{prop}\label{translations}
Given  a standard n.d.\  sequent $\Xi$ of the form
$\overrightarrow{X}:\Gamma\vdash t\overrightarrow{X}:A$,  the expression $\Xi'$ of the form
 $
 t::A,\overline{\Gamma}
$ is  a well-defined tensor pseudotyping judgement.
\end{prop}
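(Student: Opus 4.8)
The plan is to verify directly the two clauses of Definition~\ref{tensor sequent} for the expression $\Xi'=t::A,\overline{\Gamma}$: that $A,\overline{\Gamma}$ is a legitimate tensor context, and that $t$ together with this context satisfies the index-matching condition~(\ref{judgement}). The input data is already well behaved: since $\Xi$ is standard it reads $\overrightarrow X:\Gamma\vdash t\overrightarrow X:A$ with $\overrightarrow X:\Gamma=(X_{(1)}:A_{(1)},\dots,X_{(n)}:A_{(n)})$ an n.d.\ typing context, so each $X_{(i)}:A_{(i)}$ is a \emph{typing} judgement, whence $FI(X_{(i)})=FI(A_{(i)})^{\dagger}$, and, as tensor variables carry no repeated indices, $I(X_{(i)})=FI(X_{(i)})$; moreover $I(X_{(i)})\perp I(X_{(j)})$ for $i\neq j$, and $t\overrightarrow X:A$ is a well-formed n.d.\ pseudotyping judgement, so $P:=t\overrightarrow X=tX_{(1)}\cdots X_{(n)}$ is a genuine term and~(\ref{judgement}) holds for $P$ and $A$.

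First I would handle the context and the first half of~(\ref{judgement}). Duality sends types to types, so each $\overline{A_{(i)}}$ is a pseudotype, with $FI(\overline{A_{(i)}})=FI(A_{(i)})^{\dagger}=I(X_{(i)})$. Pairwise disjointness of the $\overline{A_{(i)}}$ is then just $I(X_{(i)})\perp I(X_{(j)})$, and disjointness of $A$ from each $\overline{A_{(i)}}$ follows from $I(P)\perp FI(A)$ together with $I(X_{(i)})\subseteq I(P)$; hence $A,\overline{\Gamma}$ is a tensor context. For $I(t)\perp FI(A,\overline{\Gamma})$: the relation $I(t)\perp FI(A)$ follows from $I(t)\subseteq I(P)$ and $I(P)\perp FI(A)$, whereas $I(t)\perp FI(\overline{A_{(i)}})=I(X_{(i)})$ holds because $t$ and $X_{(i)}$ are both factors of the well-formed term $P$, so no index can occur twice as an upper (or twice as a lower) index among them.

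The main point is the remaining balance condition $FI(t)\cup FI(A,\overline{\Gamma})=\bigl(FI(t)\cup FI(A,\overline{\Gamma})\bigr)^{\dagger}$. Using $FI(\overline{A_{(i)}})=I(X_{(i)})$, its upper component is $S^{\bullet}:=FI^{\bullet}(t)\cup FI^{\bullet}(A)\cup\bigcup_i I^{\bullet}(X_{(i)})$ and its lower component is the analogous set $S_{\bullet}$; I must show $S^{\bullet}=S_{\bullet}$. I would compare with $FI(P)\cup FI(A)$, which is $\dagger$-symmetric by hypothesis. Because $P$ is a well-formed term, combining the free-index sets of its factors $t,X_{(1)},\dots,X_{(n)}$ shows that $FI^{\bullet}(P)=\bigl(FI^{\bullet}(t)\cup\bigcup_i I^{\bullet}(X_{(i)})\bigr)\setminus C$, where $C:=\bigl(FI^{\bullet}(t)\cup\bigcup_i I^{\bullet}(X_{(i)})\bigr)\cap\bigl(FI_{\bullet}(t)\cup\bigcup_i I_{\bullet}(X_{(i)})\bigr)$ is the set of indices ``contracted'' between two of the factors; the bound indices internal to $t$ drop out of this computation, being disjoint from $A$ and from every $X_{(i)}$ and cancelling between $I^{\bullet}(P)$ and $I_{\bullet}(P)$. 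The same set $C$ is removed from the lower side, and $C$ lies inside both $FI^{\bullet}(t)\cup\bigcup_i I^{\bullet}(X_{(i)})$ and $FI_{\bullet}(t)\cup\bigcup_i I_{\bullet}(X_{(i)})$; re-adding it therefore gives $S^{\bullet}=\bigl(FI^{\bullet}(P)\cup FI^{\bullet}(A)\bigr)\cup C$ and $S_{\bullet}=\bigl(FI_{\bullet}(P)\cup FI_{\bullet}(A)\bigr)\cup C$, and these coincide since $FI(P)\cup FI(A)$ is $\dagger$-symmetric. This yields the claim.

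The step I expect to be the real work is the identification $FI^{\bullet}(P)=\bigl(FI^{\bullet}(t)\cup\bigcup_i I^{\bullet}(X_{(i)})\bigr)\setminus C$ and its lower analogue: one has to track carefully how the upper and lower index occurrences of the factors $t,X_{(1)},\dots,X_{(n)}$ distribute over the product, invoking well-formedness of $P$ (no triple occurrences, no same-polarity collisions) to see that exactly the self-paired set $C$ disappears when forming $P$, so that re-adding it restores the $\dagger$-symmetry rather than breaking it. Everything else — the context conditions and the $\perp$-conditions of~(\ref{judgement}) — is routine index bookkeeping built from the well-formedness hypotheses on $\Xi$.
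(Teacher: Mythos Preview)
Your proof is correct and follows the same overall strategy as the paper: establish $FI(\overline{A_{(i)}})=I(X_{(i)})$, verify that $A,\overline{\Gamma}$ is a tensor context, check $I(t)\perp FI(A,\overline{\Gamma})$, and then derive the balance condition from the one satisfied by $P=t\overrightarrow{X}$ and $A$.

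The one place where you work harder than the paper is the balance condition. You track the ``contracted'' set $C$ in order to relate $FI^{\bullet}(P)$ to $FI^{\bullet}(t)\cup\bigcup_i I^{\bullet}(X_{(i)})$, and then re-add $C$ on both sides. The paper avoids this entirely by working with $I(\widetilde t)$ rather than $FI(\widetilde t)$: since indices of a product are simply the union of indices of the factors, one has $I(\widetilde t)=I(t)\cup I(\overrightarrow{X})=I(t)\cup FI(\overline{\Gamma})$ with no subtraction, so the balance $I^{\bullet}(\widetilde t)\cup FI^{\bullet}(A)=I_{\bullet}(\widetilde t)\cup FI_{\bullet}(A)$ rewrites immediately as $I^{\bullet}(t)\cup FI^{\bullet}(\Theta)=I_{\bullet}(t)\cup FI_{\bullet}(\Theta)$. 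This is equivalent to the $FI$-version of~(\ref{judgement}) because bound indices of $t$ lie in both $I^{\bullet}(t)$ and $I_{\bullet}(t)$ and are disjoint from $FI(\Theta)$ by the first clause. Your route is sound, just longer; the paper's shortcut is worth noting.
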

\begin{proof} Let us denote $\widetilde t=t\overrightarrow{X}$ and $\Theta=A,\overline\Gamma$.

The n.d.\  type context
$\overrightarrow{X}:\Gamma$, by definition, consists of  variable declarations. In any variable declaration $Y:F$ we have that $FI(Y)=FI^\dagger(F)$, because it is an n.d.\  typing judgement, and we have $I(Y)=FI(Y)$, because $Y$, being a tensor variable, has no repeated indices. It follows that, for $FI(\overrightarrow{X})=\bigcup\limits_{Y\in\overrightarrow{X}}FI(Y)$, we have
$I(\overrightarrow{X})=FI^\dagger(\Gamma)=FI(\overline\Gamma)
$.

Since $\widetilde t$ is  a well-formed term we have
that
$I(t)\perp I(\overrightarrow{X})$, $I(Y)\perp I(Y')~\forall Y,Y'\in \overrightarrow{X}$.
Since $\Sigma$
is a well-defined n.d.\  pseudotyping judgement
we have $I(\widetilde t)\perp FI(A)$,
$FI(\overrightarrow{X})\perp FI(A)$.
It follows that
$FI(\overline\Gamma)\perp FI(A)$, $FI(F)\perp FI(F')~\forall F,F'\in \overrightarrow{\Gamma}$,
$I(t)\perp FI(\overline\Gamma)$, and $I(t)\perp FI(A)$.
 The first two conditions mean that $\Theta$ is a well-defined pseudotype context, and the last two mean that $I(t)\perp FI(\Theta)$.

In order for $\Xi'$ to be a pseudotyping judgement it remains to show that $I^\bullet(t)\cup FI^\bullet(\Theta)=I_\bullet(t)\cup FI_\bullet(\Theta)$.
Again, since $\Sigma$
is an n.d.\  pseudotyping judgement, we have that
$I^\bullet(\widetilde t)\cup FI^\bullet(A)=I_\bullet(\widetilde t)\cup FI_\bullet(A)$.
But  $I(\widetilde t)=I(t)\cup I(\overrightarrow{X})=I(t)\cup FI(\overline{\Gamma})$ and $FI(\Theta)=FI(\overline\Gamma)\cup FI(A)$. The desired equality follows.
\end{proof}
\begin{defi}
In the notation of Proposition~\ref{translations}, the tensor sequent $\vdash\Xi'$ is the {\it sequent calculus translation} of a standard n.d.\  sequent
$\Xi$.
\end{defi}

\begin{lem}\label{sequent calculus->ND}
A standard n.d.\  sequent is derivable in  ${\bf ETTC}_{n.d.\ }$ whenever its {sequent calculus translation}
is derivable in ${\bf ETTC}$.
\end{lem}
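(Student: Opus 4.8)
The plan is to prove, by induction on a derivation of the sequent calculus translation $\vdash t::A,\overline\Gamma$ in {\bf ETTC} — which, by cut-elimination for {\bf ETTC}, may be assumed cut-free — a statement slightly stronger than the lemma. Call a \emph{standard n.d.\ reading} of a tensor sequent $\vdash s::\Delta$ the choice of one formula $C$ of $\Delta$ to serve as a conclusion type, together with fresh tensor variables $\overrightarrow X$ for the de-negations of the remaining formulas of $\Delta$, such that $\overrightarrow X:\Gamma'\vdash s\overrightarrow X:C$ (with $\overline{\Gamma'}=\Delta\setminus\{C\}$, inverting the translation of Proposition~\ref{translations}) is a legitimate standard n.d.\ sequent. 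I will show that every standard n.d.\ reading of an {\bf ETTC}-derivable sequent is derivable in ${\bf ETTC}_{n.d.\ }$; the lemma is the instance with $C=A$.

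The axiom $({\rm Id})$ is the base case: the two readings of $\vdash A,\overline A$ are, up to $(\equiv_\beta)$ and $(\alpha\eta)$, instances of the n.d.\ axiom. The rules $(\equiv_\beta)$, $(\alpha\eta^{\to})$, $(\alpha\eta_{\to})$ are matched by the n.d.\ rules of the same names applied to the reading of the premise keeping the same conclusion formula (invoking closure of ${\bf ETTC}_{n.d.\ }$-derivability under $\alpha\beta\eta$-equivalence, the n.d.\ counterpart of Proposition~\ref{derivability closed under equivalence}). Should one keep $({\rm Cut})$ rather than work cut-free, a cut on $B$ — premises $\vdash t::\Delta_1,B$ and $\vdash s::\overline B,\Delta_2$ — is translated by reading the premises with conclusions $C\in\Delta_1$ and $\overline B$, applying the induction hypothesis, and composing via the substitution rule of Proposition~\ref{Subst}, commutativity of term multiplication producing exactly the conclusion term $ts$.

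The heart is the four logical rules, each with two subcases according to whether the principal formula of the rule is the chosen conclusion formula $C$ or one of the remaining (``input'') formulas. When the principal formula is $C$, the rule matches an n.d.\ \emph{introduction} rule directly: $(\parr)$ introducing $A_1\parr A_2$ uses $(\parr{\rm I}_l)$ on the premise read with conclusion $A_2$ (its de-negation $\overline{A_1}$ becoming a fresh hypothesis); $(\otimes)$ uses $(\otimes{\rm I})$ on the two premises read with conclusions $A_1$, $A_2$; and $(\nabla)$, $(\triangle)$ use $(\nabla{\rm I})$, $(\triangle{\rm I})$, where the Kronecker delta $\delta^\mu_\nu$ (resp.\ $\delta^\nu_\mu$) prefixed to the term by the {\bf ETTC} rule is exactly the factor the n.d.\ rule demands. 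When the principal formula is an input, its de-negation is a compound hypothesis; one applies the induction hypothesis so that the immediate subformulas of that hypothesis appear separately, then re-assembles them by feeding the matching n.d.\ \emph{elimination} rule an $({\rm Id})$ axiom as its major premise and composing with substitution — $(\otimes{\rm E})$ for a $\parr$-principal formula, $(\parr{\rm E})$ for a $\otimes$-principal formula, $(\triangle{\rm E})$ for a $\nabla$-principal formula, $(\nabla{\rm E})$ for a $\triangle$-principal formula; again, for the latter two, the $\delta$-factor carried along by the {\bf ETTC} rule is precisely the one the elimination rule expects.

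The main obstacle is bookkeeping rather than insight: one must check that whenever a rule's conclusion admits a legitimate standard n.d.\ reading, the readings of the premises used above are legitimate too (so that the induction hypothesis applies), which amounts to checking that the subformulas landing in hypothesis position are genuine types — the potential obstruction, an index occurring with both polarities, is ruled out by the index constraints on the {\bf ETTC} rules, but verifying this uniformly is fiddly and is perhaps cleanest if the whole induction is carried out inside the $\eta$-long fragment ${\bf ETTC}_{\eta\to}$ using $\eta$-long representatives (Proposition~\ref{ETTC is a fragment}, Lemma~\ref{conservativity of eta-long fragment}), where all judgements are automatically clean on the context side. One must also pick all fresh indices and tensor variables to avoid collisions, using $\alpha\eta$-conversion of judgements freely. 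Granting these precautions the induction closes; the bulk of the work is the case multiplicity (four logical rules with two subcases each, plus the structural rules), the logical content of each case being a single rule application.
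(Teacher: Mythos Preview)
Your proposal is correct and follows essentially the same approach as the paper: induction on the {\bf ETTC} derivation, with the principal-formula case matched by an n.d.\ introduction rule and the side-formula case handled by the elimination rule for the dual connective composed with substitution (the paper displays exactly the cases $(\alpha\eta^\leftarrow)$, $(\nabla)$, $(\triangle)$, $(\otimes)$ in its figure). Your explicit strengthening of the inductive invariant to ``every standard n.d.\ reading'' is a genuine improvement in precision over the paper, which states the induction for the fixed reading with conclusion $A$ but then, in the $(\otimes)$ side case, silently applies the hypothesis to the second premise with a different conclusion formula; your formulation makes this step honest.
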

\begin{figure}
\centering
$
\begin{array}{c}
\cfrac{\vdash  \delta^i_jt::A,\overline\Theta,(\overline B)^{[\nicefrac{j}{i}]}}
{\vdash  t::A,\overline\Theta,(\overline B)}~
(\alpha\eta^\leftarrow)
\quad
\Longrightarrow
\\
   \def\fCenter{\ \vdash\ }
\def\ScoreOverhang{0pt}
\Axiom$Z: B\fCenter Z: B$
\RightLabel{($\alpha\eta_\to$)}
\UnaryInf$Z: B\fCenter \delta^j_iZ: B_{[\nicefrac{j}{i}]}
 $
\AxiomC{\it{(Ind.~hyp.)}}
\UnaryInf$Y: B_{[\nicefrac{j}{i}]},\overrightarrow{X}:{\Theta},\fCenter   \delta^i_jtY\overrightarrow{X}:A$
\RightLabel{(Sb)}
\BinaryInf$Z:B,\overrightarrow{X}:\Theta \fCenter \delta^j_i\delta^i_j tZ\overrightarrow{X}:A$
\RightLabel{($\equiv_\beta$)}
\UnaryInf$Z:B,\overrightarrow{X}:\Theta \fCenter tZ\overrightarrow{X}:A$
\DisplayProof
\\
\\
\cfrac{\vdash \delta^\mu_\nu t::A,\overline\Theta,\overline B}
{\vdash t::A,\overline\Theta,\nabla^\mu_\nu\overline B}
~(\nabla)
\quad
\Longrightarrow
\\
   \def\fCenter{\ \vdash\ }
\def\ScoreOverhang{.1pt}
\Axiom$Z: \triangle^\nu_\mu B\fCenter Z: \triangle^\nu_\mu B$
\AxiomC{\it{(Ind.~hyp.)}}
\UnaryInf$Y: B,\overrightarrow{X}:{\Theta}\fCenter \delta^\mu_\nu  tY\overrightarrow{X}:A$
\RightLabel{($\triangle{\rm{E}}$)}
\BinaryInf$Z: \triangle^\nu_\mu B,\overrightarrow{X}:\Theta \fCenter tZ\overrightarrow{X}:A$
\DisplayProof
\\
\\
\cfrac{\vdash  t::A,\overline\Theta,\overline B}
{\vdash \delta^\mu_\nu t::A,\overline\Theta,\triangle_\mu^\nu \overline B}
~(\triangle)
\quad
\Longrightarrow
\\
   \def\fCenter{\ \vdash\ }
\def\ScoreOverhang{.1pt}
\Axiom$Z: \nabla_\nu^\mu B\fCenter Z: \nabla_\nu^\mu B $
\RightLabel{($\nabla{\rm{E}}$)}
\UnaryInf$Z: \nabla_\nu^\mu B\fCenter \delta^\mu_\nu Z:  B $
\AxiomC{\it{(Ind.~hyp.)}}
\UnaryInf$Y: B,\overrightarrow{X}:{\Theta}\fCenter   tY\overrightarrow{X}:A$
\RightLabel{(Sb)}
\BinaryInf$Z: \nabla_\nu^\mu B,\overrightarrow{X}:\Theta \fCenter t\delta^\mu_\nu Z\overrightarrow{X}:A$
\DisplayProof
\\
\\
\cfrac{\vdash t::A,\overline\Theta,\overline B\quad\vdash s::\overline C,\overline{\Psi}}
{\vdash ts::A,\overline{\Theta},\overline B\otimes \overline C,\overline{\Psi}}
~(\otimes)
\quad
\Longrightarrow
\\
   \def\fCenter{\ \vdash\ }
\def\ScoreOverhang{.1pt}
\AxiomC{\it`{(Ind.~hyp.)}}
\UnaryInf${\overrightarrow{X}:{\Psi}}\fCenter s\overrightarrow{X}:\overline C$
\Axiom$Y:C\parr B\fCenter Y:C\parr B$
\RightLabel{($\parr{\rm{E}}_l$)}
\BinaryInf$\overrightarrow{X}:{\Psi},Y: C\parr B\fCenter s\overrightarrow{X}\cdot Y:{B}$
\AxiomC{\it{(Ind.~hyp.)}}
\UnaryInf$T: B,\overrightarrow{Z}:{\Theta}\fCenter tT\overrightarrow{Z}:A$
\RightLabel{(Sb)}
\BinaryInf$\overrightarrow{X}:{\Psi},Y:C\parr B,\overrightarrow{Z}:{\Theta}\fCenter ts\overrightarrow{X}\cdot Y\cdot\overrightarrow{Z}:A$
\DisplayProof
\end{array}
$
\caption{From ${\bf ETTC}$ to ${\bf ETTC}_{n.d.\ }$: ``side'' rule cases}
\label{seq->ND, side cases}
\end{figure}
\begin{proof} Let the n.d.\  sequent in question be
$\overrightarrow{X}:\Gamma\vdash t\overrightarrow{X}:A$. Use induction  on the {\bf ETTC} derivation of the translation $\vdash t::A,\overline\Gamma$.
A rule introducing or modifying the ``principal'' formula $A$ directly translates to the corresponding n.d.\  rule (with two variations for the $(\parr)$ rule), the same applies to the $(\equiv_\beta)$ rule.  A ``side'' rule, i.e. modifying  a formula in $\overline\Gamma$, typically, is emulated using the substitution rule and an elimination rule for the dual connective. Basic cases  of the $(\alpha\eta^\leftarrow)$, $(\nabla)$, $(\triangle)$ and $(\otimes)$ rules are collected in Figure~\ref{seq->ND, side cases}, the remaining ones are treated similarly or easier. Note that for the $(\otimes)$ rule there is also a case mirror-symmetric to that in the figure, namely, when the ``principal'' formula $A$ in the conclusion comes
from the right premise of the rule. In this situation the n.d.\  translation is obtained using the $(\parr{\rm{E}}_r)$ rule. \end{proof}
\begin{lem}\label{ND<-sequent calculus}
If a standard sequent  is derivable in  ${\bf ETTC}_{n.d.\ }$ then its {sequent calculus translation}
is derivable ${\bf ETTC}$.
\end{lem}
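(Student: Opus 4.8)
The statement is essentially the converse of Lemma~\ref{sequent calculus->ND}, and the argument mirrors it. Let the standard ${\bf ETTC}_{n.d.}$-derivable sequent in question be $\overrightarrow X:\Gamma\vdash t\overrightarrow X:A$; by the preceding proposition every derivable n.d.\ sequent is indeed standard, and by Proposition~\ref{translations} its sequent calculus translation $\vdash t::A,\overline\Gamma$ is a well-defined tensor sequent. I would proceed by induction on the ${\bf ETTC}_{n.d.}$ derivation, translating each n.d.\ rule into a (possibly compound) ${\bf ETTC}$ derivation step.

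The rules acting on the ``output'' formula $A$ translate almost verbatim. The $({\rm{Id}})$ axiom $X:A\vdash X:A$ has translation $\vdash 1::A,\overline A$, which is ${\bf ETTC}$-derivable by Proposition~\ref{derivability closed under equivalence}(i). The $(\equiv_\beta)$ and the four $(\alpha\eta)$ rules translate to the homonymous ${\bf ETTC}$ rules. The introduction rules translate to the corresponding right rules: $(\otimes{\rm I})$ to $(\otimes)$, $(\nabla{\rm I})$ to $(\nabla)$, $(\triangle{\rm I})$ to $(\triangle)$, and $(\parr{\rm I}_l)$, $(\parr{\rm I}_r)$ to $(\parr)$, where one first uses that ${\bf ETTC}$ sequents are unordered (equivalently, the explicit Exchange rule) to place the two formulas in $\overline\Gamma\cup\{\overline A\}$ that are to be combined next to each other before applying $(\parr)$.

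The elimination rules are the interesting cases; each is simulated by applying, to the translation of the appropriate premise, the ${\bf ETTC}$ right rule for the connective \emph{dual} to the one being eliminated, and then performing a $({\rm{Cut}})$. For $(\parr{\rm E}_l)$ with premises $\Gamma\vdash t:A$ and $\Theta\vdash s:\overline A\parr B$, one starts from the axiom $\vdash 1::B,\overline B$, applies $(\otimes)$ against the translation $\vdash t::A,\overline\Gamma$ to get $\vdash t::B,\overline B\otimes A,\overline\Gamma$, i.e.\ $\vdash t::B,\overline{\overline A\parr B},\overline\Gamma$, and then cuts with the translation $\vdash s::\overline A\parr B,\overline\Theta$, obtaining $\vdash st::B,\overline\Theta,\overline\Gamma$ (and $st=ts$ since tensor terms form a commutative monoid); $(\parr{\rm E}_r)$ is mirror-symmetric. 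For $(\nabla{\rm E})$ one introduces a $\triangle$ on the $\overline A$ of $\vdash 1::A,\overline A$ via the $(\triangle)$ rule, producing $\vdash \delta^\nu_\mu::A,\overline{\nabla^\mu_\nu A}$, and cuts against the translation of the premise. For $(\otimes{\rm E})$ and $(\triangle{\rm E})$, which carry a ``continuation'' premise, one applies $(\parr)$, respectively $(\nabla)$, to the translation of that continuation premise so as to build the formula dual to $A\otimes B$, respectively $\triangle^\mu_\nu A$, and then cuts with the translation of the major premise. In each case the term output by the cut agrees with the translation of the n.d.\ conclusion up to commutativity of the term monoid and the reductions of $\beta$-equivalence, so a concluding $(\equiv_\beta)$ step repairs it when necessary.

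The only real obstacle is bookkeeping: ensuring the index sets stay disjoint when premises are glued by $({\rm{Cut}})$, $(\otimes)$ or $(\parr)$, and verifying that the Kronecker-delta factors created by the dual rules combine with the premise terms to yield exactly the required term. This is handled just as in Lemmas~\ref{side cuts} and~\ref{conservativity of eta-long fragment}: before each combination one invokes closure of ${\bf ETTC}$-derivability under $\alpha$-equivalence (Proposition~\ref{derivability closed under equivalence}(iv)) to rename the bound and freshly introduced indices apart, after which the well-formedness side conditions of Figure~\ref{restrictions} are automatically satisfied and the term identities follow from $\delta^i_j\delta^j_k\equiv_\beta\delta^i_k$ and $\delta^i_i\equiv_\beta 1$.
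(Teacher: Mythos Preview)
Your proof is correct and follows essentially the same approach as the paper's: axioms via Proposition~\ref{derivability closed under equivalence}(i), introduction/$(\alpha\eta)$/$(\equiv_\beta)$ rules translated directly, and elimination rules simulated by applying the {\bf ETTC} rule for the dual connective and then a Cut (the paper singles out the same auxiliary sequents $\vdash\overline B\otimes A,\overline A,B$ and $\vdash\delta^\mu_\nu::\triangle^\nu_\mu\overline A,A$ that you construct). One trivial slip: in your $(\nabla{\rm E})$ case the Kronecker delta produced by the $(\triangle)$ rule should be $\delta^\mu_\nu$, not $\delta^\nu_\mu$, so that the cut yields exactly $\delta^\mu_\nu t$ as required.
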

\begin{proof} For the axioms of ${\bf ETTC}_{n.d.\ }$ use Proposition~\ref{derivability closed under equivalence}, claim~(i). The introduction rules, $(\alpha\eta)$ rules and the $(\equiv_\beta)$ rule of ${\bf ETTC}_{n.d.\ }$ translate to {\bf ETTC} directly.
The elimination rules $(\nabla{\rm{E}})$ and
$(\parr{\rm{E}}_l)$ are emulated in {\bf ETTC} using cuts with {\bf ETTC}
derivable sequents $\vdash \delta^\alpha_\beta::\triangle^\beta_\alpha \overline A,A$ and
$\vdash\overline B\otimes A,\overline A, B$ respectively. The  $(\parr{\rm{E}_r})$ rule is similar to the $(\parr{\rm{E}}_l)$ rule.
The remaining elimination rules are emulated
very easily, using {\bf ETTC} rules for the corresponding dual connectives and the Cut rule. \end{proof}
\begin{cor}\label{ND=sequent calculus}
 A standard sequent  is derivable in  ${\bf ETTC}_{n.d.\ }$ iff its {sequent calculus translation}
is derivable ${\bf ETTC}$. $\Box$
\end{cor}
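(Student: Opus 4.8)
The plan is to observe that Corollary~\ref{ND=sequent calculus} is nothing more than the conjunction of the two lemmas just established, so the ``proof'' is a one-line combination. First I would recall that, by Proposition~\ref{translations}, the sequent calculus translation $\vdash\Xi'$ of a standard n.d.\ sequent $\Xi$ is always a well-defined tensor pseudotyping judgement, hence the biconditional genuinely compares two existing objects. For the direction from ${\bf ETTC}_{n.d.\ }$ derivability to ${\bf ETTC}$ derivability of the translation, I would invoke Lemma~\ref{ND<-sequent calculus} verbatim. For the converse, I would invoke Lemma~\ref{sequent calculus->ND}, which states precisely that a standard n.d.\ sequent is derivable in ${\bf ETTC}_{n.d.\ }$ whenever its sequent calculus translation is derivable in ${\bf ETTC}$.

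The only point that needs a word of care is bookkeeping: one must check that ``standard sequent'' in the statement of Lemma~\ref{ND<-sequent calculus} and ``standard n.d.\ sequent'' in Lemma~\ref{sequent calculus->ND} denote the same class of sequents (they do, by the definition of standard n.d.\ sequent, $\overrightarrow{X}:\Gamma\vdash t\overrightarrow{X}:B$), and that in both lemmas ``its sequent calculus translation'' refers to the same object, namely $\vdash\Xi'$ of the form $t::A,\overline\Gamma$ introduced just before Lemma~\ref{sequent calculus->ND}. Once this identification is noted, the two implications compose into the claimed equivalence with no further work.

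Consequently I do not expect any real obstacle here; all the mathematical content of the corollary already resides in Lemmas~\ref{sequent calculus->ND} and~\ref{ND<-sequent calculus}, whose proofs perform the actual translation of derivations in both directions --- introduction/elimination rules of ${\bf ETTC}_{n.d.\ }$ mirroring the sequent calculus rules acting on the ``principal'' formula $A$, and the substitution rule together with elimination rules for the dual connective handling the ``side'' formulas in $\overline\Gamma$, with cuts against the ${\bf ETTC}$-derivable auxiliary sequents in the reverse direction. The corollary itself is therefore immediate, which is why it can be marked with $\Box$.
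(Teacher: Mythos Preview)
Your proposal is correct and matches the paper's approach exactly: the corollary is marked $\Box$ in the paper because it is the immediate conjunction of Lemmas~\ref{sequent calculus->ND} and~\ref{ND<-sequent calculus}, with nothing further to check beyond the well-definedness already secured by Proposition~\ref{translations}.
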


\subsection{Geometric representation}
Translating to  sequent calculus we can obtain geometric representation for n.d.\  sequents and derivations. Namely, the standard n.d.\  sequent
$\overrightarrow{X}:\Gamma\vdash t\overrightarrow{X}:F$ can be depicted as the ordered tensor sequent $\vdash t::F,\overline{\Gamma}$.

In this case, however, it seems natural to deform the graph of the sequent calculus translation, so that  indices corresponding to
 to the left and to the right of the turnstile in the n.d.\  sequent appear on {\it two} parallel vertical lines. That is, we depict indices of $F$ aligned vertically, from the top to the bottom on one line, and indices of $\overline\Gamma$ aligned {\it from the bottom to the top} on another line to the left. (This is the same as to say that the indices occurring in $F$ are depicted on the right vertical line, aligned from the top to the bottom, and those occurring in  $\Gamma$ depicted on the left vertical line, also ordered from the top to the bottom, but with opposite polarities.) The prescription for drawing labeled edges is the same as in the sequent calculus. An example of such a representation is shown in Figure~\ref{n.d.  sequent geometric representation}.

  In Figure~\ref{n.d.  geometric representation} we show schematically geometric representations of some n.d.\  rules, namely introduction/elimination for the $\parr$ connective and elimination rules for the binding operators.  Note that  $\parr$-introduction  rules do not change the graph, but change its pictorial representation by a continuous deformation.
\begin{figure}[htb]
\centering
\subfloat[N.d.\  sequent \label{n.d.  sequent geometric representation}
 ]
{
\tikz
         {
                \begin{scope}[shift={(0,0)}]

                     \node[left] at(-.7,-.25){$ A$};
                     \node[left,font=\fontsize{23}{0}\selectfont] at(-.25,-.25){$\{$};
                     \node[left] at(-.7,-1.25){$ B$};
                     \node[left,font=\fontsize{23}{0}\selectfont] at(-.25,-1.25){$\{$};
                     \node[left] at(-.7,-2){$ C$};
                     \node[left,font=\fontsize{18}{0}\selectfont] at(-.25,-2){$\{$};

                     \node [left]at(0,0){$i$};
                     \node [left]at(0,-.5){$j$};
                     \node [left]at(0,-1){$j$};
                     \node [left]at(0,-1.5){$k$};
                     \node [left]at(0,-2){$l$};

                     \draw [fill] (0,0) circle [radius=0.05];
                     \draw [fill] (0,-.5) circle [radius=0.01];

                     \draw [fill] (0,-1) circle [radius=0.05];

                     \draw [fill] (0,-1.5) circle [radius=0.05];
                     \draw [fill] (0,-2) circle [radius=0.01];

                     \draw[thick,<-](0,-.5) to  [out=0,in=90] (.5,-.75) to  [out=-90,in=180](0,-1);

                    \draw [fill] (1.5,-0.5) circle [radius=0.01];
                     \draw [fill] (1.5,-1.) circle [radius=0.05];

                     \draw [fill] (1.5,-1.5) circle [radius=0.01];
                     \node [right]at(1.5,-.5){$s$};
                     \node [right]at(1.5,-1){$t$};
                     \node [right]at(1.5,-1.5){$k$};

                    \node[right,font=\fontsize{18}{0}\selectfont] at(1.75,-.5){$\}$};
                    \node[right] at(2.2,-0.5){$ F$};
                     \node[right,font=\fontsize{23}{0}\selectfont] at(1.75,-1.25){$\}$};
                     \node[right] at(2.2,-1.25){$ G$};

                     \draw[thick,->](0,0)--(0.5,0)--(1,-.5)--(1.5,-.5);
                    \draw[thick,->](0,-1.5)--(0.5,-1.5)--(1,-1.5)--(1.5,-1.5);
                    \draw[thick,<-](0,-2)--(0.5,-2)--(1,-1)--(1.5,-1);
                    \node at (.5,0.2){$\alpha$};
                    \node at (1.,-.8){$\beta$};

                \end{scope}
                \node at(2.9,-1.) {$\Leftrightarrow$};
                \node at(7.5,-1.) {$x^i_j:A^j_i,y^{jk}:B_{jk},z_l:C^l\vdash[\alpha]^s_i[\beta]^l_tx^i_jy^{jk}z_l:F_s\parr G^t_k$};
          }

}\\[.3cm]
\subfloat[{Some n.d.\  rules}
 \label{n.d.  geometric representation}
 ]
{
$
\begin{array}{c}
{
 \hspace*{-0.5cm}\scalebox{.9}
 {
  \tikz[scale=.5]
         {
             \begin{scope}[shift={(1,0)}]
                \begin{scope}[shift={(.5,0)}]
                     \draw[thick](0,-2)rectangle(2,-.5);
                     \node at(1,-1.25) {$t$};
                        \draw[thick,-](-1,-.8)--(0,-.8);

                     \node[left] at(-1,-.8) {$A$};
                    \draw[thick,-](-1,-1.7)--(0,-1.7);
                      \node[left] at(-1,-1.7) {$\Gamma$};

                     \draw[thick,-](2,-1.25)--(3,-1.25);
                     \node[right] at(3,-1.25) {$B$};
                \end{scope}
                \node[below] at(6.,-1.5) {$\Longrightarrow$};
                \node[above] at(6.,-1.7) {$(\parr{\rm{I}}_l)$};
             \end{scope}

                \begin{scope}[shift={(10.25,0)}]
                     \draw[thick](0,-2)rectangle(2,-.5);
                     \node at(1,-1.25) {$t$};
                        \draw[thick,-](0,-.8)--(-1,-.8)
                        to[out=180,in=-90]
                    (-1.4,-.3)
                    to[out=90,in=180]
                    (-1,.2)
                    --(3,.2);
                      \node[right] at(3,.2) {$\overline{A}$};

                    \draw[thick,-](0,-1.7)--(-1,-1.7);
                     \node[left] at(-1,-1.7) {$\Gamma$};

                     \draw[thick,-](2,-1.25)--(3,-1.25);
                     \node[right] at(3,-1.25) {$B$};

                \end{scope}
         }
  }
 }
{
 \quad\quad\quad\scalebox{.9}
 {
    \tikz[scale=.5]
         {
            \begin{scope}[shift={(18.5,1.8)}]
                \begin{scope}[shift={(-1,-.5)}]
                    \draw[thick](0,0)rectangle(2,1);\node at(1,.5){$t$};
                     \draw[thick,-](2,.5)--(3,.5);

                     \node[right] at(3,.5) {$A$};
                     \draw[thick,-](-1,.5)--(0,.5);

                     \node[left] at(-1,.5) {$\Gamma$};
                \end{scope}

                \begin{scope}[shift={(-1,-.25)}]
                        \draw[thick](0,-2)rectangle(2,-.5);\node at(1,-1.25){$s$};
                    \draw[thick,-](2,-.8)--(3,-.8);

                     \node[right] at(3,-.6) {$\overline A$};
                    \draw[thick,-](2,-1.7)--(3,-1.7);
                      \node[right] at(3,-1.9) {$B$};

                     \draw[thick,-](-1,-1.25)--(0,-1.25);
                     \node[left] at(-1,-1.25) {$\Theta$};
                \end{scope}

                \node[below] at(4.5,-1.5) {$\Longrightarrow$};
                \node[above] at(4.5,-1.7) {$(\parr{\rm{E}}_l)$};
                \begin{scope}[shift={(1,0)}]

                    \draw[thick,-](9,.0)--(10,0)
                         to[out=0,in=90](10.5,-.5)
                         to[out=-90,in=0](10,-1)--(9,-1)
                         ;
                    \begin{scope}[shift={(0,-.5)}]
                                \draw[thick](7,0)rectangle(9,1);\node at(8,.5){$t$};

                         \draw[thick,-](6,.5)--(7,.5);

                         \node[left] at(6,.5) {$\Gamma$};
                    \end{scope}
                    \begin{scope}[shift={(0,-.25)}]
                                 \draw[thick](7,-2)rectangle(9,-.5);\node at(8,-1.25){$s$};

                        \draw[thick,-](9,-1.5)--(10,-1.5);
                          \node[right] at(10,-1.5) {$B$};

                         \draw[thick,-](6,-1.25)--(7,-1.25);
                         \node[left] at(6,-1.25) {$\Theta$};
                    \end{scope}

            \end{scope}

         \end{scope}
         }
    }
 }
 \\[.3cm]
{
 \hspace*{-0.5cm}\scalebox{.9}
 {
  \tikz[scale=.5]
         {
             \begin{scope}[shift={(1,0)}]
                \begin{scope}[shift={(.5,0)}]
                     \draw[thick](0,-2)rectangle(2,-.5);
                     \node at(1,-1.25) {$t$};
                        \draw[thick,-](-1,-.8)--(0,-.8);

                     \node[left] at(-1,-.8) {$\Gamma$};
                    \draw[thick,-](-1,-1.7)--(0,-1.7);
                      \node[left] at(-1,-1.7) {$A$};

                     \draw[thick,-](2,-1.25)--(3,-1.25);
                     \node[right] at(3,-1.25) {$B$};
                \end{scope}
                \node[below] at(6.,-2.) {$\Longrightarrow$};
                \node[above] at(6.,-2.2) {$(\parr{\rm{I}}_r)$};
             \end{scope}

                \begin{scope}[shift={(10.25,0)}]
                     \draw[thick](0,-2)rectangle(2,-.5);
                     \node at(1,-1.25) {$t$};
                        \draw[thick,-](-1,-.8)--(0,-.8);

                     \node[left] at(-1,-.8) {$\Gamma$};
                    \draw[thick,-](0,-1.7)--(-1,-1.7)
                    to[out=180,in=90]
                    (-1.4,-2.2)
                    to[out=-90,in=180]
                    (-1,-2.7)
                    --(3,-2.7);
                      \node[right] at(3,-2.7) {$\overline{A}$};

                     \draw[thick,-](2,-1.25)--(3,-1.25);
                     \node[right] at(3,-1.25) {$B$};

                \end{scope}
         }
  }
 }
{
 \quad\quad\quad\scalebox{.9}
 {
    \tikz[scale=.5]
         {
            \begin{scope}[shift={(18.5,1.8)}]
                \begin{scope}[shift={(-1,-1.75)}]
                    \draw[thick](0,0)rectangle(2,1);\node at(1,.5){$t$};
                     \draw[thick,-](2,.5)--(3,.5);

                     \node[right] at(3,.5) {$A$};
                     \draw[thick,-](-1,.5)--(0,.5);

                     \node[left] at(-1,.5) {$\Theta$};
                \end{scope}

                \begin{scope}[shift={(-1,1.5)}]
                        \draw[thick](0,-2)rectangle(2,-.5);\node at(1,-1.25){$s$};
                    \draw[thick,-](2,-.8)--(3,-.8);

                     \node[right] at(3,-1.9) {$\overline{A}$};
                    \draw[thick,-](2,-1.7)--(3,-1.7);
                      \node[right] at(3,-.6) {$B$};

                     \draw[thick,-](-1,-1.25)--(0,-1.25);
                     \node[left] at(-1,-1.25) {$\Gamma$};
                \end{scope}

                \node[below] at(4.5,-.5) {$\Longrightarrow$};
                \node[above] at(4.5,-.8) {$(\parr{\rm{E}}_r)$};
                \begin{scope}[shift={(1,0)}]

                    \draw[thick,-](9,.0)--(10,0)
                         to[out=0,in=90](10.5,-.5)
                         to[out=-90,in=0](10,-1)--(9,-1)
                         ;
                    \begin{scope}[shift={(0,-1.75)}]
                                \draw[thick](7,0)rectangle(9,1);\node at(8,.5){$t$};

                         \draw[thick,-](6,.5)--(7,.5);

                         \node[left] at(6,.5) {$\Theta$};
                    \end{scope}
                    \begin{scope}[shift={(0,1.5)}]
                                 \draw[thick](7,-2)rectangle(9,-.5);\node at(8,-1.25){$s$};
                        \draw[thick,-](9,-1)--(10 ,-1);

                        \draw[thick,-](9,-1.5)--(10,-1.5);
                          \node[right] at(10,-1.) {$B$};

                         \draw[thick,-](6,-1.25)--(7,-1.25);
                         \node[left] at(6,-1.25) {$\Gamma$};
                    \end{scope}

            \end{scope}

         \end{scope}
         }
    }
 }
 \\[.3cm]
{
 {\quad\quad\quad
  \tikz[scale=.7]
         {
             \begin{scope}[shift={(1,0)}]
                \begin{scope}[shift={(3.25,0)}]
                     \draw[thick](0,-2)rectangle(2,-.5);
                     \node at(1,-1.25) {$t$};
                        \draw[thick,-](-1,-1.25)--(0,-1.25);

                     \node[left] at(-1,-1.25) {$\Gamma$};

                     \draw[thick,-](2,-.7)--(3.3,-.7);
                     \draw[thick,-](2,-1.8)--(3.3,-1.8);
                     \node[right,font=\fontsize{19}{0}\selectfont] at(3.1,-1.25){$\}$};
                     \node[right] at(3.7,-1.25){$\nabla^\mu_\nu A$};
                     \begin{scope}[shift={(-.35,0)}]
                        \draw [fill] (3.1,-.8) circle [radius=0.02];
                        \draw [fill] (3.1,-.9) circle [radius=0.02];
                        \draw [fill] (3.1,-1.15) circle [radius=0.02];
                        \draw [fill] (3.1,-1.25) circle [radius=0.02];
                        \draw [fill] (3.1,-1.35) circle [radius=0.02];
                        \draw [fill] (3.1,-1.6) circle [radius=0.02];
                        \draw [fill] (3.1,-1.7) circle [radius=0.02];
                    \end{scope}
                \end{scope}
                \node[below] at(10.5,-1.5) {$\Longrightarrow$};
                \node[above] at(10.5,-1.7) {$(\nabla{\rm{E}})$};

                \begin{scope}[shift={(14.75,0)}]
                     \draw[thick](0,-2)rectangle(2,-.5);
                     \node at(1,-1.25) {$t$};
                        \draw[thick,-](-1,-1.25)--(0,-1.25);

                     \node[left] at(-1,-1.25) {$\Gamma$};

                     \draw[thick,-](2,-.7)--(3.3,-.7);
                     \draw[thick,-](2,-1.8)--(3.3,-1.8);

                         \node[right,font=\fontsize{19}{0}\selectfont] at(3.1,-1.25){$\}$};
                         \node[right] at(3.7,-1.25){$A$};
                         \begin{scope}[shift={(-.35,0)}]
                        \draw [fill] (3.1,-.8) circle [radius=0.015];
                        \draw [fill] (3.1,-.9) circle [radius=0.015];
                        \draw [fill] (3.1,-1.15) circle [radius=0.015];
                        \draw [fill] (3.1,-1.25) circle [radius=0.015];
                        \draw [fill] (3.1,-1.35) circle [radius=0.015];
                        \draw [fill] (3.1,-1.6) circle [radius=0.015];
                        \draw [fill] (3.1,-1.7) circle [radius=0.015];

                        \node at(3.4,-1.025){$\mu$};
                        \node at(3.4,-1.475){$\nu$};

                        \draw [fill] (3.1,-1.475) circle [radius=0.06];
                        \draw[thick,<-](3.2,-1.025) to  [out=180,in=90] (2.5,-1.25) to  [out=-90,in=180](3.1,-1.475);
                    \end{scope}
                \end{scope}
             \end{scope}
         }
  }
 }
\\[.3cm]
{
 {
    \tikz[scale=.7]
         {
    \begin{scope}[shift={(-1.25,0)}]
                        \begin{scope}[shift={(-1,-1.5)}]
                                \draw[thick](0,-0.5)rectangle(2.,1.5);\node at(1,.5){$s$};

                                 \draw[thick,-](2,.5)--(3,.5);

                                 \node[right] at(3,.5) {$B$};

                                 \begin{scope}[shift={(-3.3,2.)}]
                                             \draw[thick,-](2,-.7)--(3.3,-.7);
                                             \draw[thick,-](2,-1.8)--(3.3,-1.8);

                                             \begin{scope}[shift={(-2.5,0)}]
                                                \node[right,font=\fontsize{19}{0}\selectfont] at(3.7,-1.25){$\{$};
                                                \node[right] at(3.1,-1.25){$A$};
                                             \end{scope}

                                             \begin{scope}[shift={(-.45,0)}]
                                                \draw [fill] (3.1,-.8) circle [radius=0.015];
                                                \draw [fill] (3.1,-.9) circle [radius=0.015];
                                                \draw [fill] (3.1,-1.15) circle [radius=0.015];
                                                \draw [fill] (3.1,-1.25) circle [radius=0.015];
                                                \draw [fill] (3.1,-1.35) circle [radius=0.015];
                                                \draw [fill] (3.1,-1.6) circle [radius=0.015];
                                                \draw [fill] (3.1,-1.7) circle [radius=0.015];

                                                \node[left] at(3.1,-1.025){$\mu$};
                                                \node [left]at(3.1,-1.475){$\nu$};

                                                \draw [fill] (3.1,-1.475) circle [radius=0.06];
                                                \draw[thick,<-](3.,-1.025) to  [out=0,in=90] (3.7,-1.25) to  [out=-90,in=0](3.1,-1.475);

                                            \end{scope}
                                 \end{scope}

                                 \draw[thick,-](-1.5,-.3)--(0,-.3);

                                 \node[left] at(-1.7,-.3) {$\Theta$};
                        \end{scope}

                        \begin{scope}[shift={(-1,2.25)}]
                                    \draw[thick](0,-2)rectangle(2,-.5);\node at(1,-1.25){$t$};

                                    \begin{scope}[shift={(0,0)}]
                                        \draw[thick,-](2,-.7)--(3.3,-.7);
                                         \draw[thick,-](2,-1.8)--(3.3,-1.8);
                                         \node[right,font=\fontsize{19}{0}\selectfont] at(3.1,-1.25){$\}$};
                                         \node[right] at(3.7,-1.25){$\triangle^\mu_\nu A$};
                                    \end{scope}
                                    \begin{scope}[shift={(-.35,0)}]
                                        \draw [fill] (3.1,-.8) circle [radius=0.02];
                                        \draw [fill] (3.1,-.9) circle [radius=0.02];
                                        \draw [fill] (3.1,-1.15) circle [radius=0.02];
                                        \draw [fill] (3.1,-1.25) circle [radius=0.02];
                                        \draw [fill] (3.1,-1.35) circle [radius=0.02];
                                        \draw [fill] (3.1,-1.6) circle [radius=0.02];
                                        \draw [fill] (3.1,-1.7) circle [radius=0.02];
                                    \end{scope}

                                 \draw[thick,-](-1.7,-1.25)--(0,-1.25);
                                 \node[left] at(-1.7,-1.25) {$\Gamma$};
                        \end{scope}
            \end{scope}

                \node[below] at(4.5,.3) {$\Longrightarrow$};
                \node[above] at(4.5,0) {$(\triangle{\rm{E}})$};

    \begin{scope}[shift={(10,0)}]
                        \begin{scope}[shift={(-1,-2.5)}]
                                \draw[thick](0,-0.5)rectangle(2.,1.5);\node at(1,.5){$s$};

                                 \draw[thick,-](2,.5)--(3,.5);

                                 \node[right] at(3,.5) {$B$};

                                 \begin{scope}[shift={(-3.3,2.)}]
                                             \draw[thick,-](2,-.7)--(3.3,-.7);
                                             \draw[thick,-](2,-1.8)--(3.3,-1.8);

                                             \begin{scope}[shift={(-.45,0)}]
                                                \draw [fill] (3.1,-.8) circle [radius=0.015];
                                                \draw [fill] (3.1,-.9) circle [radius=0.015];
                                                \draw [fill] (3.1,-1.15) circle [radius=0.015];
                                                \draw [fill] (3.1,-1.25) circle [radius=0.015];
                                                \draw [fill] (3.1,-1.35) circle [radius=0.015];
                                                \draw [fill] (3.1,-1.6) circle [radius=0.015];
                                                \draw [fill] (3.1,-1.7) circle [radius=0.015];

                                            \end{scope}
                                 \end{scope}

                                 \draw[thick,-](-1.7,-.3)--(0,-.3);

                                 \node[left] at(-1.7,-.3) {$\Theta$};
                        \end{scope}

                        \begin{scope}[shift={(-1,2.25)}]
                                    \draw[thick](0,-2)rectangle(2,-.5);\node at(1,-1.25){$t$};

                                    \begin{scope}[shift={(0,0)}]
                                        \draw[thick,-](2,-.7)--(3.3,-.7)
                                        to  [out=0,in=90] (3.6,-1)--(3.6,-2.6)
                                         to  [out=-90,in=0](3.3,-2.8)--(-1.4,-2.8)
                                             to  [out=180,in=90](-1.7,-3.1)
                                             to  [out=-90,in=180](-1.4, -3.45)--(0,-3.45)
                                             ;
                                         \draw[thick,-](2,-1.8)--(2.5,-1.8)
                                         to  [out=0,in=90] (2.8,-2.1)
                                             to  [out=-90,in=0](2.5,-2.4)--(-1.7,-2.4)
                                             to  [out=180,in=90](-2,-2.6)--(-2,-4.2)
                                             to  [out=-90,in=180](-1.7, -4.55)--(0,-4.55)
                                             ;

                                    \end{scope}
                                    \begin{scope}[shift={(-.35,0)}]
                                        \draw [fill] (3.1,-.8) circle [radius=0.02];
                                        \draw [fill] (3.1,-.9) circle [radius=0.02];
                                        \draw [fill] (3.1,-1.15) circle [radius=0.02];
                                        \draw [fill] (3.1,-1.25) circle [radius=0.02];
                                        \draw [fill] (3.1,-1.35) circle [radius=0.02];
                                        \draw [fill] (3.1,-1.6) circle [radius=0.02];
                                        \draw [fill] (3.1,-1.7) circle [radius=0.02];
                                    \end{scope}

                                 \draw[thick,-](-1.7,-1.25)--(0,-1.25);
                                 \node[left] at(-1.7,-1.25) {$\Gamma$};
                        \end{scope}
            \end{scope}

         }
    }
 }
 \end{array}
 $
 }
 \caption{Geometric representation of n.d.\  sequents}
 \end{figure}

\subsection{Example: slash elimination rules of {\bf LC}}
\begin{figure}[ht]
\centering
\subfloat[
{Derivation}
\label{ND derivation}]
{
\begin{tikzpicture}
\node at (0,0){
\def\fCenter{\ \vdash\ }
\AxiomC{$y^{j}_m:\nabla^k_i(b^m_k\parr\overline{a}^i_j) \vdash y^{j}_m:\nabla^k_i(b^m_k\parr\overline{a}^i_j)$}
\RightLabel{$(\nabla{\rm {E}})$}
\UnaryInf$y^{j}_m:\nabla^k_i(b^m_k\parr\overline{a}^i_j)  \fCenter \delta^k_i y^{j}_m:b^m_k\parr\overline{a}^i_j $
\RightLabel{$(=)$}
\UnaryInf$ y^{j}_m:(b/a)^m_j \fCenter \delta^k_i y^{j}_m:b^m_k\parr\overline{a}^i_j$
\RightLabel{$(\alpha\eta_\leftarrow)$}
\UnaryInf$ y^{j}_m:(b/a)^m_j \fCenter  y^{j}_m:b^m_i\parr\overline{a}^i_j$
\AxiomC{$x^{i}_j:a^j_i \vdash x^{i}_j:a^j_i$}
\RightLabel{$(\parr{\rm {E}}_r)$}
\BinaryInf$ y^{j}_m:(b/a)^m_j,x^{i}_j:a^j_i \fCenter y^{j}_mx^{i}_j:b^m_i$
\DisplayProof
};
\end{tikzpicture}
}
\\[.3cm]
\subfloat[{Geometric representation}
\label{ND picture}
]
{
\tikz[scale=.5]
{
\begin{scope}[shift={(0,0)}]
    \begin{scope}[shift={(-2.5,0)}]

        \begin{scope}[shift={(0,-4)}]
             \draw[thick,<-](1,2)--(2,2);
             \draw [fill] (2,2) circle [radius=0.05];
             \draw[thick,->](1,1.3)--(2,1.3);
             \draw [fill] (1,1.3) circle [radius=0.05];
            \node[left]at (1,2){$j$};
            \node[right]at (2,2){$j$};
            \node[left]at (1,1.3){$i$};
            \node[right]at (2,1.3){$i$};

            \node[left] at (0,1.7){$a^j_i$};
            \node[font=\fontsize{25}{0}\selectfont] at (.2,1.7){$\{$};

            \node [right]at (3.0,1.7){$a^j_i$};
            \node[font=\fontsize{25}{0}\selectfont] at (2.8,1.7){$\}$};
        \end{scope}

        \begin{scope}
                \draw[thick,<-](1,.4)--(2,.4);
              \draw[thick,->](1,-.3)--(2,-.3);
              \draw [fill] (2,.4) circle [radius=0.05];
              \draw [fill] (1,-.3) circle [radius=0.05];
              \node[left]at (1,.4){$m$};
            \node[right]at (2,.4){$m$};
            \node[left]at (1,-.3){$j$};
            \node[right]at (2,-.3){$j$};
            \node[left] at (-0.2,.05){$\nabla^k_i(b^m_k\parr\overline{a}_i^j)$};
            \node[font=\fontsize{25}{0}\selectfont] at (-.1,.05){$\{$};
            \node[font=\fontsize{25}{0}\selectfont] at (3.1,.05){$\}$};

            \node [right]at (3.2,.05){$\nabla^k_i(b^m_k\parr\overline{a}_i^j)$};
        \end{scope}

         \draw(-4.7,1.5)rectangle(7.7,-3.4);

    \end{scope}
    \node [below]at(6.25,-1.05) {$\Longrightarrow$};
    \node [above]at(6.25,-1.35) {$(\nabla{\rm {E}})$};
\end{scope}

\begin{scope}[shift={(14,0)}]
    \begin{scope}[shift={(-1.5,0)}]

        \begin{scope}[shift={(0,-4)}]
             \draw[thick,<-](1,2)--(2,2);
             \draw [fill] (2,2) circle [radius=0.05];
             \draw[thick,->](1,1.3)--(2,1.3);
             \draw [fill] (1,1.3) circle [radius=0.05];
            \node[left]at (1,2){$j$};
            \node[right]at (2,2){$j$};
            \node[left]at (1,1.3){$i$};
            \node[right]at (2,1.3){$i$};

            \node[font=\fontsize{20}{0}\selectfont] at (.2,1.7){$\{$};
            \node[left] at (0,1.7){$a^j_i$};

            \node[font=\fontsize{20}{0}\selectfont] at (2.8,1.7){$\}$};
            \node[right] at (3,1.7){$a^j_i$};
        \end{scope}

        \begin{scope}
                \draw[thick,<-](1,.4)
                to[out=0,in=-90](1.5,.7)
                to[out=90,in=0](2,1);
                \draw [fill] (2,1) circle [radius=0.05];

              \draw[thick,->](1,-.3)
              to[out=0,in=90](1.5,-.6)
                to[out=-90,in=180](2.15,-.9);
                \draw [fill] (1,-.3) circle [radius=0.05];
              \node[left]at (1,.4){$m$};
            \node[right]at (2,1){$m$};

            \node[right]at (2,.4){$k$};

            \draw[thick,->](2.15,-.3)
              to[out=180,in=-90](1.5,.05)
                to[out=90,in=180](2,.4);
                \draw [fill] (2.15,-.3) circle [radius=0.05];

            \node[right]at (2.15,-.3){$i$};
            \node[left]at (1,-.3){$j$};
            \node[right]at (2.15,-.9){$j$};

            \node[left] at (-0.2,.05){$\nabla^k_i(b^m_k\parr\overline{a}^i_j)$};
            \node[font=\fontsize{20}{0}\selectfont] at (-.1,.05){$\{$};

            \node[font=\fontsize{20}{0}\selectfont] at (3.1,.7){$\}$};
            \node[font=\fontsize{20}{0}\selectfont] at (3.1,.-.8){$\}$};

            \node[right] at (3.2,.7){$b^m_k$};
            \node[right] at (3.2,-.8){$\overline{a}^i_j$};
            \draw(-4.7,1.5)rectangle(4.45,-3.4);
        \end{scope}

    \end{scope}

\end{scope}

\begin{scope}[shift={(-2.5,-5.35)}]
    \begin{scope}
        \node at(-2.2,-.9) {$=$};
            \begin{scope}[shift={(0,-4)}]
                         \draw[thick,<-](1,2)--(2,2);
                         \draw[thick,->](1,1.3)--(2,1.3);
                         \draw [fill] (2,2) circle [radius=0.05];
                         \draw [fill] (1,1.3) circle [radius=0.05];

                        \node[left] at (1,1.7){$a$};
                        \node[right] at (2,1.7){$a$};
            \end{scope}

            \begin{scope}
                    \draw[thick,<-](1,.4)
                    to[out=0,in=-90](1.5,.7)
                    to[out=90,in=0](2,1);
                    \draw [fill] (2,1) circle [radius=0.05];

                  \draw[thick,->](1,-.3)
                  to[out=0,in=90](1.5,-.6)
                    to[out=-90,in=180](2,-.9);
                    \draw [fill] (1,-.3) circle [radius=0.05];

                \draw[thick,->](2,-.3)
                  to[out=180,in=-90](1.5,.05)
                    to[out=90,in=180](2,.4);

                \node[left] at (1.,.05){$b/a$};
                \node[right] at (2,.7){$b$};
                \node[right] at (2,-.6){$\overline{a}$};
                \draw(-.6,1.5)rectangle(3.05,-3.4);
            \end{scope}

    \end{scope}

\end{scope}

\begin{scope}[shift={(-2.5,0)}]

     \begin{scope}[shift={(6.9,-5.35)}]
         \node [below]at(-2.1,-.9) {$\Longrightarrow$};
         \node [above]at(-2.1,-1.2) {$(\parr{\rm {E}}_r)$};
         \begin{scope}

                 \begin{scope}[shift={(0,-4)}]
                     \draw[thick,<-](1,2)--(2,2);
                     \draw[thick,-](1,1.3)--(2,1.3);
                     \draw [fill] (1,1.3) circle [radius=0.05];
                    \node[left] at (1,1.7){$a$};

                \end{scope}

                \begin{scope}
                        \draw[thick,<-](1,.4)
                        to[out=0,in=-90](1.5,.7)
                        to[out=90,in=0](2,1);
                        \draw [fill] (2,1) circle [radius=0.05];

                      \draw[thick,-](1,-.3)
                      to[out=0,in=90](1.5,-.6)
                        to[out=-90,in=180](2,-.9)
                           to[out=0,in=90](2.3,-1.45)
                           to[out=-90,in=0](2,-2);
                           \draw [fill] (1,-.3) circle [radius=0.05];

                    \draw[thick,->]
                    (2,-2.7)
                    to[out=0,in=-90](2.7,-1.45)
                    to[out=90,in=0]
                    (2,-.3)
                      to[out=180,in=-90](1.5,.05)
                        to[out=90,in=180](2,.4);

                    \node[left] at (1,.05){$b/a$};
                    \node[right] at (2,.7){$b$};

                    \draw(-.6,1.5)rectangle(3.05,-3.4);
                \end{scope}
            \end{scope}

            \begin{scope}[shift={(8,0)}]
                \node at(-2.7,-.9) {$=$};
                \begin{scope}

                    \begin{scope}[shift={(0,-4)}]
                         \draw[thick,<-](1,2)--(2,2);
                         \draw[thick,-](1,1.3)--(2,1.3);
                         \draw [fill] (1,1.3) circle [radius=0.05];

                        \node[left] at (1,1.7){$a$};
                    \end{scope}

                    \begin{scope}
                            \draw[thick,<-](1,.4)
                                                 to[out=0,in=90](2.5,-.25)
                            to[out=-90,in=180](3,-.9);
                            \draw [fill] (3,.-.9) circle [radius=0.05];

                          \draw[thick,-](1,-.3)
                               to[out=0,in=90](2.25,-1.15)
                               to[out=-90,in=0](2,-2);
                          \draw [fill] (1,-.3) circle [radius=0.05];

                        \draw[thick,->]
                        (2,-2.7)
                         to[out=0,in=-90](2.5,-1.55)
                            to[out=90,in=180](3,-1.4);

                        \node[left] at (1,.05){$b/a$};

                        \node[right] at (3,-1.15){$b$};
                        \draw(-.6,1.5)rectangle(3.9,-3.4);

                    \end{scope}
                \end{scope}
            \end{scope}
        \end{scope}
     \end{scope}

}
}
\caption{Slash elimination}
\label{ND example}
\end{figure}
Recall that we introduced a translation of {\bf LC} directional implications into the language of {\bf ETTC}, see (\ref{translating lambek types}).
\begin{prop}
In ${\bf ETTC}_{n.d.\ }$ there are derivable sequents
\[
y^j_m:(b/a)^m_j,x^i_j:a^j_i\vdash y^j_mx^i_j:b^m_i,\quad x^i_j:a^j_i,y^m_i:(a\setminus b)^m_j\vdash x^i_jy^m_i:b^m_j.
\]
\end{prop}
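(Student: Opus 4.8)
The plan is to write down, for each of the two sequents, a short explicit derivation in ${\bf ETTC}_{n.d.\ }$, reading the directional implication types off their ${\bf ETTC}$ encodings~(\ref{translating lambek types}). Recall that $(b/a)^m_j=\nabla^k_i(b^m_k\parr\overline a^i_j)$ and $(a\backslash b)^m_j=\nabla^k_i(\overline a^m_k\parr b^i_j)$, up to a harmless renaming of the bound indices. For the first sequent I would proceed exactly as in Figure~\ref{ND derivation}: start from the identity axiom $y^j_m:(b/a)^m_j\vdash y^j_m:(b/a)^m_j$, unfold the abbreviation $(b/a)^m_j=\nabla^k_i(b^m_k\parr\overline a^i_j)$, apply $(\nabla{\rm{E}})$ to strip the binding operator (which produces the Kronecker delta $\delta^k_i$ on the term side and the pseudotype $b^m_k\parr\overline a^i_j$), and then cancel that delta by one $(\alpha\eta_\leftarrow)$ step, arriving at $y^j_m:b^m_i\parr\overline a^i_j$. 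Since $b^m_i\parr\overline a^i_j$ has the shape $B\parr\overline A$ with $A=\overline{\overline a^i_j}=a^j_i$, a single $(\parr{\rm{E}}_r)$ step against the identity axiom $x^i_j:a^j_i\vdash x^i_j:a^j_i$ produces $y^j_mx^i_j:b^m_i$ together with the required context.

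The second sequent is obtained by the mirror-symmetric derivation. Here the relevant encoding of the implication is the ``left'' one, $A\backslash B$ being rendered as $\overline A\parr B$ rather than $B\parr\overline A$; so after the analogous $(\nabla{\rm{E}})$ step and an $(\alpha\eta)$ reindexing one is left with a pseudotyping judgement whose type has the form $\overline A\parr B$, and one finishes with $(\parr{\rm{E}}_l)$ instead of $(\parr{\rm{E}}_r)$, applied against the identity axiom for $a$. The concatenation order dictated by $(\parr{\rm{E}}_l)$, namely $ts$ with $t$ of type $A$ and $s$ of type $\overline A\parr B$, is exactly $x\cdot y$ as demanded by the statement, whereas $(\parr{\rm{E}}_r)$ in the first sequent produced $y\cdot x$. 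If one prefers, one can instead invoke Corollary~\ref{ND=sequent calculus} and derive the sequent-calculus translations $\vdash b^m_i,\overline a^i_j,\triangle^k_i(a^j_i\otimes\overline b^k_m)$ and its $a\backslash b$-analogue in ${\bf ETTC}$ using $({\rm{Id}})$, $(\otimes)$, $(\triangle)$ and the $(\alpha\eta)$ rules, but the direct natural-deduction argument is shorter.

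There is no conceptual obstacle here; the only thing to watch is the index bookkeeping. At each step one must check the well-formedness side conditions of Figure~\ref{ND ETTC} --- freshness and the absence of forbidden index repetitions --- and in particular verify that the $\delta^k_i$ emitted by $(\nabla{\rm{E}})$ can indeed be absorbed by an $(\alpha\eta_\leftarrow)$ step and that the resulting pseudotype matches the $\overline A$ slot consumed by $\parr$-elimination. Geometrically (cf.\ Figure~\ref{ND picture}) everything is transparent: $(\nabla{\rm{E}})$ merely reinstates the unlabelled edge joining the two bound vertices, and $\parr$-elimination then consumes that edge by plugging in the string of type $a$, so that the two strings are concatenated; the directedness of the Lambek slash is captured precisely by which of the two $\parr$-elimination rules is applicable, which is in turn forced by the two different encodings of implication used in~(\ref{translating lambek types}).
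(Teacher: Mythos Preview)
Your proposal is correct and follows essentially the same route as the paper: the paper's proof simply refers to the derivation displayed in Figure~\ref{ND derivation} for the first sequent (identity axiom, $(\nabla{\rm E})$, an $(\alpha\eta_\leftarrow)$ step to absorb the delta, then $(\parr{\rm E}_r)$ against the identity for $a$) and says ``the case of the second one is similar'', which is precisely your mirror-symmetric argument with $(\parr{\rm E}_l)$.
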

\begin{proof} Derivation of the first sequent together with is geometric representation is shown in Figure~\ref{ND derivation}. The case of the second one is similar. \end{proof}
\begin{cor}
The rules
\[
\cfrac
{
    \Gamma\vdash t:(b/a)^i_j\quad\Theta\vdash s:a^j_k
}
{
\Gamma,\Theta\vdash t s:b^i_k
}
(/ {\rm {E}})
\quad
\cfrac
{
\Gamma\vdash s:a^i_j\quad\Theta\vdash t:(a\backslash b)^j_k
}
{
\Gamma,\Theta\vdash s t:b^i_k
}
(\backslash{\rm{E}})
\]
 are admissible in ${\bf ETTC}_{n.d.\ }$. $\Box$
\end{cor}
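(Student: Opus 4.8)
The plan is to obtain each rule from the preceding Proposition by substitution. The ingredients are: ${\bf ETTC}_{n.d.\ }$ is closed under the rule $({\rm{Sb}})$ (Proposition~\ref{Subst}); renaming indices uniformly throughout an ${\bf ETTC}_{n.d.\ }$ derivation again produces a derivation (the n.d.\ analogue of Proposition~\ref{alpha-equivalent derivations}); and tensor--term multiplication is commutative, so the leftmost position in a product term carries no information and the required shape $X\cdot s$ of the second premise of $({\rm{Sb}})$ can always be met.

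I would treat $(/{\rm{E}})$ first. Starting from the first sequent of the Proposition, $y^j_m:(b/a)^m_j,\ x^i_j:a^j_i\vdash y^j_m x^i_j:b^m_i$, rename its indices (first $i$ to a fresh $k$, then $m$ to $i$) to the derivable sequent $y^j_i:(b/a)^i_j,\ x^k_j:a^j_k\vdash y^j_i x^k_j:b^i_k$. Now suppose $\Gamma\vdash t:(b/a)^i_j$ and $\Theta\vdash s:a^j_k$ are derivable. Apply $({\rm{Sb}})$ to $\Gamma\vdash t:(b/a)^i_j$ and this sequent, taking $y^j_i$ (which already occurs leftmost in the conclusion term) as the substituted variable; this yields $\Gamma,\ x^k_j:a^j_k\vdash t\cdot x^k_j:b^i_k$. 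Rewriting the term as $x^k_j\cdot t$ by commutativity and applying $({\rm{Sb}})$ again to $\Theta\vdash s:a^j_k$ and this sequent, with $x^k_j$ as the substituted variable, gives $\Gamma,\Theta\vdash s\cdot t:b^i_k$, i.e.\ the conclusion of $(/{\rm{E}})$. The rule $(\backslash{\rm{E}})$ is obtained in exactly the same way from the second sequent of the Proposition.

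The only point needing care is the index bookkeeping: one must choose the renaming so that the renamed Proposition sequent agrees with $\Gamma$ and $\Theta$ precisely on the indices $i,j,k$ that are meant to be shared and is otherwise index-disjoint from them — which is exactly what the implicit well-formedness of the target $\Gamma,\Theta\vdash ts:b^i_k$ encodes — and one must check that the two applications of $({\rm{Sb}})$ satisfy their implicit well-formedness conditions, so that in particular the ``middle'' index $j$ (upper in $t$, lower in $s$) becomes bound in $ts$, just like the bound variable produced by a function application. This is entirely routine, so I expect no genuine obstacle; the content of the Corollary is already contained in the Proposition.
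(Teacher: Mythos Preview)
Your proposal is correct and is precisely the argument the paper leaves implicit (the paper's proof is just $\Box$). The two applications of $({\rm Sb})$, combined with the freedom to rename indices in the derivation supplied by the Proposition and the commutativity of term multiplication, give admissibility exactly as you describe.
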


\subsection{Grammars}
 \begin{figure}[htb]
\centering
\subfloat[{Axioms}
\label{Axioms}
]
{
$\begin{array}{c}
[{\rm{Mary}}]^i_j: np,\quad
[{\rm{John}}]^i_j: np,\quad
[{\rm{loves}}]^i_j:(np\backslash s)/np,
\\
\mbox{[{\rm{madly}}]}^i_j:
((np\backslash s)
\backslash(np\backslash s)),
\quad
\mbox{[{\rm{who}}]}^i_j:
(np\backslash np)/\Delta^u_t(s\parr\overline{np}^t_u)
\end{array}$
}\\
\subfloat[{Derivation}
\label{ND grammar derivation}
]
{
 \begin{tabular}{c}
    \def\fCenter{\mbox{\ $\vdash$\ }}
    \def\ScoreOverhang{.1pt}
    \Axiom$
            \fCenter [{\rm{loves}}]^k_l:(np\backslash s)/np
                $
     \Axiom$x^j_k:np\fCenter x^j_k:np$
    \RightLabel{$(/{\rm {E}})$}
    \insertBetweenHyps{\hskip 5pt}
    \BinaryInf$
        x^j_k:np \fCenter
          [{\rm{loves}}]^k_l x^j_k:np\backslash s
        $
     \Axiom$
            \fCenter [{\rm{madly}}]^i_j:
            (np\backslash s)\backslash(np\backslash s)
            $
    \RightLabel{$(\backslash{\rm {E}})$
    \insertBetweenHyps{\hskip 5pt}}
    \BinaryInf$
        x^j_k:np \fCenter
        [{\rm{loves}}]^k_l x^j_k[{\rm{madly}}]^i_j:
        np\backslash s
        $
    \DisplayProof
\\
\\
    \def\fCenter{\mbox{\ $\vdash$\ }}
    \def\ScoreOverhang{.1pt}
    \Axiom$
            \fCenter [{\rm{John}}]^l_m:
            np
            $
    \Axiom$
            x^j_k:np \fCenter
        [{\rm{loves}}]^k_l x^j_k[{\rm{madly}}]^i_j:
        np\backslash s
                $
     \RightLabel{$(\backslash{\rm {E}})$}
    \insertBetweenHyps{\hskip 5pt}
    \BinaryInf$
        x^j_k:np \fCenter
          [{\rm{John}}]^l_m[{\rm{loves}}]^k_l x^j_k[{\rm{madly}}]^i_j: s_i^m
        $
        \RightLabel{$(\equiv_\beta)$}
    \UnaryInf$
                    x^j_k:np \fCenter
          [{\rm{John}}\mbox{ }{\rm{loves}}]^k_m x^j_k[{\rm{madly}}]^i_j: s_i^m
        $
    \RightLabel{$(\parr{\rm {I}_r})$}
    \UnaryInf$
        \fCenter
            [{\rm{John}}\mbox{ }{\rm{loves}}]^k_m [{\rm{madly}}]^i_j :
            s^m_i\parr\overline{np}^j_k
        $
    \RightLabel{$(\triangle{\rm {I}})$}
    \UnaryInf$
    \fCenter
          \delta_k^j\cdot[{\rm{John}}\mbox{ }{\rm{loves}}]^k_m \cdot[{\rm{madly}}]^i_j:
        \triangle^k_j(
        s^m_i\parr\overline{np}^j_k
        )
        $
        \RightLabel{$(\equiv_\beta)$}
    \UnaryInf$
    \fCenter
          [{\rm{John}}\mbox{ }{\rm{loves}}\mbox{ }{\rm{madly}}]^i_m:
        \triangle^k_j(s^m_i\parr\overline{np}^j_k)
        $
  \DisplayProof
\\
\\
    \def\fCenter{\mbox{\ $\vdash$\ }}
    \def\ScoreOverhang{.1pt}
    \Axiom$
            \fCenter [{\rm{who}}]_u^m:
    (np\backslash np)/\triangle_j^k(s^m_u\parr\overline{np}^j_k)
            $
    \Axiom$
             \fCenter
        [{\rm{John}}~{\rm{loves}}~{\rm{madly}}]^i_m:
        \triangle^k_j(s^m_i\parr\overline{np}^j_k)
                $
    \insertBetweenHyps{\hskip 0pt}
    \RightLabel{$(/{\rm {E}})$}
    \BinaryInf$
         \fCenter
          [{\rm{who}}]^m_u[{\rm{John}}\mbox{ }{\rm{loves}}\mbox{ }{\rm{madly}}]_m^i:
        np\backslash np
        $
    \RightLabel{$(\equiv_\beta)$}
    \UnaryInf$
         \fCenter
          [{\rm{who}}\mbox{ }{\rm{John}}\mbox{ }{\rm{loves}}\mbox{ }{\rm{madly}}]_u^i:
        np\backslash np
        $
  \DisplayProof
\\
\\
    \def\fCenter{\mbox{\ $\vdash$\ }}
    \def\ScoreOverhang{.1pt}
    \Axiom$
            \fCenter [{\rm{Mary}}]^u_n:
            np
            $
    \Axiom$
         \fCenter
          [{\rm{who}}\mbox{ }{\rm{John}}~{\rm{loves}}~{\rm{madly}}]^i_u:
        np\backslash np
        $
    \RightLabel{$(\backslash{\rm {E}},\equiv_\beta)$}
    \insertBetweenHyps{\hskip 5pt}
    \BinaryInf$
        \fCenter
          [{\rm{Mary}}\mbox{ }{\rm{who}}\mbox{ }{\rm{John}}\mbox{ }{\rm{loves}}\mbox{ }{\rm{madly}}]^i_n:
         np
        $
     \DisplayProof
\end{tabular}
}
\caption{Tensor grammar example}
\label{ND grammar example}
\end{figure}

\begin{defi} Given a terminal alphabet $T$, a {\it tensor lexical
 entry} (over $T$), is an $\alpha$-equivalence class of $\beta$-reduced lexical typing judgements (over $T$). A {\it tensor grammar} $G$ (over $T$) is a pair $G=(\mathit{Lex},s)$, where $\mathit{Lex}$, the {\it lexicon}, is a finite set of lexical entries (over $T$) and $s$ is an atomic type symbol of valency $(1,1)$.
  \end{defi}
  For a tensor grammar $G$ we  write $\vdash_G \sigma:A$ to indicate that the sequent $\vdash\sigma:A$ is derivable in ${\bf ETTC}_{n.d.\ }$ from elements of $\mathit{Lex}$.
 \begin{defi}
 The {\it language $L(G)$} of a tensor grammar $G$ is the set
 \be\label{tensor grammar language}
L(G)=\{w|~\vdash_G [w]^i_j:s^j_i\}.
\ee
\end{defi}
Since ${\bf ETTC}_{n.d.\ }$ derivability is closed under $\alpha$-equivalence, nothing will change if we require  lexicons to be finite only modulo $\alpha$-equivalence. In fact, it is more convenient to consider lexicons closed under $\alpha$-equivalence and consisting only of finitely many equivalence classes. Given a tensor grammar $G=(\mathit{Lex},s)$, we write $\mathit{Lex}_{\equiv_\alpha}$ for the closure of $\mathit{Lex}$ under $\alpha$-equivalence.
\begin{prop}\label{deduction theorem for grammars}
Given a tensor grammar $G=(\mathit{Lex},s)$ and a typing judgement $\Sigma$ of the form $t:F$, we have $\vdash_G\Sigma$ iff
there are lexical entries $\tau_{(1)}:A_{(1)},\ldots,\tau_{(n)}:A_{(n)}\in \mathit{Lex}_{\equiv_\alpha}$ satisfying
\be\label{orthogonality of axioms}
FI(A_{(i)})\perp FI(A_{(j)})\mbox{ for }i\not=j
\ee
and a term $t'$, which is the product of Kronecker deltas, such that
 \[
 \vdash_{\bf ETTC}t'::F,\overline{A_{(n)}},\ldots,\overline{A_{(1)}},\quad t'\cdot\tau_{(1)}\ldots\tau_{(n)}\equiv_\beta t.
 \]
\end{prop}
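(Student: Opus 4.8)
The plan is to reduce Proposition~\ref{deduction theorem for grammars} to the already-established ``Deduction theorem'' (Proposition~\ref{extended deduction theorem}) together with the equivalence between the natural deduction and sequent calculus formulations of {\bf ETTC} (Corollary~\ref{ND=sequent calculus}). The statement $\vdash_G\Sigma$ for $\Sigma=t:F$ means, by definition, that the n.d.\ sequent $\vdash t:F$ (i.e., with empty n.d.\ typing context) is ${\bf ETTC}_{n.d.\ }$ derivable from elements of $\mathit{Lex}$, and since ${\bf ETTC}_{n.d.\ }$ derivability is closed under $\alpha$-equivalence (Proposition~\ref{n.d.  derivability closed under equivalence}) we may equally well work with $\mathit{Lex}_{\equiv_\alpha}$. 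A derivation from the lexicon uses finitely many lexical entries, each entry possibly reused; but since lexical entries are $\beta$-reduced lexical \emph{typing} judgements, their types satisfy $FI(A_{(i)})=FI(A_{(i)})^\dagger$ with no repeated indices, and we can always pick $\alpha$-variants inside $\mathit{Lex}_{\equiv_\alpha}$ so that condition~(\ref{orthogonality of axioms}) holds. Thus $\vdash_G\Sigma$ iff $\Sigma$ is ${\bf ETTC}_{n.d.\ }$ derivable from some finite set $\Xi=\{\tau_{(1)}:A_{(1)},\ldots,\tau_{(n)}:A_{(n)}\}\subseteq\mathit{Lex}_{\equiv_\alpha}$ with pairwise orthogonal free indices, using each element of $\Xi$ exactly once.

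**Key steps.** First I would apply Proposition~\ref{extended deduction theorem} directly: $\vdash t:F$ is derivable from $\Xi$ using each element exactly once iff there is an ${\bf ETTC}_{n.d.\ }$ sequent $\Sigma'$ of the form $X_{(1)}:A_{(1)},\ldots,X_{(n)}:A_{(n)}\vdash t'X_{(1)}\ldots X_{(n)}:F$, derivable \emph{without} nonlogical axioms, with $t'\cdot\tau_{(1)}\ldots\tau_{(n)}\equiv_\beta t$. Note $\Sigma'$ is a \emph{standard} n.d.\ sequent (Proposition on standard sequents), so by Corollary~\ref{ND=sequent calculus} its sequent calculus translation is ${\bf ETTC}$-derivable, and conversely. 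By the definition of sequent calculus translation (Proposition~\ref{translations}), the translation of $\Sigma'$ is the tensor sequent $\vdash t'::F,\overline{A_{(n)}},\ldots,\overline{A_{(1)}}$. So the existence of $\Sigma'$ is equivalent to $\vdash_{\bf ETTC}t'::F,\overline{A_{(n)}},\ldots,\overline{A_{(1)}}$ for some $t'$ with $t'\cdot\tau_{(1)}\ldots\tau_{(n)}\equiv_\beta t$. The last thing to verify is that such a $t'$ can be taken to be a product of Kronecker deltas: this is because in a derivation from an empty nonlogical lexicon the only ``material'' that can enter the term is introduced by the $({\rm{Id}})$ axioms and the $(\alpha\eta)$, $(\nabla{\rm I}/{\rm E})$, $(\triangle{\rm I}/{\rm E})$ rules, all of which contribute only Kronecker deltas (and the $X_{(i)}$, which in the standard form appear as $t'X_{(1)}\cdots X_{(n)}$ with $t'$ the delta part); equivalently, by cut-elimination for {\bf ETTC} a cut-free derivation of $\vdash t'::F,\overline{A_{(n)}},\ldots,\overline{A_{(1)}}$ has a $\beta$-reduced term built solely from $\delta$'s since the context contains no terminal-symbol material.

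**Main obstacle.** The delicate point is the bookkeeping at the interface of the two reformulations: Proposition~\ref{extended deduction theorem} produces an arbitrary ${\bf ETTC}_{n.d.\ }$-derivable $\Sigma'$ with $t'$ an arbitrary n.d.\ term, and I must argue $t'$ may be chosen to be a product of Kronecker deltas, and that no tensor variables survive except the declared $X_{(i)}$. The cleanest route is to invoke cut-elimination for {\bf ETTC} on the translated sequent $\vdash t'::F,\overline{A_{(n)}},\ldots,\overline{A_{(1)}}$: the types $F$ and $\overline{A_{(i)}}$ carry no terminal symbols, and the cut-free {\bf ETTC} rules applied to such judgements can only generate terms that are $\beta$-equivalent to products of Kronecker deltas (the $({\rm{Id}})$ axioms give $\eta$-long deltas, the binding-operator rules multiply by single $\delta$'s, and $(\parr)$ is term-neutral). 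Replacing $t'$ by its $\beta$-reduced delta form preserves $t'\cdot\tau_{(1)}\ldots\tau_{(n)}\equiv_\beta t$ since $\beta$-equivalence is compatible with multiplication whenever both sides are well-formed terms (as noted in the remarks on binding and multiplication). The converse direction is routine: given such a delta term $t'$ with $\vdash_{\bf ETTC}t'::F,\overline{A_{(n)}},\ldots,\overline{A_{(1)}}$, Corollary~\ref{ND=sequent calculus} yields $\vdash_{{\bf ETTC}_{n.d.\ }}X_{(1)}:A_{(1)},\ldots,X_{(n)}:A_{(n)}\vdash t'X_{(1)}\ldots X_{(n)}:F$, and then substituting the lexical entries $\tau_{(i)}:A_{(i)}$ via the substitution rule (Proposition~\ref{Subst}), using each once, and applying $(\equiv_\beta)$ gives $\vdash_G t:F$.
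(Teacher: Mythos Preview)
Your proposal is correct and follows essentially the same approach as the paper: reduce to the Deduction Theorem (Proposition~\ref{extended deduction theorem}) plus the equivalence of ${\bf ETTC}_{n.d.}$ and {\bf ETTC} (Corollary~\ref{ND=sequent calculus}), after first using $\alpha$-equivalence to achieve the orthogonality condition~(\ref{orthogonality of axioms}). The only difference is in the justification that $t'$ may be taken to be a product of Kronecker deltas: you go through cut-elimination and inspect the rules, whereas the paper simply observes that {\bf ETTC} is independent of terminal alphabets, so any {\bf ETTC}-derivable term contains no terminal symbols and is therefore (up to $\beta$) a product of $\delta$'s. The paper's route is shorter, but yours is equally valid.
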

\begin{proof} Assume that $\vdash_G\Sigma$.

Since $\alpha$-equivalent n.d.\  sequents in ${\bf ETTC}_{n.d.\ }$ are derivable from each other, we may assume that the entries from $\mathit{Lex}_{\equiv_\alpha}$ used for deriving $\vdash\Sigma$ satisfy (\ref{orthogonality of axioms}). Then the statement is a
direct corollary of Proposition~\ref{extended deduction theorem} (``Deduction theorem'') and Corollary~\ref{ND=sequent calculus} on the relation between derivability in ${\bf ETTC}_{n.d.\ }$ and ${\bf ETTC}$. That $t'$ is the product of Kronecker deltas follows from  a simple observation that {\bf ETTC} is independent from terminal alphabets, hence $\vdash_{\bf ETTC}t'::\Gamma$ implies that $t'$ has no terminal symbols.

The other direction follows from Corollary~\ref{ND=sequent calculus} and Proposition~\ref{extended deduction theorem} in a similar way. \end{proof}

A linguistically motivated example of tensor grammar is given in Figure~\ref{ND grammar example}, where we use notation for {\bf LC} slashes as in (\ref{translating lambek types}). For better readability, we omit free indices in formulas, whenever  they are uniquely determined by free indices in terms.   Figure~\ref{Axioms} shows the lexicon, which we assume closed under $\alpha$-equivalence of typing judgements, and in Figure~\ref{ND grammar derivation} we derive the noun phrase ``Mary who John loves madly''.

 The grammar, obviously, is an extension of a one adapted from {\bf LC}. However, the lexical entry for ``who'', which allows medial extraction, is not a translation from {\bf LC}, as is manifested by the binding operator in the type.
Using the implicational notation, the above lexical entry can be written as
\[
\mbox{[{\rm{who}}]}^i_j:
(np\backslash np)/\triangle^u_t((np^u_t)\multimap s)).
\]
 It might be entertaining to reproduce the derivation in the geometric language.

\section{Correspondence with first order logic}
\subsection{Translating formulas}
\begin{figure}[htb]
\centering
\subfloat[{$\eta$-Long translation}
\label{eta-long translation}
]
{
$
\begin{array}{c}
||p(x_1,\ldots,x_n)||_{\eta\to}=p^{x^l_1\ldots x^l_k}_{x^r_{k+1}\ldots x^r_n},\mbox{ for }p\in\mathit{Lit}, v(p)=(k,n-k),
\\[.15cm]
||A\otimes B||_{\eta\to}=||A||_{\eta\to}\otimes||B||_{\eta\to},\quad||A\parr B||_{\eta\to}=||A||_{\eta\to}\parr||B||_{\eta\to},
\\[.15cm]
||\forall xA||_{\eta\to}=\nabla^{x^r}_{x^l}||A||_{\eta\to},\quad||\exists xA||_{\eta\to}=\triangle^{x^r}_{x^l}||A||_{\eta\to},
\\[.15cm]
||A_1,\ldots,A_n||_{\eta\to}=||A_1||_{\eta\to},\ldots,||A_n||_{\eta\to},
\\[.15cm]
||\vdash\Gamma||_{\eta\to}=(\vdash\pi(\Gamma)::||\Gamma||_{\eta\to}),\mbox{ where }\pi(\Gamma)=\prod\limits_{x\in FV(\Gamma)}\delta^{x^r}_{x_l}.
\end{array}
$
}\\[.25cm]
\subfloat[{$\eta$-Reduced translation}
\label{eta-reduced translation}
]
{
$
\begin{array}{c}
||p(x_1,\ldots,x_n)||=p^{x_1\ldots x_k}_{x_{k+1}\ldots x_n},\mbox{ for }p\in\mathit{Lit}, v(p)=(k,n-k),
\\[.15cm]
||A\otimes B||=||A||\otimes||B||,\quad||A\parr B||=||A||\parr||B||,
\\[.15cm]
||\forall xA||=\nabla^{u}_{v}||A||^{[\nicefrac{v/x}]}_{[\nicefrac{u/x}]},\quad||\exists xA||=\triangle^{u}_{v}||A||^{[\nicefrac{v/x}]}_{[\nicefrac{u/x}]},
\\[.15cm]
||A_1,\ldots,A_n||=||A_1||,\ldots,||A_n||,\quad||\vdash\Gamma||=(\vdash||\Gamma||).
\end{array}
$
}\caption{Translations of linguistic fragment to {\bf ETTC}}
\end{figure}
Given a linguistically marked first order language, we identify predicate symbols with literals   of the same valencies and define the {\it $\eta$-reduced} and {\it $\eta$-long} translations, denoted respectively as $||.||$ and $||.||_{\eta\to}$, of linguistically well-formed formulas and contexts to  tensor formulas and contexts. For the {$\eta$-reduced} translation we identify first order variables with indices, left occurrences with upper ones and right occurrences with lower ones. For the {$\eta$-long} translation, on the other hand, we identify free variable  {\it occurrences} with indices, so that for each  $x\in \mathit{Var}$ there are two indices $x^l,x^r\in\mathit{Ind}$, corresponding to its left and right occurrence respectively.
 \begin{defi} The {\it $\eta$-long translation} of  linguistically well-formed {\bf MLL1} formulas, contexts and sequents  to {\bf ETTC} is given in Figure~\ref{eta-long translation}. The {\it $\eta$-reduced translation} of linguistically well-formed {\bf MLL1} formulas, contexts and sequents to {\bf ETTC} is defined up to $\alpha$-equivalence of tensor formulas and is given in Figure~\ref{eta-reduced translation}, where it is assumed that the indices $u$, $v$ in the  translation of quantified formulas are such that they result in  well-defined expressions.
 \end{defi}
  The need to choose bound indices $u$, $v$ in the $\eta$-reduced translation of quantifiers makes the latter translation ambiguous. However,  different choices of indices result in $\alpha$-equivalent tensor formulas, which are provably equivalent in {\bf ETTC}.
  \begin{prop}\label{equivalence of translations}
  For any linguistically well-formed {\bf MLL1} sequent $\vdash\Gamma$, its ${\eta}$-reduced and $\eta$-long translations are $\alpha\eta$-equivalent, $||\vdash\Gamma||_{\eta\to}\equiv_{\alpha\eta}||\vdash\Gamma||$. $\Box$
  \end{prop}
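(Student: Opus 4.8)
The plan is to prove Proposition~\ref{equivalence of translations} by induction on the structure of the linguistically well-formed formula, context, and sequent, showing at each stage that the $\eta$-long translation $||\cdot||_{\eta\to}$ is obtained from the $\eta$-reduced translation $||\cdot||$ by a sequence of $\alpha\eta$-moves (and vice versa). The key conceptual point is that the two translations differ only in how they treat variable occurrences versus variables: the $\eta$-reduced translation collapses the left and right occurrence of a variable $x$ into the single index $x$, whereas the $\eta$-long translation keeps two distinct indices $x^l,x^r$ and records the identification by a Kronecker delta $\delta^{x^r}_{x^l}$ in the term. So the heart of the argument is that inserting such a delta and renaming is exactly an $\eta$-expansion in the sense of Definition~\ref{tensor sequent relations}.

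First I would handle atomic formulas and, more importantly, the passage from a formula to a sequent, since that is where the deltas enter. For a linguistically well-formed sequent $\vdash\Gamma$, each free variable $x$ of $\Gamma$ has exactly one left and one right occurrence. In $||\vdash\Gamma||$ the index $x$ thus appears once as an upper and once as a lower free index, i.e.\ $x\in D(||\Gamma||)$ in the degeneracy terminology; the pseudotyping judgement $\vdash 1::||\Gamma||$ is not $\eta$-long. Applying $\eta$-expansion to each such repeated free index $x$ — renaming the two occurrences to $x^r,x^l$ and multiplying the term by $\delta^{x^r}_{x^l}$ — produces exactly $\vdash\pi(\Gamma)::||\Gamma||_{\eta\to}$, because $\pi(\Gamma)=\prod_{x\in FV(\Gamma)}\delta^{x^r}_{x_l}$ is precisely the accumulated product of these deltas and $||\Gamma||_{\eta\to}$ is $||\Gamma||$ with every left/right occurrence relabeled. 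Conversely the $\eta$-long judgement $\eta$-reduces back to $||\vdash\Gamma||$. By Proposition~\ref{derivability closed under equivalence}(iii), $\beta\eta$-equivalent sequents are cut-free derivable from each other, which gives the needed $\alpha\eta$-equivalence (up to the $\alpha$-choices in the $\eta$-reduced translation, which are harmless since $\alpha$-equivalent formulas are provably equivalent by Proposition~\ref{derivability closed under equivalence}(ii)).

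For the compound cases I would argue that the two translations commute with $\otimes$, $\parr$, and the quantifier/operator clauses up to $\alpha\eta$-equivalence. The multiplicative cases are immediate since both translations send $\otimes,\parr$ to themselves. For a quantified formula $\forall xA$, the $\eta$-long translation gives $\nabla^{x^r}_{x^l}||A||_{\eta\to}$, while the $\eta$-reduced one gives $\nabla^u_v ||A||^{[\nicefrac{v}{x}]}_{[\nicefrac{u}{x}]}$ for fresh $u,v$; these are $\alpha$-equivalent renamings of the bound index pair, and inside the scope the induction hypothesis handles $||A||_{\eta\to}$ versus $||A||$. The one subtlety is bookkeeping: when a variable is bound by a quantifier it is no longer free, so the corresponding delta in $\pi(\Gamma)$ must not be present — but this is automatic because $x\notin FV(\forall xA)$, so the delta count matches on both sides. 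I would make this precise by tracking which free variables remain free at each stage.

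The main obstacle, I expect, is not any single hard step but the careful combinatorial bookkeeping of free versus bound occurrences and of exactly which Kronecker deltas sit in the term at each stage of the induction — in particular ensuring that the inductive invariant (``$||\cdot||_{\eta\to}$ equals $||\cdot||$ with left/right occurrences separated, plus one delta $\delta^{x^r}_{x^l}$ for each still-free variable $x$'') is stated strongly enough to go through the quantifier clause, where a free variable becomes bound and its delta must disappear. Once that invariant is set up correctly, each inductive step is a routine application of $\eta$-expansion/reduction together with Proposition~\ref{derivability closed under equivalence}, and the final statement at the sequent level follows directly. I would therefore spend most of the effort stating the invariant and then let the cases fall out mechanically.
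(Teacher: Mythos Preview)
Your proposal is correct and is precisely the routine verification the paper has in mind; the paper itself omits the proof entirely (the $\Box$ at the end of the statement signals this), so you are supplying the details it leaves implicit. The core observation you identify --- that for each $x\in FV(\Gamma)$ the single repeated index $x$ in $||\Gamma||$ is split by one step of $\eta$-expansion (Definition~\ref{tensor sequent relations}) plus an $\alpha$-renaming into the pair $x^l,x^r$ with the accompanying factor $\delta^{x^r}_{x^l}$, and that iterating over all free variables yields exactly $\pi(\Gamma)::||\Gamma||_{\eta\to}$ --- is the whole content of the proposition.

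One small cleanup: you do not need to invoke Proposition~\ref{derivability closed under equivalence} at all. The claim is purely about the syntactic relation $\equiv_{\alpha\eta}$ on pseudotyping judgements, not about interderivability, so the argument finishes as soon as you have exhibited the chain of $\eta$-expansions and $\alpha$-renamings. Likewise, the formula-level induction you sketch is more structure than strictly necessary: it suffices to observe (i) that $||\Gamma||_{\eta\to}$ is obtained from $||\Gamma||$ by the global index substitution $x\mapsto x^l$ in upper positions and $x\mapsto x^r$ in lower positions together with $\alpha$-renaming of bound indices in the quantifier clauses, and (ii) that at the sequent level this substitution is realised by the $\eta$-expansion steps, one per free variable. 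Your remark about the quantifier case --- that the bound variable's delta must not appear because $x\notin FV(QxA)$ --- is exactly right and is the only place where any care is needed.
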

  In terms of the geometric representation, the $\eta$-long translation $\vdash\pi(\Gamma):: ||\Gamma||_{\eta\to}$  of a linguistically well-formed {\bf MLL1} sequent $\vdash\Gamma$ is the occurrence net of $\vdash\Gamma$, encoded as a $\beta$-reduced $\eta$-long typing judgement (thinking both of the occurrence net and of the term $\pi(\Gamma)$ as bipartite graphs). The ${\bf ETTC}_{\eta\to}$ rules correspond to transformations of occurrence nets under linguistic derivation rules. The $(\forall)$ rule, which erases a link from the occurrence net, naturally corresponds to the $(\nabla)$ rule, and  the $(\exists')$ rule, which glues links together, to the $(\triangle)$ rule. The $(\exists)$ rule applied to a linguistically well-formed sequent erases a link in the same way as the $(\forall)$ rule. This corresponds to the situation when the $(\triangle)$ rule is applied to a sequent of the form $\vdash\delta^i_jt::\Theta$ and binds the pair of occurrences $(j,i)\in FI(\Theta)$,  introducing  the term $\delta^j_i\cdot\delta^i_jt\equiv_\beta t$, basically, erasing $\delta^i_j$ from the premise. After these observations we state the main lemma.
 \begin{lem}\label{FO2ETTC}
A linguistically well-formed sequent is derivable in ${\bf MLL1}$ iff its    $\eta$-long and $\eta$-reduced translations  are derivable in {\bf ETTC}.
Conversely, any {\bf ETTC} derivable sequent is $\alpha\beta\eta$-equivalent to a translation of an {\bf MLL1} derivable linguistically well-formed sequent.
\end{lem}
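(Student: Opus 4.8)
The plan is to prove the two directions of the equivalence separately, using the previously established machinery as black boxes.

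\textbf{From \textbf{MLL1} to \textbf{ETTC}.} Suppose $\vdash\Gamma$ is a linguistically well-formed \textbf{MLL1} sequent with $\vdash_{\bf MLL1}\Gamma$. By cut-elimination for \textbf{MLL1} we may take a cut-free derivation $\pi$, which by Proposition~\ref{lingustically marked derivation} is linguistically marked, and by Lemma~\ref{linguistically well-formed fragment} translates to a \emph{linguistic derivation}, i.e.\ one built from \textbf{MLL1} rules plus $(\exists')$ and passing only through linguistically well-formed sequents. I would then proceed by induction on this linguistic derivation, translating each rule into \textbf{ETTC} via $||\cdot||_{\eta\to}$. The axiom $\vdash\overline A,A$ translates to $\vdash\pi(\{A\})::||\overline A||_{\eta\to},||A||_{\eta\to}$, which after $\eta$-long bookkeeping is an $({\rm Id}_{\eta_\to})$ axiom (here one uses that the occurrence net of the axiom matches $e_i$ with $e_i$, exactly as the Kronecker-delta pattern in $({\rm Id}_{\eta_\to})$). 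The $(\parr)$ rule maps to $(\parr)$ and $(\otimes)$ to $(\otimes)$ (the side conditions in Figure~\ref{restrictions} are exactly the linguistic-well-formedness disjointness conditions). The $(\forall)$ rule, which on the occurrence net erases the link $(v_l,v_r)$, maps to the $(\nabla)$ rule: the premise carries the factor $\delta^{v^r}_{v^l}$ in $\pi(\Gamma)$ corresponding to that link, and $(\nabla)$ absorbs exactly such a $\delta$. The $(\exists)$ rule applied to a linguistically well-formed sequent also erases a link, so it too maps to a $(\nabla)$-like step as noted in the paragraph before the Lemma. The $(\exists')$ rule, which glues two links into one, maps to $(\triangle)$, whose action on the picture (Figure~\ref{delta_geometrically}) is precisely this gluing. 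Throughout, the bookkeeping of bound-index renamings is handled by Proposition~\ref{derivability closed under equivalence}(iii),(iv). Then Proposition~\ref{ETTC is a fragment} (or rather its converse direction, via $\eta$-long representatives) gives derivability of $||\vdash\Gamma||_{\eta\to}$ in \textbf{ETTC}, and Proposition~\ref{equivalence of translations} transfers this to $||\vdash\Gamma||$.

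\textbf{From \textbf{ETTC} to \textbf{MLL1}.} For the converse, given $\vdash_{\bf ETTC}\Sigma$, use cut-elimination for \textbf{ETTC} and Lemma~\ref{conservativity of eta-long fragment} to reduce to a cut-free ${\bf ETTC}_{\eta\to}$ derivation of an $\eta$-long representative $\widetilde\Sigma$ of $\Sigma$. Since $\widetilde\Sigma$ is a genuine typing judgement $t::\Delta$ with $t$ $\beta$-reduced, it \emph{is} (the encoding of) an occurrence net on the context $\Delta$, so there is a linguistically well-formed \textbf{MLL1} sequent $\vdash\Gamma$ with $||\Gamma||_{\eta\to}=\Delta$ (read indices back as variable occurrences, upper/lower as left/right, $\nabla/\triangle$ as $\forall/\exists$, as in Figure~\ref{eta-long translation}) and with $t=\pi(\Gamma)$. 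I would then run the induction on the ${\bf ETTC}_{\eta\to}$ derivation in reverse: $({\rm Id}_{\eta_\to})$ comes from an \textbf{MLL1} axiom; $(\parr)$ and $(\otimes)$ from the same \textbf{MLL1} rules; $(\nabla_{\equiv_\alpha})$ from either the $(\forall)$ rule or the link-erasing case of $(\exists)$, depending on whether the absorbed $\delta$ was already present as a link or not; $(\triangle_{\equiv_\alpha})$ from the $(\exists')$ rule; the $(\equiv_\beta)$ rule changes nothing on the level of the occurrence net/picture. This produces a linguistic derivation of some $\vdash\Gamma$, and since linguistic derivations use only \textbf{MLL1} rules together with $(\exists')$ --- which by the Proposition preceding Definition of linguistic derivations preserves \textbf{MLL1}-derivability --- we get $\vdash_{\bf MLL1}\Gamma$. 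Finally $\widetilde\Sigma=||\vdash\Gamma||_{\eta\to}$ and $\Sigma\equiv_{\alpha\beta\eta}\widetilde\Sigma$, so $\Sigma$ is $\alpha\beta\eta$-equivalent to the translation of an \textbf{MLL1}-derivable linguistically well-formed sequent, and also to its $\eta$-reduced translation by Proposition~\ref{equivalence of translations}.

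\textbf{Main obstacle.} The delicate point is the exact match between the two cases of the $(\exists)$ rule on occurrence nets (Definition of $\sigma(\pi)$: link-erasing versus link-gluing) and the two \textbf{ETTC} phenomena ($(\nabla)$ absorbing a $\delta$ that corresponds to an existing link, versus $(\triangle)$ performing a genuine gluing of two distinct edges). Getting this correspondence airtight requires careful attention to \emph{when} the relevant indices lie in the scope of quantifiers/operators and therefore to the renamings forced by Propositions~\ref{MLL1 alpha-equiv derivations}, \ref{exists'} on the logic side and Proposition~\ref{alpha-equivalent derivations} on the \textbf{ETTC} side; these are exactly the steps where the proofs of Proposition~\ref{exists'} and Lemma~\ref{linguistically well-formed fragment} already had to do the most work, and the induction here inherits that bookkeeping. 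The $\otimes$/$\parr$ and identity cases, by contrast, are essentially verbatim as in propositional \textbf{MLL} and should be routine.
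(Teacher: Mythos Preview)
Your overall architecture is the paper's own: induct on a linguistic derivation to get $\vdash_{{\bf ETTC}_{\eta\to}}||\vdash\Gamma||_{\eta\to}$, then lift to {\bf ETTC} and to the $\eta$-reduced translation via Propositions~\ref{ETTC is a fragment} and~\ref{equivalence of translations}; conversely, push an {\bf ETTC}-derivable $\Sigma$ to its $\beta$-reduced $\eta$-long form via Lemma~\ref{conservativity of eta-long fragment} and induct on its ${\bf ETTC}_{\eta\to}$ derivation. That is exactly what the paper does.

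However, your case analysis for the quantifier rules is tangled. In the forward direction you write that the $(\exists)$ rule applied to a linguistically well-formed premise ``maps to a $(\nabla)$-like step''. It does not: $\exists$ always translates to $\triangle$, never to $\nabla$. The point of the paragraph preceding the Lemma is that when $(\exists)$ applies to a linguistically well-formed premise the two occurrences being bound are already linked, so the $(\triangle_{\equiv_\alpha})$ rule introduces a factor $\delta^{j}_{i}$ adjacent to an existing $\delta^{i}_{j}$, and $(\equiv_\beta)$ collapses the product to the original term; the \emph{effect} on the graph is link-erasure, but the rule used is $(\triangle)$, not $(\nabla)$. In the backward direction you compound the confusion: you propose that $(\nabla_{\equiv_\alpha})$ comes from ``either the $(\forall)$ rule or the link-erasing case of $(\exists)$''. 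That is impossible: the $(\nabla_{\equiv_\alpha})$ rule introduces a $\nabla$-formula, whose only $\textbf{MLL1}$ preimage is a $\forall$-formula. The case split belongs entirely on the $(\triangle_{\equiv_\alpha})$ side: it corresponds to the $(\exists)$ rule when the bound pair was already a single link (the $\delta^{j}_{i}\delta^{i}_{j}\equiv_\beta 1$ situation), and to the $(\exists')$ rule otherwise.

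A smaller point: in the backward direction you must argue, as part of the induction hypothesis, that the term in the ${\bf ETTC}_{\eta\to}$ sequent is a product of Kronecker deltas of the exact shape $\pi(\Gamma')$ for some linguistically well-formed $\Gamma'$; otherwise the sequent need not be a translation at all. This is easy (the ${\bf ETTC}_{\eta\to}$ rules over the empty terminal alphabet can only produce delta-products, and the matching of upper/lower free indices forces the $\pi(\cdot)$ pattern), but it should be stated, since without it the inductive claim ``is the translation of\ldots'' is not well-posed.
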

\begin{proof}
Given an {\bf MLL1} derivable linguistically well-formed sequent $\vdash\Gamma$, we prove by induction on linguistic derivation of $\vdash\Gamma$ that the tensor sequent $||\vdash\Gamma||_{\eta\to}$  is derivable in ${\bf ETTC}_{\eta\to}$, hence in {\bf ETTC}. Then Proposition~\ref{equivalence of translations} implies that $||\vdash\Gamma||$ is {\bf ETTC} derivable as well since $\alpha\eta$-equivalent sequents are {\bf ETTC} derivable from each other.

Given an {\bf ETTC} derivable tensor sequent $\vdash\Sigma$, we consider its $\beta$-reduced $\eta$-long form $\vdash\Sigma'$, which is derivable in ${\bf ETTC}_{\eta\to}$ by Lemma~\ref{conservativity of eta-long fragment}. By induction on ${\bf ETTC}_{\eta\to}$ derivation of $\vdash\Sigma'$ we prove that $\vdash\Sigma'$ is the translation of an ${\bf MLL1}$ derivable linguistically well-formed sequent.
 \end{proof}

  Restricting to  the intuitionistic fragments we obtain an analogous correspondence.
 Note that the $\eta$-reduced fragment of {\bf ETTC} and the linguistically well-formed fragment of {\bf MLL1} become literally the same thing, only written in different  notation (up to  bound indices/variables renaming).

\pagebreak
\subsection{Translating grammars}
\begin{defi} Given a
well-formed {\bf MILL1} grammar $G=(\mathit{Lex},s)$, its {\it tensor translation} is the tensor grammar $\widetilde G=(\widetilde{\mathit{Lex}},s)$,
where $\widetilde{\mathit{Lex}}=\{[w]_{r}^{l}:||A||~|(w,A)\in\mathit{Lex}\}$.
\end{defi}
\begin{lem}
For any well-formed {\bf MILL1} grammar $G$, its {tensor translation}  $\widetilde G$
 generates the same language: $L(G)=L(\widetilde G)$.
 \end{lem}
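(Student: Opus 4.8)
The plan is to prove the two language inclusions $L(G)\subseteq L(\widetilde G)$ and $L(\widetilde G)\subseteq L(G)$ by relating the membership conditions in Definitions of $L(G)$ (equation~(\ref{MILL1_language})) and $L(\widetilde G)$ (equation~(\ref{tensor grammar language})). Fix a word $w$. By definition, $w\in L(G)$ iff there are lexical entries $(w_1,A_1),\ldots,(w_n,A_n)\in\mathit{Lex}$ with $w=w_1\cdots w_n$ and $A_1[c_0;c_1],\ldots,A_n[c_{n-1};c_n]\vdash_{\bf MILL1} s(c_0,c_n)$. Reading this {\bf MILL1} two-sided sequent as an {\bf MLL1} one-sided sequent $\vdash s(c_0,c_n),\overline{A_n[c_{n-1};c_n]},\ldots,\overline{A_1[c_0;c_1]}$, this sequent is linguistically well-formed because $G$ is a well-formed grammar (each $A_i$ is linguistically well-formed with $l,r$ of the right polarity, and the chaining of the $c$'s matches left and right occurrences exactly once). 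So Lemma~\ref{FO2ETTC} applies.

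First I would apply Lemma~\ref{FO2ETTC} to this sequent: it is {\bf MLL1} derivable iff its $\eta$-reduced translation is {\bf ETTC} derivable. I then need to identify that $\eta$-reduced translation. Under the $\eta$-reduced translation $||\cdot||$, first-order variables become indices, left occurrences become upper indices and right occurrences become lower indices; the atomic predicate $s(c_0,c_n)$ with $v(s)=(1,1)$ translates to $s^{c_0}_{c_n}$, and each $\overline{A_i[c_{i-1};c_i]}$ translates to $||\,\overline{A_i[c_{i-1};c_i]}\,||$. Since $\overline{A[x;y]}$ corresponds under the duality conventions (Figures~\ref{MLL1 formulas},~\ref{ETTC_lang}) to $\overline{\,||A[x;y]||\,}=\overline{\,||A[l;r]||^{[\nicefrac{x}{l}]}_{[\nicefrac{y}{r}]}\,}$, and writing $||A_i||=||A_i[l;r]||$ so that $||A_i||^{l}_{r}$ is the tensor type of the lexical entry $[w_i]^l_r:||A_i||\in\widetilde{\mathit{Lex}}$, the translated sequent has the shape $\vdash s^{c_0}_{c_n},\overline{||A_n||}^{\;\ldots}_{\;\ldots},\ldots,\overline{||A_1||}^{\;\ldots}_{\;\ldots}$ with the $c$-indices linking consecutive factors. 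This is, up to $\alpha$-equivalence and the bookkeeping of the $c$'s, exactly the sequent-calculus translation (Proposition~\ref{translations}) of a standard n.d.\ sequent of the form $X_{(1)}:||A_1||,\ldots,X_{(n)}:||A_n||\vdash t'X_{(1)}\cdots X_{(n)}:s$ where $t'$ is a product of Kronecker deltas encoding the identification $c_0=l$ on the $s$-side, $c_i$ shared between the $i$-th and $(i{+}1)$-st entry, $c_n=r$ on the $s$-side — in other words $t'$ is precisely the Kronecker-delta term making $t'\cdot[w_1]^{l}_{r}\cdots[w_n]^{l}_{r}\equiv_\beta[w_1\cdots w_n]^{l}_{r}=[w]^l_r$ after all the index matchings (the concatenation $[u]^j_i\cdot[v]^k_j\to_\beta[uv]^k_i$ does exactly the chaining).

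Next I would invoke Proposition~\ref{deduction theorem for grammars} (the deduction theorem for grammars): $\vdash_{\widetilde G}[w]^i_j:s^j_i$ holds iff there are lexical entries $\tau_{(1)}:B_{(1)},\ldots,\tau_{(n)}:B_{(n)}\in\widetilde{\mathit{Lex}}_{\equiv_\alpha}$ with pairwise disjoint free indices and a product of Kronecker deltas $t'$ such that $\vdash_{\bf ETTC}t'::s^j_i,\overline{B_{(n)}},\ldots,\overline{B_{(1)}}$ and $t'\cdot\tau_{(1)}\cdots\tau_{(n)}\equiv_\beta[w]^i_j$. Comparing: the entries of $\widetilde{\mathit{Lex}}$ are exactly $[w_i]^l_r:||A_i||$, so choosing $\tau_{(i)}=[w_i]^{l_i}_{r_i}$ (suitable $\alpha$-variants with disjoint indices) and $B_{(i)}=||A_i||$ (the corresponding $\alpha$-variant), the {\bf ETTC}-derivability of $t'::s^j_i,\overline{||A_n||},\ldots,\overline{||A_1||}$ is — after renaming bound indices, using Proposition~\ref{derivability closed under equivalence} — equivalent to {\bf ETTC}-derivability of the $\eta$-reduced translation of $\vdash s(c_0,c_n),\overline{A_n[\ldots]},\ldots,\overline{A_1[\ldots]}$ (the $c$'s being renamings of the $l_i,r_i$), with the term $t'$ matching the one above, and the condition $t'\cdot[w_1]\cdots[w_n]\equiv_\beta[w]^i_j$ holds precisely when $w=w_1\cdots w_n$. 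Chaining all these equivalences gives $w\in L(G)\iff w\in L(\widetilde G)$ for both directions simultaneously, hence $L(G)=L(\widetilde G)$.

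The main obstacle I expect is the bookkeeping in the middle step: carefully matching the Kronecker-delta term $\pi(\Gamma)=\prod_{x}\delta^{x^r}_{x_l}$ (or its $\eta$-reduced incarnation) produced by the translation of Figure~\ref{eta-long translation}/\ref{eta-reduced translation} against the term $t'$ demanded by Proposition~\ref{deduction theorem for grammars}, and verifying that the $\beta$-reduction $t'\cdot[w_1]^{l_1}_{r_1}\cdots[w_n]^{l_n}_{r_n}$ does yield $[w_1\cdots w_n]^l_r$ — i.e.\ that the chain of identifications $l=c_0,\,r_1=c_1=l_2,\,\ldots,\,r_n=c_n=r$ collapses under the concatenation rule $[u]^j_i[v]^k_j\to_\beta[uv]^k_i$ to a single elementary term with the right word. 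This is a routine but notation-heavy verification; all of it is bounded by the $\beta$-equivalence relations of Definition~\ref{beta-red} and the $\alpha\beta\eta$-closure statements (Propositions~\ref{derivability closed under equivalence},~\ref{n.d.  derivability closed under equivalence}), so no new idea is needed beyond assembling Lemma~\ref{FO2ETTC}, Proposition~\ref{translations}, Corollary~\ref{ND=sequent calculus} and Proposition~\ref{deduction theorem for grammars}.
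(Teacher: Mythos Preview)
Your proposal is correct and follows essentially the same route as the paper: both arguments pivot on Proposition~\ref{deduction theorem for grammars} (to unpack $\vdash_{\widetilde G}[w]^i_j:s^j_i$ into an {\bf ETTC} derivable sequent with a Kronecker-delta term) and Lemma~\ref{FO2ETTC} (to pass between {\bf ETTC} derivability and {\bf MLL1} derivability of the linguistically well-formed sequent). The paper presents the $L(\widetilde G)\subseteq L(G)$ direction explicitly and declares the converse ``similar'', while you frame both directions as a single chain of equivalences; the content is the same.

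One remark on your phrasing: the claim that ``$t'\cdot[w_1]\cdots[w_n]\equiv_\beta[w]^i_j$ holds precisely when $w=w_1\cdots w_n$'' is not literally an if-and-only-if, since for an arbitrary Kronecker-delta term $t'$ the $\beta$-reduct could be any permutation-concatenation of the $w_k$. What actually happens (and what the paper uses with its ``Obviously $w$ is the concatenation\ldots'') is that the $\beta$-equivalence forces $t'$ to chain the factors linearly in \emph{some} order, and one then relabels the entries so that this order is $1,\ldots,n$; only after that relabelling does $t'$ coincide with the specific chain $\delta^{j_0}_l\delta^{j_1}_{i_0}\cdots\delta^r_{i_{n-1}}$ that matches the $\eta$-long translation. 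You already flag this as the bookkeeping obstacle, so just make the reordering step explicit when you write it out.
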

 \begin{proof}
Let $\mathit{Lex}$ be the lexicon of $G$.

Let $w\in L(\widetilde G)$, so that $\vdash_G [w]^r_l:s^l_r$. Obviously $w$ is the concatenation
\be\label{concatenation}
w=w_0\ldots w_{n-1}
\ee
 of all words occurring in lexical entries
used in the derivation of the latter sequent.

By Proposition~\ref{deduction theorem for grammars} there are lexical entries
\[
[w_\mu]^{i_\mu}_{j_\mu}:(a_{(\mu)})_{i_\mu}^{j_\mu}\in \widetilde{\mathit{Lex}}_{\equiv_\alpha},
\]
 where
\[
(a_{(\mu)})_i^j=||A_\mu||_{[\nicefrac{i}{l}]}^{[\nicefrac{j}{r}]},~(w_\mu,A_\mu)\in \mathit{Lex},~
 \mu=0,\ldots,n-1,
 \]
  satisfying (\ref{orthogonality of axioms}) and a term $t'$, which is a product of Kronecker deltas, such that
\[
\vdash_{\bf ETTC}t':: s^{l}_{r},\overline{(a_{(n-1)})}_{j_{n-1}}^{i_{n-1}},\ldots,\overline{(a_{(0)})}_{j_0}^{i_0}
\mbox{ and }t'[w_0]^{i_0}_{j_0}\ldots [w_{n-1}]^{i_{n-1}}_{j_{n-1}}\equiv_\beta [w]^r_l.
\]
 It follows from (\ref{concatenation}) that  $t'\equiv_\beta\delta_l^{j_0}\delta_{i_0}^{j_1}\cdots\delta_{i_{n-2}}^{j_{n-1}}\delta_{i_{n-1}}^r$.
and we have
 \[
\vdash_{\bf ETTC}\delta_l^{j_0}\delta_{i_0}^{j_1}\cdots\delta_{i_{n-2}}^{j_{n-1}}\delta_{i_{n-1}}^r:: s^{l}_{r},\overline{(a_{(n-1)})}_{j_{n-1}}^{i_{n-1}},\ldots,\overline{(a_{(0)})}_{j_0}^{i_0}.
\]
The above is just ($\alpha$-equivalent to) the $\eta$-long translation of
$$\vdash s[l;r],\overline{A_{n-1}}[r;c_{n-1}],\overline{A_{n-2}}[c_{n-1};c_{n-2}],\ldots, \overline{A_{(0)}}[c_0;l].$$
 The latter is identified as the image of a {\bf MILL1} sequent expressing, by definition, that $w\in L(G)$.
  The opposite inclusion is similar. \end{proof}

  We note that, compared with tensor grammars of this paper, (provisional) Definition \ref{first order grammar} of {\bf MILL1} grammars is too restrictive. It allows only lexical entries corresponding to single words. Tensor grammars, on the contrary, can have lexical entries corresponding to word tuples.

  In particular, we do not have an inverse translation from tensor grammars to {\bf MILL1} grammars. However, this does not mean that {\bf ETTC} is more expressive than {\bf MILL1}. In fact, Lemma~\ref{FO2ETTC} shows that {\bf ETTC} is {\it strictly equivalent} to the linguistic fragment of {\bf MLL1}, and the intuitionistic fragment of {\bf ETTC} equivalent to the linguistic fragment of {\bf MILL1}. We discussed in Section~\ref{first order grammar section} that the simple definition of {\bf MILL1} grammars should be generalized to allow more complex lexical entries. Tensor grammars of this paper can be considered as such a generalization, only expressed in a different syntax. The advantage of the {\bf ETTC} syntax, arguably, is that {\bf ETTC} contains the term component for explicit manipulations with strings. Using this component we managed to give a concise definition for tensor grammars. Translating this definition back to {\bf MILL1} might be cumbersome.

\section{Conclusion}
We defined and studied the system of {\it extended tensor type calculus} ({\bf ETTC}),  a non-trivial notationally enriched variation of the system previously introduced in \cite{Slavnov_tensor}. We presented a cut-free sequent calculus formalisation and a natural deduction version of the system. We also discussed geometric representation of {\bf ETTC} derivations. We showed that the system of {\bf ETTC} is strictly equivalent to the fragment of first order linear logic relevant for language modeling, that is, to representing categorial grammars, used in \cite{MootPiazza}, \cite{Moot_extended}, \cite{Moot_comparing}, \cite{Moot_inadequacy}. In this way we provided the fragment in question with an alternative syntax,  intuitive geometric representation and an {\it intrinsic} deductive system, which has been absent.

We leave for future research open questions such as: what are proof-nets for {\bf ETTC}, is there a complete semantics, what is the complexity of the calculus? We also think that the system can be further enriched/modified to go beyond first order linear logic. This is the subject of  on-going work.

\bibliographystyle{alphaurl}
\bibliography{FO_generating_bibliography}

 \end{document}